\newcommand{\rrPool}{ResponseRank-Pool} %
\newcommand{\rrPerm}{ResponseRank-Perm} %
\newcommand{\rr}{ResponseRank} %
\newcommand{\rtregression}{RtRegression} %
\newcommand{\rtregressionPerm}{RtRegression-Perm} %
\newcommand{\mintro}[1]{\textbf{\emph{#1}}}
\title{ResponseRank: Data-Efficient Reward Modeling \\ through Preference Strength Learning}
\author{%
  Timo Kaufmann \\
  LMU Munich, MCML \\
  \texttt{timo.kaufmann@ifi.lmu.de}
  \And{}
  Yannick Metz \\
  University of Konstanz\\
  \texttt{yannick.metz@uni-konstanz.de}
  \AND{}
  Daniel Keim \\
  University of Konstanz \\
  \makebox[\widthof{\texttt{timo.kaufmann@ifi.lmu.de}}][c]{\texttt{keim@uni-konstanz.de}}
  \And{}
  Eyke Hüllermeier \\
  LMU Munich, MCML, DFKI \\
  \makebox[\widthof{\texttt{yannick.metz@uni-konstanz.de}}][c]{\texttt{eyke@lmu.de}}
}
\newtheorem{theorem}{Theorem}
\newtheorem{proposition}{Proposition}
\newtheorem{corollary}{Corollary}
\newtheorem{definition}{Definition}
\definecolor{linkblue}{rgb}{0,0.08,0.45} %
\definecolor{warningred}{rgb}{0.45,0.08,0}
\newcommand{\citetneeded}[1]{\iftoggle{final}{\PackageError{custom}{citetneeded not supported in final version}{}}{\textcolor{warningred}{Missing et al., 2999}}}
\newcommand{\Citetneeded}[1]{\iftoggle{final}{\PackageError{custom}{Citetneeded not supported in final version}{}}{\textcolor{warningred}{Missing et al., 2999}}}
\newcommand{\citepneeded}[1]{\iftoggle{final}{\PackageError{custom}{citepneeded not supported in final version}{}}{\textcolor{warningred}{(Missing et al., 2999)}}}
\newcommand{\PDC}{\rho_{\mathrm{PDC}}}
\DeclarePairedDelimiter{\abs}{\lvert}{\rvert}
\newcommand{\backlog}[2][]{}
\newcommand{\boxfine}[1]{%
  \begingroup
  \hbadness=10000 %
  \vbadness=10000 %
  \hfuzz=\maxdimen %
  #1%
  \endgroup
}
\definecolor{cmarkcolor}{RGB}{0, 128, 0}
\definecolor{xmarkcolor}{RGB}{200, 0, 0}
\newcommand{\numTrials}{100}
\newcommand{\batchingMultiprefMeanP}{<0.001}
\newcommand{\batchingMultiprefMeanDiff}{+0.9}
\newcommand{\batchingRewardBenchMeanP}{= 0.536}
\newcommand{\batchingRewardBenchMeanDiff}{+0.4}
\newcommand{\statedRewardBenchMeanP}{= 0.414}
\newcommand{\statedRewardBenchMeanDiff}{+0.6}
\newcommand{\agreeRewardBenchMeanDiff}{+3.0}
\newcommand{\meanPrefSixVsFullMultiprefMeanP}{= 0.369}
\newcommand{\meanPrefSixVsFullMultiprefMeanDiff}{-0.1}
\newcommand{\meanPrefSixVsFullRewardBenchMeanP}{= 0.519}
\newcommand{\meanPrefSixVsFullRewardBenchMeanDiff}{+0.3}
\newcommand{\btAgreeMultiprefMeanP}{= 0.179}
\newcommand{\btAgreeMultiprefMeanDiff}{+0.2}
\newcommand{\btAgreeRewardBenchMeanP}{= 0.314}
\newcommand{\btAgreeRewardBenchMeanDiff}{+0.5}
\newcommand{\btMeanPrefMultiprefMeanP}{= 0.057}
\newcommand{\btMeanPrefMultiprefMeanDiff}{+0.3}
\newcommand{\btMeanPrefRewardBenchMeanP}{= 0.694}
\newcommand{\btMeanPrefRewardBenchMeanDiff}{-0.4}
\newcommand{\sizeAblationSizeTwoFragSize}{2.00}
\newcommand{\sizeAblationSizeThreeFragSize}{2.67}
\newcommand{\sizeAblationSizeFourFragSize}{4.00}
\newcommand{\sizeAblationFullFragSize}{4.58}
\newcommand{\statedSingletonFraction}{33.1}
\newcommand{\rtSingletonFraction}{2.0}
\newcommand{\meanPrefSizeTwoFragSize}{2.00}
\newcommand{\meanPrefSizeFourFragSize}{4.00}
\newcommand{\meanPrefSizeSixFragSize}{4.80}
\newcommand{\meanPrefFullFragSize}{7.03}
\newcommand{\singletonFracGlobal}{0.0001}
\newcommand{\singletonFracNoBuckets}{0.02}
\newcommand{\singletonFracBucketsTwo}{0.04}
\newcommand{\singletonFracBucketsFour}{0.08}
\newcommand{\singletonFracBucketsEight}{0.14}
\newcommand{\singletonFracBucketsSixteen}{0.26}
\newcommand{\singletonFracBucketsThirtyTwo}{0.42}
\newcommand{\rtLengthWithinAnnotatorMedian}{+0.47}
\newcommand{\rtConsensusGlobalCorr}{-0.05}
\newcommand{\rtConsensusWithinAnnotatorMedian}{-0.08}
\newcommand{\rtConsensusLengthStratMedian}{-0.05}
\newcommand{\lengthConsensusWithinAnnotatorMedian}{-0.07}
\begin{document}

\maketitle

\begin{abstract}
Binary choices, as often used for reinforcement learning from human feedback (RLHF), convey only the \emph{direction} of a preference.
A person may choose apples over oranges and bananas over grapes, but \emph{which preference is stronger}?
Strength is crucial for decision-making under uncertainty and generalization of preference models, but hard to measure reliably.
Metadata such as response times and inter-annotator agreement can serve as proxies for strength, but are often noisy and confounded.
We propose \rr{} to address the challenge of learning from noisy strength signals.
Our method uses relative differences in proxy signals to \emph{rank responses to pairwise comparisons by their inferred preference strength}.
To control for systemic variation, we compare signals only locally within carefully constructed strata.
This enables robust learning of utility differences consistent with strength-derived rankings while making minimal assumptions about the strength signal.
Our contributions are threefold:
(1) \rr{}, a novel method that robustly learns preference strength by leveraging locally valid relative strength signals;
(2) empirical evidence of improved sample efficiency and robustness across diverse tasks: synthetic preference learning (with simulated response times), language modeling (with annotator agreement), and RL control tasks (with simulated episode returns); and
(3) the \emph{Pearson Distance Correlation (PDC)}, a novel metric that isolates cardinal utility learning from ordinal accuracy.
\end{abstract}

\section{Introduction}%
\label{sec:introduction}

Consider an AI system choosing between a risky action (usually good, occasionally catastrophic) and a safe one (consistently mediocre).
Optimal decision-making in such uncertain scenarios requires knowing not just \emph{which} outcome is preferred, but \emph{by how much} -- the preference strength.
A slight preference for the good outcome over the mediocre one might favor safety, while a strong preference could justify the risk.
Knowing preference strength is essential for optimal decision-making when outcomes are uncertain~\citep{rubinstein2006lecture}.
However, standard reinforcement learning from human feedback (RLHF) typically only collects binary preferences, identifying the preference \emph{order} but not this crucial strength information -- the \emph{distance} between utilities.

\begin{figure}
    \includegraphics[width=\textwidth]{./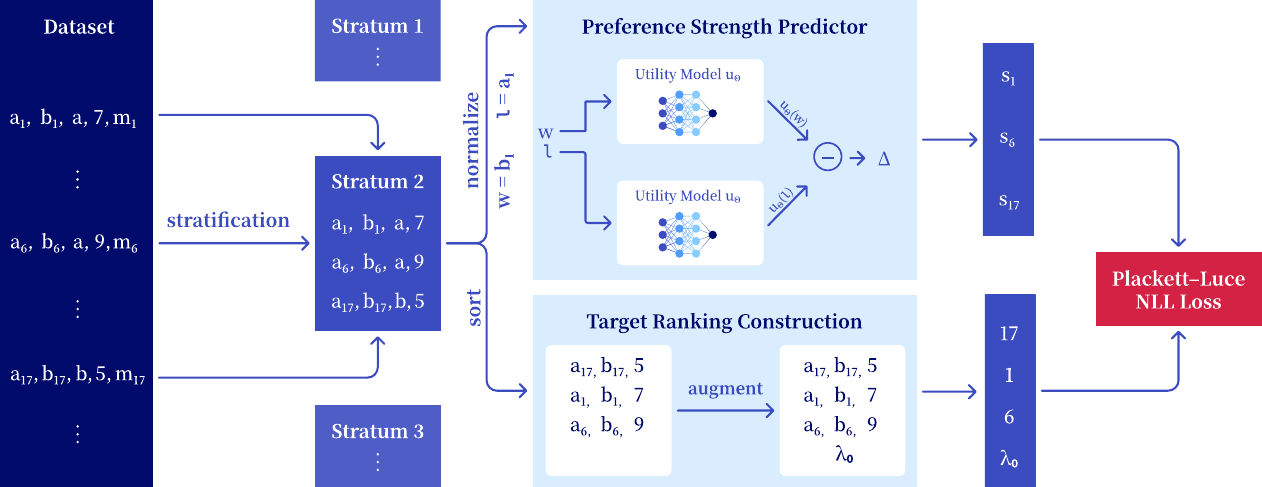}
    \caption{%
        \textbf{\rr{} learns strength-aware preferences from rankings over comparisons} derived from implicit strength signals.
        We start from a dataset of comparisons, consisting of a pair of objects ($a_i, b_i$), a preference ($p_i \in \{a, b\}$), a strength signal ($\tau_i$, here scalar response times), and metadata ($m_i$).
        (1)~Comparisons are \emph{stratified} using metadata $m_i$ (e.g., by annotator ID), ensuring local validity of the signal-strength relationship. %
        The metadata can be discarded after this step.
        (2)~Within each stratum, instances $(a_i, b_i, p_i, \tau_i)$ are \emph{normalized} to $(w_i, l_i, \tau_i)$ format based on the preference label $p_i$, where $w_i$ is the preferred item and $l_i$ is the dispreferred item.
        (3)~\emph{Rank construction} then sorts these normalized tuples (e.g., by ascending response time $\tau_i$) and appends a virtual anchor element with a fixed score of 0 to create target rankings.
        (4)~A preference predictor models these target rankings by predicting \emph{signed utility differences} $s_\theta(w_i,l_i) = u_\theta(w_i) - u_\theta(l_i)$.
        (5)~The $s_\theta$ values (and the anchor's 0) act as latent strengths for a Plackett--Luce model;
        NLL loss minimization yields parameters $\theta$ of the utility function (or \emph{reward model}) $u_\theta$.
    }\label{fig:responserank_method}
\end{figure}

In addition to the benefits for decision making under uncertainty, learning preference strength can significantly improve reward model generalization~\citep{
    li2024enhancing,%
    clithero2018improving%
},
providing benefits even in deterministic tasks where the optimal policy only depends on the order of outcome returns.
When assuming that preferences adhere to a latent utility function,
utility differences provide additional structural information that aids learning this utility~\citep{
    xu2025strong%
}.
For example, if $A$ is strongly preferred to $C$ but only weakly preferred to $B$, we can infer $B \succ C$ without observing that comparison directly.
Moreover, strength signals can provide an implicit regularization effect, emphasizing confident preferences and downweighting noisy ones.

Lacking an explicit strength signal, standard RLHF has to rely entirely on the model's generalization between comparisons to learn preference strength. %
Collecting such explicit signals requires additional annotation effort \citep{%
wilde2021learning%
}, however, and is susceptible to metacognitive challenges~\citep{fleming2014how}.
As an alternative to explicit collection, response metadata can often convey strength information.
A motivating example of such implicit strength signals is \emph{response time} (RT), the time taken by annotators to provide their feedback.
Faster responses often indicate stronger preferences~\citep{
    milosavljevic2010drift,%
    moffatt2005stochastic%
}.
While promising, signals such as response times can be difficult to learn from
as they are noisy and confounded by factors such as individual annotator speed and task complexity~\citep{
    moffatt2005stochastic,%
    goecke2021binding%
}.
Consequently, the inverse RT-strength relationship is most reliably observed \emph{locally} within homogeneous contexts, motivating methods that can robustly exploit such signals.

\textbf{Contributions.}
(1) We introduce \rr{} (\cref{fig:responserank_method}), a novel RLHF method that extracts preference strength from any locally-valid relative strength signal by leveraging the \emph{relative order} of strength within homogeneous strata (e.g., per-annotator).
This local approach can mitigate systemic noise through stratification and provides a proxy for preference strength designed to improve reward model generalization and decision-making under uncertainty. %
(2) We \emph{empirically demonstrate} \rr{}'s improved accuracy and generalization in multiple domains. %
(3) We introduce the \emph{Pearson Distance Correlation} (PDC), a novel metric designed to quantify how well a model captures preference strength.

\section{Learning preference strength from local rankings}%
\label{sec:method}

We introduce \rr{}, a novel method for learning strength-aware preference models from rankings of comparisons.
Our method builds on the principle that auxiliary signals can indicate preference strength but are often noisy and confounded.
We learn utility differences by leveraging the relative order of these signals within homogeneous partitions (strata).

\subsection{Problem setting and utility framework}\label{sec:problem_setting_and_utility}

We aim to learn a reward model from human feedback, incorporating strength signals to improve strength modeling and generalization.
The input consists of a dataset $\mathcal D = {\{(a_i, b_i, p_i, \tau_i, m_i)\}}_{i=1}^N$.
Each data point $i$ consists of
two items $a_i, b_i \in \mathcal X$ being compared,
a preference label $p_i$ indicating the preferred item, %
the strength signal $\tau_i$ (e.g., response time), and
optional metadata $m_i$ (e.g., annotator ID, session timestamp).

Our approach is grounded in utility learning, a preference learning approach seeking to learn a \emph{utility function} $u: \mathcal X \to \mathbb R$ that assigns a scalar value to each item, such that $x \succeq y \iff u(x) \ge u(y)$.\footnote{We use the term utility function; in RLHF literature, this is commonly called the \emph{reward function}.}
The magnitude of the utility difference, $u(x) - u(y)$, quantitatively represents the \emph{preference strength}.
In reinforcement learning from human feedback (RLHF), such learned utility functions often serve as reward models guiding policy optimization~\citep{christiano2017deep,kaufmann2025survey}.
The \rr{} method, detailed next, learns a utility function $u_\theta$ by leveraging both the choice $p_i$ and the strength signal $\tau_i$ to infer not just which item is preferred, but by how much.

\subsection{The \rr{} method}%
\label{sec:method:rr}

\rr{} starts by constructing strength-ranked comparisons within homogeneous strata (step 1 in \cref{fig:responserank_method}).
It then learns preferences and strength jointly from these rankings using a joint ranking loss (steps 2--5 in \cref{fig:responserank_method}).
We detail this process in the following subsections and discuss \rr{}'s relation to the conventional choice-only Bradley--Terry approach.

\subsubsection{Constructing strength-ranked comparisons within homogeneous strata}%
\label{sec:method:rr:target}

\textbf{Stratification.}
A key challenge is that strength signals like response times are often confounded globally but reliable locally.
Consider response times across different annotators:
baseline speed differences make global comparisons misleading, but stratifying by annotator isolates the within-annotator strength relationship.
More generally, any factor believed to systematically influence the strength signal can serve as a stratification criterion, enabling flexible encoding of domain knowledge.
Examples include task attributes such as text complexity clusters, labeling sessions, or other contextual features.

Within each stratum, the strength-signal relationship is more homogeneous.
Rather than explicitly modeling the signal-generation process, we exploit this local validity by leveraging \emph{ordinal} (relative) information \emph{within} each stratum.
This ordinal-only approach is inherently robust to outliers and signal magnitude variations, allowing any locally valid strength signal to serve as a proxy for preference strength -- \emph{only a monotonic relationship within each stratum is needed}.

\textbf{Partitioning and batch packing.}
Once strata are formed, \rr{} optionally applies additional partitioning within each stratum,\footnote{%
Globally valid signals may even \emph{only} require partitioning and no data-driven stratification.
} driven by practical constraints of the Plackett--Luce training objective:
(1)~Plackett--Luce requires rankings to be co-located within a training batch, necessitating careful packing and potential splitting of rankings into batches;
(2)~Strength signals may contain ties; partitioning can ensure tie-free rankings for cleaner training signals.
(3)~Smaller rankings sometimes improve robustness to noisy signals.

We detail algorithms for batch packing as well as size-constrained and tie-avoidant partitioning in \cref{app:ranking_construction}.
Once stratification and partitioning are complete, the final step is sorting comparisons within each partition by the strength signal, yielding the target rankings used in the learning step.

\subsubsection{Learning preferences and strength jointly from comparison rankings}%
\label{sec:method:rr:learning}

To learn from the strength rankings constructed in the previous section, we need a model that captures both preference \emph{direction} and \emph{strength}.
We adopt Plackett--Luce \citep{plackett1975analysis,luce1959individual}, a probabilistic ranking model that connects latent scores (strengths here) to observed rankings.\footnote{%
Learning from such rankings of comparisons, a form of learning from \emph{second-order preferences}, provides a strong signal for utility differences.
Under reasonable assumptions, this is sufficient to identify the utility function up to a linear transformation~\citep{suppes1955axiomatization}.
A brief theoretical discussion can be found in \cref{app:background:second_order}.
}

The PL model defines the probability distribution of a ranking
$\pi = (q_{1}, q_{2}, \dots, q_{k})$ of size $k$ as the product of sequentially choosing each comparison.
The probability of $q_{j}$ being chosen next from the set of not-yet-ranked comparisons $\mathcal{C}^j$ is
\(
    P_{\mathrm{PL}}(q_{j} \text{ from } \mathcal{C}^j \mid \theta) = \exp(s_\theta(q_{j})) / \big( \sum_{q \in \mathcal{C}^j} \exp(s_\theta(q)) \big)
\).

In our case, the latent utility values to be learned are preference strengths (by construction of the ranking).
To capture \emph{direction} alongside strength, we require the learned strengths $s_\theta(q)$ to be \emph{signed}, with the sign indicating the preference direction.
We achieve this by introducing a \emph{virtual anchor element} with fixed score $0$ and add it to each ranking.
We normalize all comparisons to $q' = (w_i, l_i)$ format (preferred item first) and place the anchor at the end, resulting in all-positive target values.
The Plackett--Luce log likelihood loss then naturally penalizes negative predictions, as they would make the target ranking, in which each comparison $q'$ is ranked above the 0-valued anchor by construction, less likely.
Our goal is then to learn utility differences $s_\theta(w_i, l_i) = u_\theta(w_i) - u_\theta(l_i)$ that are positive (to match the normalized format) and ordinally consistent with the strength rankings.

\Cref{tab:second-order-example} illustrates this process with a concrete example, showing how comparisons from a single partition are normalized, ranked by strength, and augmented with the virtual anchor.
The rightmost column shows illustrative predicted $s_\theta(q')$ values that are ordinally consistent with the ranking.

\begin{table}
    \caption{%
        \textbf{Target ranking of a single stratum} using illustrative comparisons with response times as the strength signal.
        Original comparisons ($q_i$) are first \emph{normalized to (winner, loser) format} ($q'_i$), then sorted by \emph{ascending response time} $\tau_i$ (faster RTs indicate stronger preferences and are ranked higher).
        A \emph{virtual anchor element} (latent score 0) is appended to this list for the Plackett--Luce model.
        The model learns positive utility differences $s_\theta(q') = u_\theta(\text{winner}) - u_\theta(\text{loser})$ ordinally consistent with this ranking (illustrative latent $s(q')$ shown).
    }\label{tab:second-order-example}
    \centering
        \begin{tabular}{l c c r r}
            \toprule
            \textbf{Comparison ($q_i$)} & \textbf{Preference ($p_i$)} & \textbf{Normalized ($q'_i$)} & \textbf{RT ($\tau_i$)} & \textbf{Exemplary $s(q')$}\\
            \midrule
            $(a_1, b_1)$ & $a_1 \succ b_1$ & $(a_1, b_1)$ & $1.2$s & $1.7$ \\
            $(a_2, b_2)$ & $a_2 \succ b_2$ & $(a_2, b_2)$ & $1.7$s & $1.0$ \\
            $(a_3, b_3)$ & $b_3 \succ a_3$ & $(b_3, a_3)$ & $1.7$s & $0.9$ \\
            $(a_4, b_4)$ & $b_4 \succ a_4$ & $(b_4, a_4)$ & $2.3$s & $0.5$ \\
            \midrule
            \multicolumn{3}{l}{\textit{Virtual Anchor Element}} & \textit{N/A} & $\textit{0 (fixed)}$ \\
            \bottomrule
        \end{tabular}
\end{table}

We optimize $\theta$ via maximum likelihood estimation, maximizing the joint probability of the observed rankings under the PL model.
Concretely,
we model the utility function $u$ as a neural network $u_\theta$ that maps items to scalar utilities (or rewards in RLHF), giving rise to utility differences $s_\theta(a_i, b_i) = u_\theta(a_i) - u_\theta(b_i)$.
During training, for each ranking, we predict $s_\theta(q')$ for all comparisons and fit $s_\theta$ to the target ranking via negative log-likelihood loss minimization.

\subsubsection{Relationship to Bradley--Terry}

A key property of \rr{}, as formalized in \cref{thm:responserank_bt}, is that it gracefully reduces to the standard Bradley--Terry loss in the degenerate case of a single comparison per stratum (proof in \cref{app:responserank_bt_proof}).
This reduction places \rr{} as a generalization of Bradley--Terry and allows for seamless mixing of strength-ranked and choice-only data:
For any datapoint missing a strength signal, we can place it in a stratum of size 1, resulting in the standard Bradley--Terry loss for that comparison.

\begin{theorem}[\rr{} reduces to BT for a single comparison]\label{thm:responserank_bt}
    When applying the \textnormal{\rr{}} method to a stratum containing only a single normalized comparison $q' = (w, l)$, the target ranking for the Plackett-Luce model is $[q', \lambda_0]$, where $\lambda_0$ is the virtual anchor.
    Minimizing the NLL for this ranking, given the Plackett-Luce scores $s_\theta(q') = u_\theta(w) - u_\theta(l)$ and $s(\lambda_0) = 0$, is equivalent to minimizing the binary cross-entropy loss of a Bradley-Terry model for the preference $w \succ l$ with item utilities $u_\theta(w)$ and $u_\theta(l)$.
\end{theorem}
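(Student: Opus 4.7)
The plan is a direct algebraic reduction: expand the Plackett--Luce negative log-likelihood on the two-element ranking $[q', \lambda_0]$ and recognize it as the standard Bradley--Terry cross-entropy.

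First, I would write out the PL probability of the target ranking. By the sequential-choice definition given in \cref{sec:method:rr:learning}, the likelihood of $[q', \lambda_0]$ factors as the probability of selecting $q'$ from $\{q', \lambda_0\}$ times the probability of selecting $\lambda_0$ from $\{\lambda_0\}$. The latter is trivially $1$, so
\[
P_{\mathrm{PL}}([q', \lambda_0] \mid \theta) = \frac{\exp(s_\theta(q'))}{\exp(s_\theta(q')) + \exp(s(\lambda_0))}.
\]

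Next I would substitute $s_\theta(q') = u_\theta(w) - u_\theta(l)$ and $s(\lambda_0) = 0$, giving
\[
P_{\mathrm{PL}}([q', \lambda_0] \mid \theta) = \frac{\exp(u_\theta(w) - u_\theta(l))}{\exp(u_\theta(w) - u_\theta(l)) + 1} = \sigma\bigl(u_\theta(w) - u_\theta(l)\bigr),
\]
where $\sigma$ denotes the logistic sigmoid. Multiplying numerator and denominator by $\exp(u_\theta(l))$ rewrites this as $\exp(u_\theta(w)) / (\exp(u_\theta(w)) + \exp(u_\theta(l)))$, which is precisely the Bradley--Terry probability of $w \succ l$ with item utilities $u_\theta(w), u_\theta(l)$.

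Finally, I would take negative logarithms on both sides. The PL-NLL on this ranking becomes $-\log \sigma(u_\theta(w) - u_\theta(l))$, and the binary cross-entropy loss for the observed preference $w \succ l$ under Bradley--Terry (with target label 1) is the same expression. Since the two loss functions agree pointwise in $\theta$, minimizing one is equivalent to minimizing the other, proving the theorem.

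There is no genuine obstacle here; the entire argument is a three-line calculation. The only place to be careful is the choice of normalization that ensures $q'$ is placed above $\lambda_0$ (so that the winner appears first and the sign of $s_\theta$ matches the preference direction) — this is exactly what the winner--loser normalization step in \cref{sec:method:rr:target} guarantees, and without it the reduction would produce a flipped sign in the Bradley--Terry target.
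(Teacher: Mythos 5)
Your proposal is correct and follows essentially the same route as the paper's proof in the appendix: expand the two-element Plackett--Luce likelihood, note the second factor is trivially $1$, substitute the scores, and rewrite $\sigma(u_\theta(w) - u_\theta(l))$ as the Bradley--Terry probability so that the NLL is the binary cross-entropy. Your closing remark about the winner--loser normalization guaranteeing the sign convention is a sensible observation that the paper's proof also relies on implicitly.
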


Intuitively, for multiple comparisons per stratum, each comparison beating the anchor contributes a BT preference term, while the ordering among comparisons encourages stronger preferences to have larger utility differences.
See \cref{app:implicit_strength:rank_breaking} for details.

\section{The Pearson Distance Correlation}%
\label{sec:the_pearson_distance_correlation}

To effectively quantify how well a model learns the \emph{strength} or \emph{distance} between utilities beyond just their ordinal ranking, we introduce the \emph{Pearson Distance Correlation} (PDC).
While some existing metrics assess aspects of preference strength, PDC is specifically designed to robustly isolate the learning of cardinal utility distances by satisfying key desirable properties.
Further details, comparison with other metrics, and a discussion of these properties are provided in \cref{app:pdc}.

The core idea of PDC, formalized in \cref{def:pdc}, is to measure the linear correlation between the \emph{absolute magnitudes} of true utility differences and predicted utility differences.
This focus on absolute differences aims to decouple the assessment of distance learning from the correctness of ordinal preference, a property we term \emph{ordinal independence}.
\Cref{fig:pdc_vs_tce_heatmaps} visually contrasts PDC with calibration error (TCE, $l_1$ distance between predicted confidences and true likelihoods), illustrating PDC's robustness and key characteristics like ordinal independence and affine invariance.

\begin{definition}[Pearson Distance Correlation]\label{def:pdc}
Let $X, X' \sim_{\mathrm{iid}} p_{\mathrm{data}}$ be independent items sampled from the data distribution $p_{\mathrm{data}}$, and let $u, \hat u \colon \mathcal{X} \to \mathbb{R}$ denote the true and predicted utility functions, respectively.
Define the true absolute utility difference as $\Delta U_{(X, X')} \coloneqq |u(X) - u(X')|$ and the predicted absolute utility difference as $\Delta \hat U_{(X, X')} \coloneqq |\hat u(X) - \hat u(X')|$.
The PDC is the Pearson correlation coefficient between these random variables $\Delta U$ and $\Delta \hat U$:
\begin{equation}
    \PDC{}(u, \hat{u}; p_{\mathrm{data}})
    = \mathrm{Corr}(\Delta U, \Delta \hat{U})
    = \frac{\mathrm{Cov}(\Delta U, \Delta \hat{U})}{\sqrt{\mathbb{V}[\Delta U] \cdot \mathbb{V}[\Delta \hat{U}]}}\text.
\end{equation}
Variances and covariances are taken over the joint distribution of $(X,X')$ where $X, X' \sim_{\text{iid}} p_{\mathrm{data}}$.
\end{definition}

\begin{figure}
    \centering
    \includegraphics[width=1.0\textwidth]{./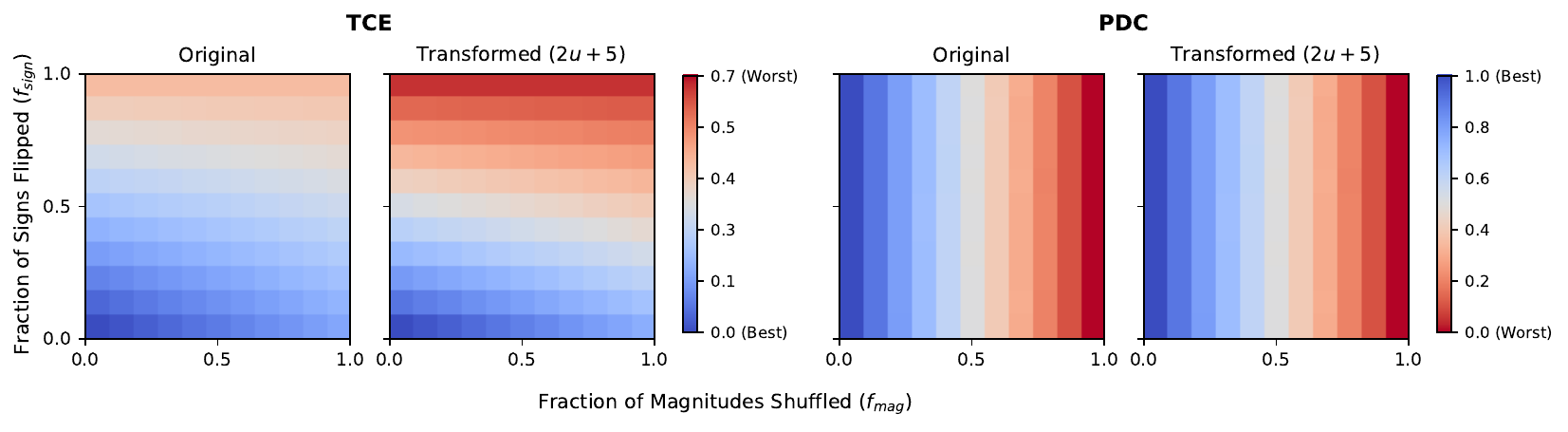}
    \caption{%
        \textbf{PDC robustly isolates distance learning, unlike TCE.}
        Heatmaps compare PDC and TCE on synthetic data under varying levels of ordinal error ($f_{\text{sign}}$, y-axis, flipping signs) and distance information loss ($f_{\text{mag}}$, x-axis, shuffling magnitudes), for original and affine-scaled ($2u + 5$) utilities.
        PDC remains optimal (blue) when only ordinal information is degraded ($f_{\text{mag}}=0$), decreases systematically as distance information is lost ($f_{\text{mag}}>0$), and is consistent across utility scalings and ordinal accuracy, unlike TCE which conflates these aspects.
    }\label{fig:pdc_vs_tce_heatmaps}
\end{figure}

PDC is designed to satisfy several crucial properties for evaluating learned utility distances, as stated in \cref{thm:pdc_properties} (details and proof in \cref{app:pdc:formal}).
\begin{proposition}[PDC Properties]\label{thm:pdc_properties}
    Given a true utility function \( u \) and a predicted utility function \( \hat{u} \), the PDC satisfies:
    (i) \emph{Affine Invariance}: Unchanged by positive affine utility transformations.
    (ii) \emph{Distance Sensitivity}: Monotonically increases with accuracy of predicted difference magnitudes.
    (iii) \emph{Known Baseline and Scale}: Values of 0 (no distance learning) and 1 (ideal distance learning).
    (iv) \emph{Ordinal Independence}: Baseline for no distance learning is independent of ordinal accuracy.
\end{proposition}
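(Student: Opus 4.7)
The plan is to verify each of the four properties separately, each reducing to a standard property of the Pearson correlation applied to the absolute utility differences $\Delta U$ and $\Delta \hat U$. Three of the four should be short consequences of manipulating the defining formula, while distance sensitivity requires first choosing an adequate formalization of ``accuracy''.

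For affine invariance (i), I would observe that under $\hat u'(x) = a \hat u(x) + b$ with $a > 0$, the offset cancels inside the difference and the positive scalar pulls out of the absolute value, so $\Delta \hat U' = a \Delta \hat U$; Pearson correlation is invariant under positive affine rescaling of either argument, so $\PDC$ is unchanged, and the argument for transforming $u$ is symmetric. For the known baseline and scale (iii), the bounds $\PDC \in [-1, 1]$ are inherited from Pearson; the value $1$ is attained whenever $\Delta \hat U = \alpha \Delta U + \beta$ almost surely with $\alpha > 0$ (in particular, when $\hat u$ is a positive affine transform of $u$ on the support of $p_{\mathrm{data}}$), and the baseline $0$ holds whenever $\Delta U$ and $\Delta \hat U$ are uncorrelated, including statistical independence, which formalizes ``no distance learning''. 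Ordinal independence (iv) then follows immediately from the absolute values: replacing $\hat u(X) - \hat u(X')$ by $\sigma(X, X') \, (\hat u(X) - \hat u(X'))$ for any $\{-1,+1\}$-valued function $\sigma$ leaves $\Delta \hat U$ pointwise unchanged, so the PDC — and in particular its baseline — is unaffected by arbitrary sign flips of predicted orderings.

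The main obstacle is distance sensitivity (ii), since ``accuracy of predicted difference magnitudes'' is not intrinsic to the PDC formula and admits several reasonable formalizations. The plan is to instantiate one canonical one-parameter family, e.g.\ $\Delta \hat U_\alpha = \alpha \Delta U + (1 - \alpha) Z$ with $Z$ an independent noise variable whose first two moments match $\Delta U$ and $\alpha \in [0, 1]$ interpolating from ``no distance information'' ($\alpha = 0$) to ``perfect recovery'' ($\alpha = 1$). Substituting into the correlation formula yields $\mathrm{Cov}(\Delta U, \Delta \hat U_\alpha) = \alpha \, \mathbb V[\Delta U]$ and $\mathbb V[\Delta \hat U_\alpha] = \alpha^2 \mathbb V[\Delta U] + (1-\alpha)^2 \mathbb V[Z]$, and a short derivative calculation then shows strict monotonicity from $0$ to $1$. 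The delicate point will be convincing the reader that this is not an artifact of one specific degradation model: the same qualitative behavior should hold for partial random shuffling of magnitudes (the axis of \cref{fig:pdc_vs_tce_heatmaps}) and, ideally, for any reasonable quality-degrading operator. I would likely defer any general characterization to the appendix and argue in the main text that the canonical family captures the operative notion.
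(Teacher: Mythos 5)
Your proposal is correct and, for properties (i), (iii), and (iv), follows essentially the same route as the paper's proof in \cref{app:pdc:formal}: everything reduces to standard facts about the Pearson coefficient, with affine invariance coming from $\Delta \hat U' = a\,\Delta \hat U$, the baseline/scale from uncorrelatedness and perfect positive linearity, and ordinal independence from the fact that sign information is destroyed by the absolute value (the paper phrases this slightly more abstractly, as ``any two predictors with identical magnitude predictions have identical PDC,'' of which your sign-flip construction is the operative special case). The one genuine divergence is distance sensitivity (ii), and you correctly identified it as the delicate point. The paper resolves the ambiguity you worry about by \emph{defining} ``better accuracy of predicted magnitudes'' as a smaller residual variance, relative to $\mathbb V[\Delta \hat U_k]$, in the optimal positive-slope linear regression of $\Delta \hat U_k$ on $\Delta U$; since $\PDC^2$ is exactly the coefficient of determination $R^2 = 1 - \mathbb V[\text{residuals}]/\mathbb V[\Delta \hat U_k]$ of that regression, monotonicity is then immediate and holds for arbitrary predictors, at the cost of the statement being close to definitional. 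Your approach instead instantiates a concrete one-parameter degradation family $\Delta \hat U_\alpha = \alpha\,\Delta U + (1-\alpha)Z$ and verifies monotonicity along it by direct computation; this is a valid and more constructive demonstration, but as you note it only establishes the property along one degradation path rather than for all predictors, so if you want to match the strength of the paper's claim you would need to either adopt the regression-residual formalization or prove the result for a general class of degradation operators. Neither gap is fatal — it is a choice of formalization — but you should make the chosen definition of ``accuracy'' explicit in the theorem statement rather than deferring it.
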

A PDC significantly greater than $0$ implies non-trivial learning of utility difference magnitudes.
This allows more precise statements about \emph{if}, and \emph{how well}, a model has learned relative preference strengths (see \cref{app:implicit_strength} for an example).
PDC is designed to complement standard ordinal metrics, not replace them.
Due to its ordinal independence, a high PDC can occur even with some ordinal mispredictions, provided the magnitudes of these (potentially erroneous) differences still correlate with true magnitudes.
Note that real-world datasets generally do not have true utilities, making exact computation impossible.
\Cref{app:pdc:true_utilities} discusses this problem and proposes an approximation.

\section{Synthetic preference learning experiments}%
\label{sec:synthetic_experiments}
\begin{figure}
    \centering
    \includegraphics[width=\textwidth]{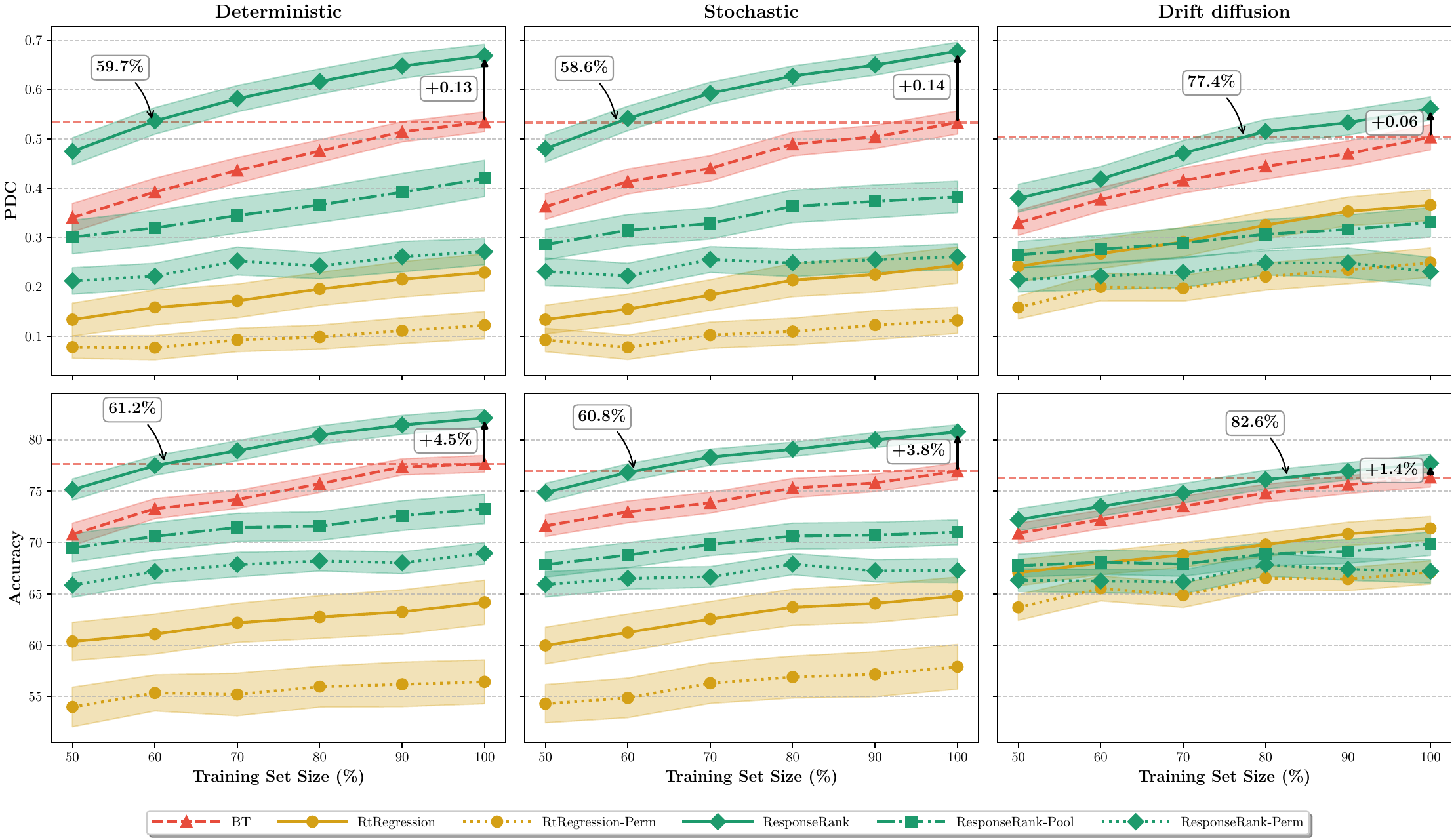}
    \caption{%
        \textbf{Synthetic results with RT variability across strata.}
        Comparison of PDC (top) and accuracy (bottom) on Deterministic, Drift-Diffusion (DDM), and Stochastic datasets as a function of dataset size ($\numTrials{}$ runs; shading indicates $95\%$ CI assuming normality).
        BT baseline performance with the full dataset is shown as a dashed red line; \rr{}'s interpolated breakeven point is marked by an arrow.
        Higher PDC indicates better learned preference strength (utility distances); higher accuracy indicates better ordinal preference predictions.
        \textbf{\rr{} matches BT performance with only 61-83\% of the data, demonstrating the efficiency of the method.}
        The permutation baseline (\rrPerm{}) confirms that performance improvements stem specifically from informative strength signals rather than modeling artifacts.
    }\label{fig:metrics_vs_datasize}
\end{figure}
In this section, we evaluate \rr{} on synthetic pairwise comparison datasets with known ground-truth utility differences and simulated response times.
We compare against baselines on metrics measuring preference strength learning (PDC) and ordinal accuracy under controlled conditions.

\subsection{Experimental setup}\label{sec:experimental_setup}

\textbf{Datasets.}
Synthetic datasets are generated by generating item pairs with random attributes, assigning utilities with a known function, and then deriving choices and strength signals in the form of synthetic RTs via a human choice model.
To test robustness to mismatched assumptions, we consider two distinct human choice models:
(1) a stochastic Bradley--Terry (BT) model extended with a custom link function mapping utility differences to response times, and
(2) a joint drift-diffusion model (DDM) simultaneously generating stochastic choices and response times.
We set model parameters to yield similar error rates (approximately 8\%) in both scenarios, although the DDM exhibits higher noise in the response time generation.
A `Deterministic' setting uses model (1) without choice/RT noise. %
Data is partitioned into artificial strata (simulating, e.g., different annotators) and we apply a random RT multiplier to each stratum (simulating, e.g., annotator speed differences).
The utility function is modeled as a small feed-forward neural network with a single hidden layer (64 neurons, ReLU activation) and optimized with AdamW.
Full details and further results are in \cref{app:synthetic_details}.
The source code for our experiments is available at \url{https://github.com/timokau/response-rank}.

\textbf{Baselines.}
We compare \rr{} against:
(1) a \emph{Bradley--Terry} (BT) baseline, the de-facto standard for learning utilities from pairwise preferences which considers only preference labels without response time information,
(2) \mintro{\rtregression{}}: models RTs as inversely proportional to utility differences via a specific link function; this link function mirrors the one used in generating data for the deterministic and stochastic conditions, but does not use the stratum RT multipliers.
(3) two permutation control baselines (\mintro{\rrPerm{}}, \mintro{\rtregressionPerm{}}), which train the respective methods on permuted response times to assess robustness and verify response-time informativeness,
(4) \mintro{\rrPool{}}, an ablation that omits stratification in favor of a single global stratum.
Detailed information on baselines and modeling assumptions is described in \cref{app:baselines}.

\subsection{Research questions and evaluation criteria}

We investigate three specific research questions:
(Q1) \emph{Preference strength learning:} Does \rr{} better recover underlying utility differences compared to baselines?
(Q2) \emph{Robustness across data generating processes:} Does \rr{} maintain performance when trained with datasets generated from different assumptions?
(Q3) \emph{Choice prediction accuracy:} Does incorporating response-time information via \rr{} improve ordinal preference prediction accuracy relative to baselines?

We quantitatively evaluate these aspects using two complementary metrics:
(1) \emph{Pearson Distance Correlation (PDC):} measuring correlation between predicted and true utility differences (\cref{sec:the_pearson_distance_correlation}).
High PDC indicates superior recovery of cardinal utility distances.
(2) \emph{Choice Accuracy:} predicting held-out preference choices; higher accuracy represents a better ordinal utility model.

\subsection{Results and discussion}

\Cref{fig:metrics_vs_datasize} shows the performance of \rr{} and the BT baseline as a function of dataset size.
Results demonstrate consistent improvements across most settings, indicating the robustness and effectiveness of the \rr{} approach.

\textbf{Preference strength recovery (Q1).}
\rr{} achieves significantly higher median PDC than BT, demonstrating improved recovery of utility difference magnitudes.
The \rrPerm{} control performs notably worse than \rr{}, confirming improvements stem from informative RT signals, not just artifacts of the training procedure.
Nonetheless \rrPerm{} still demonstrates reasonable performance, highlighting our method's robustness to noise in response times and bad stratification.

\textbf{Robustness to data-generating model assumptions (Q2).}
\rr{} consistently outperforms BT and permutation baselines across all data-generation paradigms (Deterministic, DDM, Stochastic) in median PDC and accuracy (\cref{fig:metrics_comparison_boxplot_with_var}).
This indicates \rr{} robustly exploits relative RT signals even when data-generating mechanisms vary.
The advantage is largest when the strength signal is cleanest (Deterministic) and decreases with signal noise (Stochastic, then DDM).

\textbf{Improving ordinal predictions via response time (Q3).}
Incorporating RTs via \rr{} significantly improves ordinal choice prediction accuracy across scenarios.
This improvement, despite RTs not directly adding ordinal labels, highlights that learning better cardinal models through RTs enhances ordinal generalization.
This suggests that \rr{} could also benefit deterministic scenarios or domains less obviously requiring cardinal utilities, like language-model fine-tuning from pairwise human preferences, by learning more accurate and generalizable preference models.

\subsection{Comparison to regression-based modeling and robustness analysis}

In \cref{fig:metrics_vs_datasize}, \rtregression{} underperforms both \rr{} and the BT baseline.
This is because \rtregression{} does not model inter-stratum variability:
If there were no variability and the link between the strength signal and utility differences was \emph{perfectly known}, \rtregression{} would be a viable alternative.
Detailed results for this setting are available in \cref{app:synthetic_details}.
\rrPool{} similarly does not take this variability into account, but its performance remains slightly above the baseline.
This indicates that the \rr{} approach is more robust than \rtregression{}, likely because of its much weaker assumptions about the RT-strength relationship.
This highlights the practical robustness of the proposed ranking-based approach, which avoids rigid link-function assumptions entirely.

Permuting RTs (\rrPerm{}) significantly degrades \rr{}'s performance, confirming RTs' role in the method.
However, accuracy remains above chance and PDC remains above zero, indicating robustness to even extreme amounts of noise in the raw timing data.%
\rr{}'s effectiveness even on the drift diffusion dataset with noisy response times (\cref{fig:metrics_vs_datasize}) demonstrates its robustness to noisy, yet informative, response time signals.

Our experiments support \rr{}'s effectiveness and robustness.
The empirical evaluation confirms considerable advantages over baselines in both cardinal and ordinal preference modeling, effective across diverse assumptions about human choice processes and providing tangible benefits even for purely ordinal prediction accuracy.
These results indicate promising potential for broader applications, such as training language models, by improving model generalization through the incorporation of implicit cardinal information.

\section{Language modeling experiments on MultiPref}%
\label{sec:multipref}

To validate our method's applicability to real-world language modeling data, we conducted experiments on the MultiPref preference dataset~\citep{%
    miranda2025hybrid%
}.
This dataset provides text preferences with rich metadata.
It spans 227 distinct crowdworkers, 5,323 unique prompts, 10,461 distinct responses, and four annotations per instance from workers of different expertise.
It includes rich data on annotator expertise, preference strength (clear or slight), per-category preferences (helpful, truthful, harmless), and justifications for the preferences.

\textbf{Experimental setup.}
We compare the \mintro{BT} baseline, not using any strength information, with several variants of \rr{} using distinct strength signals:
\mintro{RR-Random} ablates the impact of strength by using random strength rankings of size 2,
\mintro{RR-RT} uses \emph{response time} with stratification by annotator,
\mintro{RR-Stated} uses binary \emph{stated strength} of the preference (slight/clear) as given by the annotator with stratification by annotator identity,
and
\mintro{RR-Agree} uses the inter-annotator agreement of the 4 annotations for each comparison in the MultiPref dataset without stratification.

Our experimental design aims to validate that incorporating preference strength information improves reward model performance.
In addition to the MultiPref binarized test set, we evaluate on the out-of-distribution RewardBench 2 dataset, which correlates strongly with downstream performance \citep{malik2025rewardbench}.

\begin{figure}
    \centering
    \includegraphics[width=\linewidth]{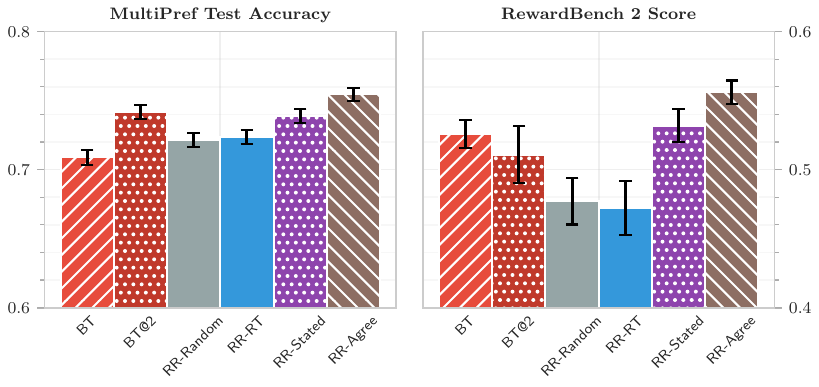}
    \caption{%
        \textbf{Response time approach comparison.}
        We compare BT baseline (3 epochs),
        \mintro{BT@2} (2 epochs),
        \mintro{RR-Random} (\emph{ablation}: using random strength rankings of size 2),
        \mintro{RR-RT} (using 8 length buckets with size-2 constraint),
        \mintro{RR-Stated},
        and \mintro{RR-Agree}
        on both MultiPref test accuracy and RewardBench 2 performance.
        We report mean and 95\% CI (assuming normality) across 30 seeds with distinct train/test splits.
    }%
    \label{fig:response_time_length_strat_bar_simplified}
\end{figure}

\textbf{Results.}
\Cref{fig:response_time_length_strat_bar_simplified} shows that, given a relatively clean strength signal such as inter-annotator agreement (RR-Agree), \rr{} improves both \emph{in-distribution accuracy} and \emph{out-of-distribution RewardBench performance} (\agreeRewardBenchMeanDiff{} pp improvement over BT) compared to the BT baseline.
Experiments with smaller subsets of the training data (\cref{fig:multipref_datasize} in \cref{app:multipref}) further show that RR-Agree matches final BT performance in RewardBench with approximately 30\% \emph{fewer preference pairs}, demonstrating its data efficiency.

RR-RT performs poorly, indicating that response time is not a useful signal of preference strength in this dataset.
This may be because MultiPref's annotation protocol includes additional annotations (strength, categories, justifications) that extend response times and obscure the RT-strength relationship.
Future work could investigate whether RT provides useful strength signals in datasets with simpler annotation protocols,
or explore whether improved filtering and stratification could extract meaningful signals from MultiPref.

RR-Stated, which uses explicit strength annotations (slight vs.\ clear), outperforms RR-Random and RR-RT, confirming that \rr{} benefits from informative signals.
However, gains over BT are modest ($\statedRewardBenchMeanDiff{}$ pp on RewardBench, not statistically significant, $p\statedRewardBenchMeanP$),
likely because rankings require responses with different strength labels (limited by label imbalance) and self-reported strength annotations are inherently noisy.

\textbf{Training robustness.}
We observe that training with the BT loss for two epochs is better for MultiPref performance while three epochs (our primary baseline) improve RewardBench performance.
Analysis of RewardBench subcategories (\cref{fig:multipref_datasize_rb} in \cref{app:multipref}) shows that the epoch 3 gains concentrate in the Focus category (measuring prompt adherence).
We hypothesize that continued training simultaneously learns transferable skills (off-topic detection) useful for the RewardBench Focus category while also overfitting to other patterns -- gains benefit Focus, but costs dominate on MultiPref test.
\rr{} variants avoid this tradeoff, demonstrating a regularizing effect of the ranking loss.
Even RR-Random, despite fitting to entirely random strength signals, exceeds three-epoch BT performance on MultiPref test;
this does not generalize to RewardBench, however, where informative strength signals are necessary.

\section{RL control experiments}%
\label{sec:rl_control}

While our language modeling results demonstrate improved reward model generalization on out-of-distribution benchmarks, a remaining question is whether learned preference strength translates to other domains and improved downstream policy performance.
To assess this, we conducted additional experiments in the control domain using simulated episode returns as strength signals.
We train reward models on preference data generated using ground-truth environment rewards and use them to optimize policies via reinforcement learning, then evaluate the resulting policies on the actual task.
This allows us to test whether capturing preference strength genuinely improves the policies that agents learn, rather than merely improving reward model statistics.

\textbf{Experimental setup and strength signal.}
Following the methodology of \citet{metz2025reward}, we tested three MuJoCo environments (HalfCheetah, Swimmer, Walker2d) and Highway merge-v0, comparing the BT baseline with \rr{} using 5000 queries (4000 training/1000 validation) and averaging results over 5 runs per configuration.
As return differences are globally valid, we used random partitions of size 16 to construct the rankings.

\textbf{Results.}
Results detailed in \cref{app:rl_control} show that \rr{}-reward models generally increase reward model accuracy, which translates to downstream policy improvements across environments.
While this general trend is clear, the experiments in this domain are small scale and the differences often fall within the confidence interval.
We encourage future work on applying and tuning \rr{} to control domains at a larger scale.

\section{Related work}\label{sec:extended_related_work}

Our work builds on rich literature in preference learning, auxiliary strength signals, and human feedback.
In this section, we concisely highlight works most relevant to learning preference strength and the evaluation thereof.
\Cref{app:extended_rw} expands on this and further discusses ranking-based utility learning, explicit strength specification, and adaptive losses.
We give particular attention to response times as a strength signal in this discussion, as we consider them particularly promising due to their ready availability and their well-established relationship with preference strength.

\textbf{Learning from choices.}
Learning latent utility functions from observed choices is a common approach in preference learning, widely used in RLHF~\citep{christiano2017deep,ouyang2022training, stiennon2020learning}.
Standard models like the Bradley-Terry model~\citep{bradley1952rank}, when combined with proper scoring rules, can implicitly learn some aspects of utility differences, reflecting preference strength in choice probabilities. %
However, robustly inferring preference strength solely from choices is challenging when comparisons are seen only once, as is typical in large-scale RLHF.
This limitation motivates the use of additional signals, such as RTs, inter-annotator agreement, or categorical strength ratings, to capture preference strength more directly.

\textbf{Response time as a strength signal.}
Response times as a readily available signal with an established relationship with preference strength are a key motivator for \rr{}.
Prior methods that integrate RTs often employ explicit cognitive process models like the Drift Diffusion Model (DDM)~\citep[e.g.,][]{shvartsman2024response,li2024enhancing} or make strong assumptions about a direct, \emph{global} relationship between RT magnitude and preference strength~\citep[e.g.,][]{jansen2022information}.\footnote{%
Theoretical foundations of the RT--strength relationship and detailed comparison with prior methods using RTs are provided in \cref{app:background} and \cref{app:extended_rw}.
}
\rr{} differs fundamentally:
it assumes signal-strength correlations (e.g., RT-strength) hold only \emph{locally} within strata (e.g., per-annotator) and leverages only their monotonic \emph{ordinal} ranking rather than modeling the full relationship.
This local, relative-ranking approach enhances robustness against confounders, avoids the need for explicit DDM modeling or its associated complexities, and bypasses restrictive data requirements (e.g., repeated queries or fixed item sets~\citep{li2024enhancing,jansen2022information}), rendering it suitable for typical large-scale RLHF data.

\textbf{Evaluating Preference Strength.}
While empirical evidence suggests that using signals such as RTs can improve overall model performance and generalization~\citep{%
clithero2018improving,%
li2024enhancing,%
shvartsman2024response%
}, robustly quantifying how well \emph{preference strength} itself is learned remains an open challenge.
Standard metrics like choice prediction accuracy \citep[e.g., used by][]{%
li2024enhancing%
} or general calibration measures like the Brier score \citep[e.g.,][]{%
shvartsman2024response%
} only indirectly reflect success in capturing strength, not specifically isolating learned utility \emph{distances} or satisfying key invariances desirable for this particular evaluative task.
To address this, we propose the Pearson Distance Correlation (PDC) (\cref{sec:the_pearson_distance_correlation}) as a novel metric tailored for evaluating learned preference strength by directly addressing utility distances and satisfying key invariances.

\section{Discussion and conclusion}

We introduced \rr{}, a novel method to learn preference strength from locally valid relative signals of preference strength, enhancing RLHF without extra annotation.

\textbf{Limitations and Future Work.}
In this work, we demonstrated \rr{}'s effectiveness across synthetic datasets, real-world language modeling, and RL control tasks.
While response times proved challenging in MultiPref -- likely due to the complex annotation protocol -- inter-annotator agreement demonstrated strong performance, validating the approach with informative signals.
Future work should explore datasets with simpler annotation protocols where RT may prove effective, improved noise-handling techniques (such as refined stratification) to extract weak strength signals, and alternative low-cost strength proxies.
Our approach also discards tied preferences, though these could be incorporated as additional indicators of small preference differences.
We validated \rr{}-trained reward models on RewardBench (which correlates with downstream performance) and demonstrated improved policies in RL control tasks; validation in large-scale RLHF for language model policies remains important future work.

\textbf{Broader applications.}
The core principle of \rr{} -- inferring signal strength from localized, relative rankings of noisy auxiliary data -- extends beyond RTs in RLHF.
It could improve human-AI alignment by capturing preference intensity from various implicit signals (e.g., hesitation) in applications like recommender systems.
This approach could also generally enhance supervised learning by estimating label confidence or informativeness from contextual cues, aiding active learning or calibration.

\textbf{Conclusion.}
\rr{} robustly extracts cardinal preference information from noisy strength signals, and our proposed Pearson Distance Correlation (PDC) metric specifically quantifies this.
Our experiments show \rr{} significantly improves preference strength recovery (via PDC) and ordinal accuracy over choice-only baselines, indicating better sample efficiency and generalization.
Key future work remains identifying cheaply available sources of strength signal or further investigating the use of response times.
By enabling a more granular understanding of human preferences, \rr{} contributes to developing AI systems that better align with the nuanced spectrum of human values, crucial for building more reliable and beneficial AI.

\begin{ack}
The authors acknowledge support and computational resources from the Munich Center of Machine Learning (MCML).
Additional computational resources were provided by several computing centers.
For the CPU-based synthetic experiments,
the authors gratefully acknowledge the computational and data resources provided by the Leibniz Supercomputing Centre (www.lrz.de).
For the GPU-based experiments,
the authors gratefully acknowledge the scientific support and resources of the AI service infrastructure LRZ AI Systems provided by the Leibniz Supercomputing Centre (LRZ) of the Bavarian Academy of Sciences and Humanities (BAdW), funded by Bayerisches Staatsministerium für Wissenschaft und Kunst (StMWK). 
The authors further acknowledge support by the state of Baden-Württemberg through \emph{bwHPC} providing compute and GPU resources.

This work has also received funding from the European Union's Horizon Europe research and innovation programme under the Marie Sklodowska-Curie grant agreement No 101073307.
The work at the University of Konstanz was partially funded by the Deutsche Forschungsgemeinschaft (DFG, German Research Foundation) – Project-ID 251654672 – TRR 161.

\end{ack}

\def\bibfont{\small} %
\bibliography{main}

\appendix

\section*{Appendix}

\section{Implicit learning of preference strength in standard choice models}\label{app:implicit_strength}

Standard pairwise preference labels ($a \succ b$) used in reinforcement learning from human feedback (RLHF) primarily yield ordinal information (i.e., $u(a) \ge u(b)$).
However, common RLHF methodologies often implicitly capture cardinal preference strength (the magnitude of $u(a) - u(b)$) through several interconnected mechanisms.
This implicit capture of information is suggested by the empirical success of RLHF methods \citep[e.g.,][]{christiano2017deep}.
If utility differences were learned entirely arbitrarily, preserving only preference order but not magnitude, these methods would likely struggle to learn effective and generalizable policies.

\subsection{Mechanisms of implicit strength learning}

Several factors contribute to this implicit learning of preference strength.
First, \emph{probabilistic choice models} like the Bradley--Terry (BT) model~\citep{bradley1952rank} or Plackett--Luce models~\citep{plackett1975analysis} inherently link choice probabilities to latent utility differences.
In such models, for example the BT model, the probability $p(a \succ b)$ can be expressed as a function of the utility difference $u(a) - u(b)$, such as $\sigma(u(a) - u(b))$.
This means that larger utility differences yield choice probabilities closer to 0 or 1, thereby encoding preference strength within the choice probability distribution itself.
Second, training these models typically involves minimizing a \emph{proper scoring rule}, such as cross-entropy loss~\citep{gneiting2007strictly}.
This objective incentivizes the model to accurately predict the true choice probabilities.
Since these probabilities are dependent on utility differences, the training process implicitly pushes the model to learn the underlying differences necessary to match the observed choice patterns~\citep{train2009discrete}.
Crucially, while the absolute utility values $u(x)$ from such models are typically identifiable\footnote{See \cref{app:implicit_strength:identifiability} for limitations of this identifiability in the RLHF setting.} only up to an arbitrary additive shift, the resulting learned utility \emph{differences} $u(a) - u(b)$ are shift-invariant and become meaningful representations of preference strength and choice probability~\citep{train2009discrete}.

Furthermore, \emph{inductive biases} from the modeling approach assist in learning strength.
The core assumption of a consistent utility function enforces transitivity.
The use of smooth function approximators, such as neural networks, encourages generalization and interpolation of relative utility differences across similar item pairings.
For example, if $u(a) > u(b)$ and $b \approx c$, smoothness in the utility approximator promotes learning that $u(a) > u(c)$.
This capability is crucial, as proper scoring rules alone may not recover true probabilities when only ever observing unique training instances with hard, binary labels.

The Bradley--Terry model serves as a clear illustration of these principles in action.
It is a popular model for pairwise comparisons, where the probability of preferring item $i$ over item $j$ is typically defined as:
\begin{equation}
    p(i; u_i, u_j) = \frac{\exp(u_i)}{\exp(u_i) + \exp(u_j)}
    \,\text{,}\label{eq:bt_probability}
\end{equation}
where $u_i$ and $u_j$ represent the utilities of items $i$ and $j$ respectively. %
This expression is akin to a pairwise logistic regression model, where the utility difference effectively serves as a feature.
In many practical RLHF scenarios, we only observe a single outcome for each comparison (e.g., $i$ is chosen over $j$), meaning the training labels are degenerate ($p_i \in \{0, 1\}$).
In principle, fitting a model to such degenerate labels without any regularization could theoretically push the learned absolute utility differences towards infinity, as the model continually seeks to improve the likelihood for the observed certain outcome.
This issue parallels the challenge of fitting a classifier to perfectly separable hard labels, which can lead to overconfident and poorly calibrated models that assign extreme probabilities~\citep{guo2017calibration, silvafilho2023classifier}.
In both preference learning and classification, the goal extends beyond mere accuracy in the primary prediction (the choice or the class label) to also obtaining meaningful secondary information, such as preference strength or a well-calibrated probability.
The aforementioned inductive biases, particularly smoothness from neural network approximators, help to regularize the learning process and yield finite, meaningful utility differences.

\subsection{Empirical evidence and limitations}

Our own experimental results provide empirical evidence for this implicit learning of distance by standard models.
As demonstrated in \cref{fig:metrics_vs_datasize} in the main paper, the Bradley--Terry baseline consistently achieves a Pearson Distance Correlation (PDC) that is significantly and robustly above zero across various synthetic datasets and conditions.
As established in \cref{app:pdc}, a PDC value greater than zero robustly indicates that the model has learned non-trivial information about the true underlying utility differences.

Despite these mechanisms allowing for implicit strength capture, it is crucial to recognize their \emph{limitations}.
The preference strength information learned in this manner is indirect and can be weak, noisy, or highly contingent on the specific validity of model assumptions, the quality of the data, and the chosen architectural details.
The inherent indirectness and potential unreliability of this implicitly learned strength motivate the development of methods like \rr{}.
\rr{} aims to leverage more direct and potentially richer signals of preference strength, such as human response times, for more robust reward modeling.
Indeed, while our results confirm that standard BT models do implicitly learn preference strength to some degree, these same results (\cref{fig:metrics_vs_datasize} in \cref{app:synthetic_details}) also clearly demonstrate that \rr{} consistently achieves superior performance in capturing these cardinal utility differences, highlighting the benefits of its explicit, RT-informed approach.

\subsection{Identifiability}%
\label{app:implicit_strength:identifiability}

We briefly discuss identifiability of utilities in choice models and how \rr{} compares to Bradley--Terry.

\textbf{Classical results.}
In the Bradley--Terry model with a fixed finite set of items, the MLE of the item utilities is identifiable up to an additive constant under the condition that the comparison graph is strongly connected \citep{ford1957solution}, and is consistent given sufficient data \citep{bong2022generalized}.

\textbf{\rr{} and antisymmetry of strength.}
\rr{} builds on the Plackett--Luce model, a generalization of BT to rankings.
Plackett--Luce item strengths are identifiable up to additive shift under similar conditions as BT \citep{%
hunter2004mm%
}.
For \rr{}, we further constrain this model as we rank \emph{comparisons} and prescribe that item strengths (each comparison being an item) take the form $s_{(a, b)} = u(a) - u(b)$.
This introduces antisymmetry ($s_{(a, b)} = -s_{(b, a)}$), which prevents additive shift of the strength, making item strengths exactly identifiable.
Strength implies utility up to shift (e.g., fixing any reference item $x$, we have $u(a) = s_{(a,x)} + u(x)$).
Thus, \rr{} has the same identifiability properties for item utilities and preference strength as BT.
Note, however, that in the general case the identified utilities are not the same as those BT would identify from the pairwise comparisons alone, as \rr{} uses more information to better estimate strength from finite data.

\paragraph{Structured setting (RLHF).}
The above identifiability results primarily consider the \emph{unstructured} setting, where we estimate individual item utilities from comparisons among a fixed set of items.
In RLHF, items are typically unique prompt-response pairs, and we estimate parameters of a utility function approximator instead of individual item utilities.
This utility function makes use of \emph{structured} items with feature vectors (e.g., embeddings of prompt-response pairs).
This setting is called BT-Regression by \citet{sun2025rethinking}.
Classical identifiability results still inform what can be learned:
utilities only up to shift, and meaningful comparisons only between items or features connected by similar training comparisons.
If the dataset contains only within-domain comparisons (e.g., poems vs.\ poems, code vs.\ code, but never across), cross-domain utility comparison is not grounded.
\rr{}, like BT, inherits these limitations, though the strength-based ranking provides additional signal that may help constrain learned utilities under limited data.

\subsection{Structural interpretation via rank-breaking}%
\label{app:implicit_strength:rank_breaking}

To build intuition for what the \rr{} loss captures beyond the single-comparison case (\cref{thm:responserank_bt}), it is helpful to analyze its full \emph{rank breaking}, the decomposition of each ranking into all implied pairwise comparisons.
This analysis is meaningful because the quasi-MLE obtained by applying Bradley--Terry to a full rank-breaking targets the same parameters as the full PL MLE \citep[Proposition 3.2]{han2025unified}.

The full breaking of a \rr{} ranking consists of two types of pairs:
(1)~pairs between a comparison and the anchor element, and
(2)~pairs between two non-anchor comparisons.
Each pair of type~(1) has the same structure as the single-comparison case in \cref{thm:responserank_bt} and therefore contributes a standard BT loss term for the corresponding comparison's preference direction.
Pairs of type~(2) further constrain the relative magnitudes of preference strengths within each stratum: if normalized comparison $q'_i$ is ranked above $q'_j$, the loss encourages $s_\theta(q'_i) > s_\theta(q'_j)$.
Thus, \rr{} can be viewed as Bradley--Terry on the preference directions, augmented with additional pairwise constraints on the order of strengths.

\section{Ranking construction details}%
\label{app:ranking_construction}

While conceptually the primary means of constructing rankings is stratification, ensuring that the strength signal is comparable within each stratum, practical considerations often necessitate additional constraints:
(1) Strata may have \emph{tied strength signals} (e.g., two comparisons with identical response time from the same annotator), requiring partitioning into smaller rankings to avoid ties.
(2) Strata may be large, while smaller rankings may be preferable in some cases (see \cref{par:strength_noise_robustness} for discussion).
(3) Each ranking needs to be entirely co-located in a single on-device batch during training to compute the ranking loss, imposing memory constraints on ranking size.
We address the first two considerations with a \emph{tie-breaking partitioning strategy} within each stratum and the third with a \emph{batch packing} algorithm.

\subsection{Tie-breaking partitioning strategy}%
\label{app:ranking_construction:partitioning}

The ranking loss requires preferences within each partition to have distinct strength values.
Additionally, it is often desirable to split these strata further, as they may contain ties in strength values and smaller rankings may prove more robust to noisy signals (see \cref{par:strength_noise_robustness}).
Therefore, after stratification (if applicable), we further partition the data within each stratum into tie-free partitions with an optional maximum size $s$.
Given a stratum of size $n$ we compute the number of partitions $k = \lceil n / s \rceil$ if a maximum size $s$ is specified or $k = m$, where $m$ is the size of the largest tie group, otherwise.
We group preferences by strength value, then distribute each group across the $k$ partitions in a round-robin fashion.
As long as $k$ is at least the size of the largest tie group, this guarantees that no two preferences within a partition have the same strength value.
This maintains balanced partition sizes while maximizing the spread of strength values within each partition.

\subsection{Batch packing}%
\label{app:ranking_construction:batch_packing}

The Plackett-Luce loss computation requires entire rankings to be co-located within single on-device training batches, as the probability calculation depends on the full set of comparisons within each stratum.
Although the resulting batches are not i.i.d., as they contain comparisons from the same stratum (partially mediated by packing strata randomly into batches), this has precedent in techniques like length-based grouping for efficient GPU utilization \citep{%
vaswani2017attention%
} and has little performance impact in practice (see \cref{fig:multipref_datasize} in \cref{app:multipref}).
The batch size additionally constrains maximum ranking length, which is further analyzed in \cref{par:ranking_length}.

We designed a greedy `largest first best fit' approach to pack the partitioned rankings into fixed-size batches for training:
\begin{itemize}
    \item
        Initialize sufficient empty batches of fixed size to hold all comparisons.
        All batches will be filled to capacity, except possibly the last one.
    \item
        Process partition fragments in descending order of size (implemented using a max-heap).
        Shuffle each fragment's comparisons to randomize split points.
    \item 
        Place the fragment in the batch with the least remaining space that can still accommodate it, if one exists.
        Otherwise place the largest prefix that fits and reinsert the remaining suffix into the heap.
\end{itemize}
While this leads to non-i.i.d.\ batches, we observe no adverse effects on training performance\footnote{
    Validated by comparing BT with random vs.\ RR-Agree batching, with RR-Agree batching showing a minor \emph{improvement} in performance:
    MultiPref test $\batchingMultiprefMeanDiff{}$ pp ($p\batchingMultiprefMeanP$),
    RewardBench 2 $\batchingRewardBenchMeanDiff{}$ pp ($p\batchingRewardBenchMeanP$).
}.

\section{Supplemental details on synthetic experiments}%
\label{app:synthetic_details}

In this appendix we first provide additional details on the synthetic experimental setup, including details on the synthetic dataset and training procedure.
We then provide additional plots and numeric results for reproducibility.

\subsection{Detailed experimental setup}%
\label{app:synthetic_details:synthetic_data}

\begin{table}
    \centering
    \caption{Parameters for the synthetic data generation models.}\label{tab:synthetic-parameters}
    \begin{tabular}{lll}
    \toprule
    \textbf{Model} & \textbf{Parameter} & \textbf{Value} \\
    \midrule
    Bradley-Terry \& Log-Normal & Choice temperature & $0.2$ \\
     & Response time SD & $0.4$ \\
     & Minimum response time & $0$ \\
     & Maximum response time & $10$ \\
     & Partition RT variability & $3.0$ \\
    \midrule
    Drift-Diffusion & Decision threshold & $1.2$ \\
     & Non-decision time & $0.3$ \\
     & Drift rate multiplier & $1.0$ \\
     & Noise standard deviation & $0.4$ \\
     & Drift rate variability & $0.0$ \\
     & Starting point variability & $0.0$ \\
     & Non-decision time variability & $0.0$ \\
     & Time increment & $0.001$ \\
     & Number of partitions & $5$ \\
     & Partition RT variability & $3.0$ \\
    \bottomrule
    \end{tabular}
\end{table}
We use three synthetic datasets in our experiments:
a deterministic dataset where the choice always favors the item with the higher utility,
a stochastic dataset where the choice is made according to a Bradley--Terry model parameterized by the true utility,
and a drift-diffusion dataset where choice and response times are generated jointly with a drift-diffusion model parameterized by the utility difference as the drift rate.
The parameters for both models are shown in \cref{tab:synthetic-parameters}.

We generate a synthetic dataset of pairwise comparisons annotated with a choice label and a response time in two steps:
query generation and labeling.

\subsubsection{Query generation}%
\label{app:synthetic_details:synthetic_data:query_generation}
This process generates pairs of objects for comparison.
It can be further subdivided into \emph{object generation} and \emph{pairing}.
We start by generating random objects, which are represented as vectors of 20 attributes each sampled from a uniform distribution over $[0, 1]$.
The number of features and the dimensionality of the feature space can be adjusted to control the complexity of the dataset.
We then generate queries from these objects by randomly sampling pairs of objects $(a_i, b_i)$.

\subsubsection{Labeling}%
\label{app:synthetic_details:synthetic_data:labeling}
We start by generating a random utility function $u$, modelled as a neural network with random weights.
The random utility function simulates a complex but learnable relationship between object attributes and utility and uses two hidden layers (one more than the reward predictor) with 64 neurons each and ReLU nonlinearity.
It takes an object $x$ as input and outputs a scalar utility value $u(x)$ that is a function of the object's attributes and is guaranteed to be learnable by a neural network.
The architecture of the utility function can be adjusted to control the complexity of the dataset.
We then synthesize preference labels $p_i$ and response times $\tau_i$ based on the utility function.
To test the robustness of \rr{} to different preference and response time models, we consider two different models for generating the preference labels and response times, one generating preferences and response times independently, and one generating them jointly.
We discuss the models in more detail below.

\textbf{Independent labeling with Bradley--Terry and Log-Normal.}
We generate preference labels $p_i$ based on the Bradley--Terry model and response times $\tau_i$ from a log-normal distribution with a hyperbolic link function.
Concretely, we sample $p_i$ from a Bernoulli distribution with parameter $p = \sigma((u(a_i) - u(b_i))/\theta)$, where $\sigma$ is the sigmoid function and $\theta$ is the choice temperature.
For response times, we first compute the mean using a hyperbolic link function (matching \cref{eq:link_baseline} and repeated here for convenience)
\begin{equation*}
    \mathrm{link}(\Delta u; \mathrm{rt}_{\min}, \mathrm{rt}_{\max})
    \coloneqq
    \mathrm{rt}_{\min} + \frac{\mathrm{rt}_{\max} - \mathrm{rt}_{\min}}{\abs{\Delta u} + 1}
    \text.
\end{equation*}
with $\mathrm{rt}_{\min} = 0$ and $\mathrm{rt}_{\max} = 10$.
Then, we sample $\tau_i$ from a log-normal distribution with this mean and a fixed standard deviation parameter.
The log-normal distribution ensures that response times are always positive while allowing for a right-skewed distribution commonly observed in human response time data.
Note that in this model, the response time depends only on the absolute utility difference, so `correct' and `incorrect' responses are not differentiated in their response times.

\textbf{Partition-based Variability.}
To better simulate real-world data collection scenarios, we introduce an additional source of systematic variability in response times through a partition-based approach.
Each comparison is randomly assigned to one of 5 partitions, representing different data sources, annotators, or experimental conditions.

Within each partition, all response times are scaled by a partition-specific multiplier.
These multipliers are sampled from a normal distribution with mean $1.0$ and standard deviation $3.0$, then converted to absolute values to ensure they remain positive.
This creates systematic variation in response times between partitions while maintaining the same underlying preference patterns, simulating scenarios where different data sources might exhibit different response time characteristics.

\textbf{Joint labeling with Drift-Diffusion.}
We jointly generate preference labels $p_i$ and response times $\tau_i$ based on the drift-diffusion model.
We derive the drift rate from the utility difference, i.e., the choice experiment comparing $a_i$ and $b_i$ is modeled as a drift-diffusion process with drift rate $v = u(a_i) - u(b_i)$.
Since all items are generated from the same distribution and passed through the same utility function, the drift rate has an expected value of $0$.
Additionally, we normalize the standard deviation to $1$.
All other parameters, such as the decision threshold, non-decision time, and noise standard deviation, are fixed to constant values as shown in \cref{tab:synthetic-parameters}.
We assume symmetric decision thresholds, which intuitively control how cautious the decision-maker is.
We chose these parameters to ensure a reasonable resulting joint distribution of response times and preference labels given the synthetic utilities and normalized drift rates.

These datasets can be parameterized, e.g., by re-scaling utilities and their differences, resulting in different levels and types of noise.
The parameters were manually set to values resulting in reasonable noise levels, which are reported in this section.
\begin{figure}
\begin{subfigure}{0.32\textwidth}
    \centering
    \includegraphics[width=\textwidth]{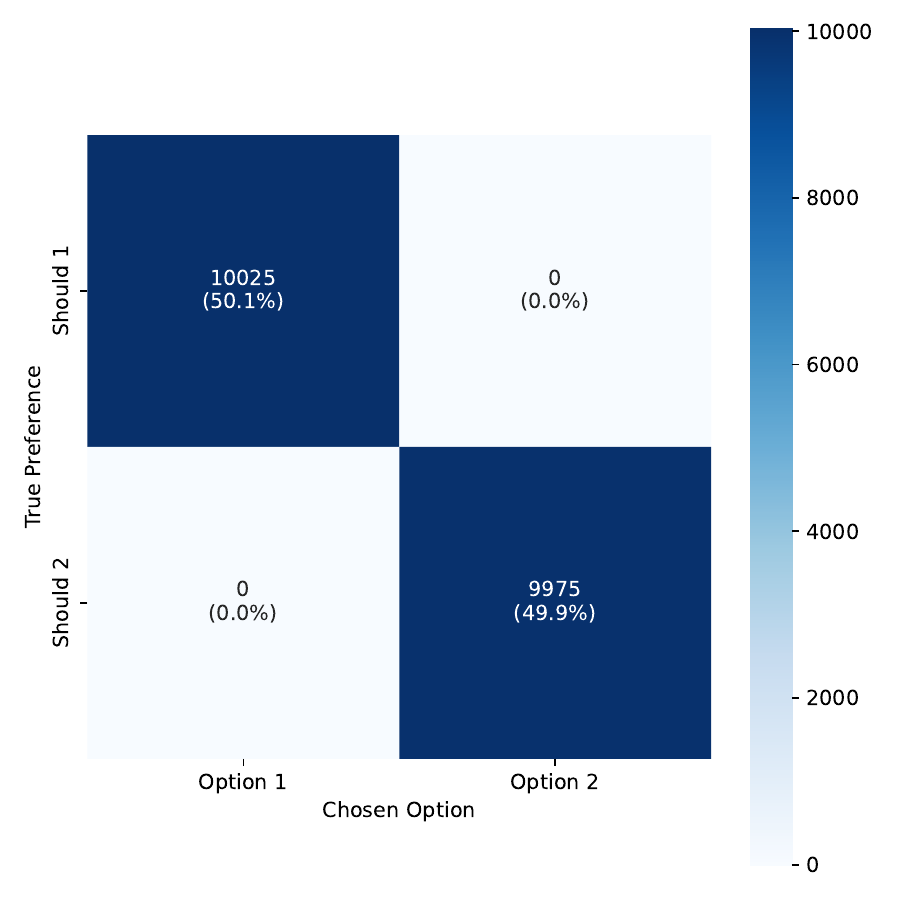}
    \caption{Deterministic}
\end{subfigure}
\begin{subfigure}{0.32\textwidth}
    \centering
    \includegraphics[width=\textwidth]{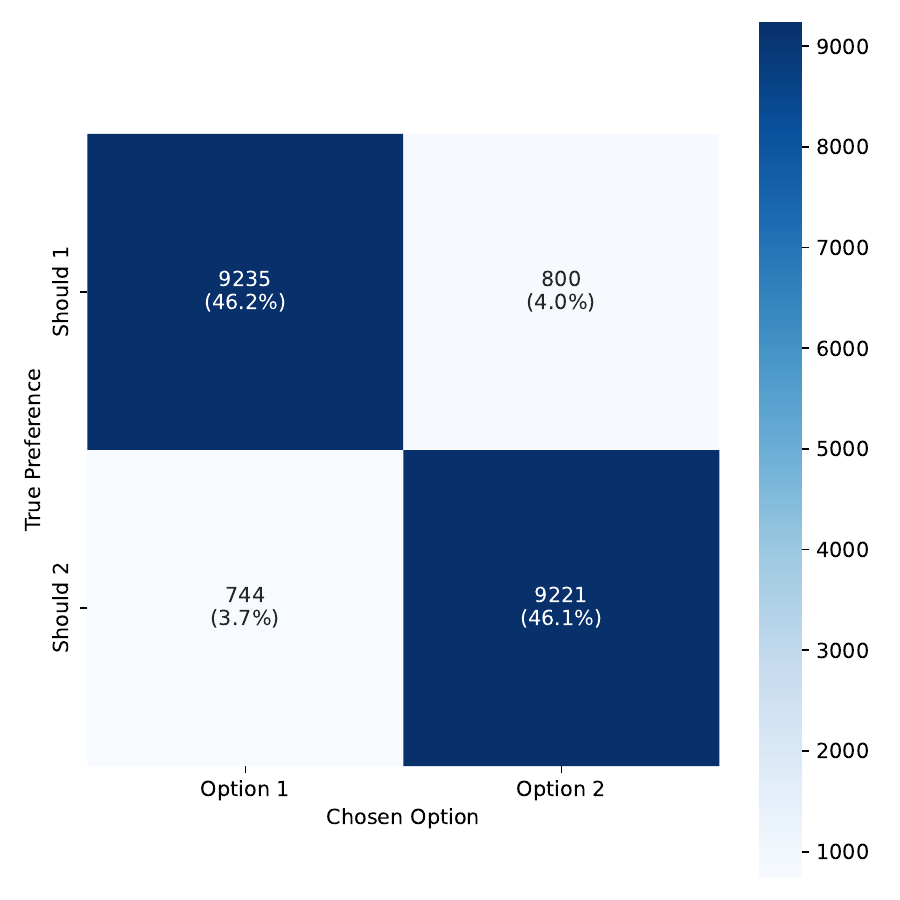}
    \caption{Stochastic}
\end{subfigure}
\begin{subfigure}{0.32\textwidth}
    \centering
    \includegraphics[width=\textwidth]{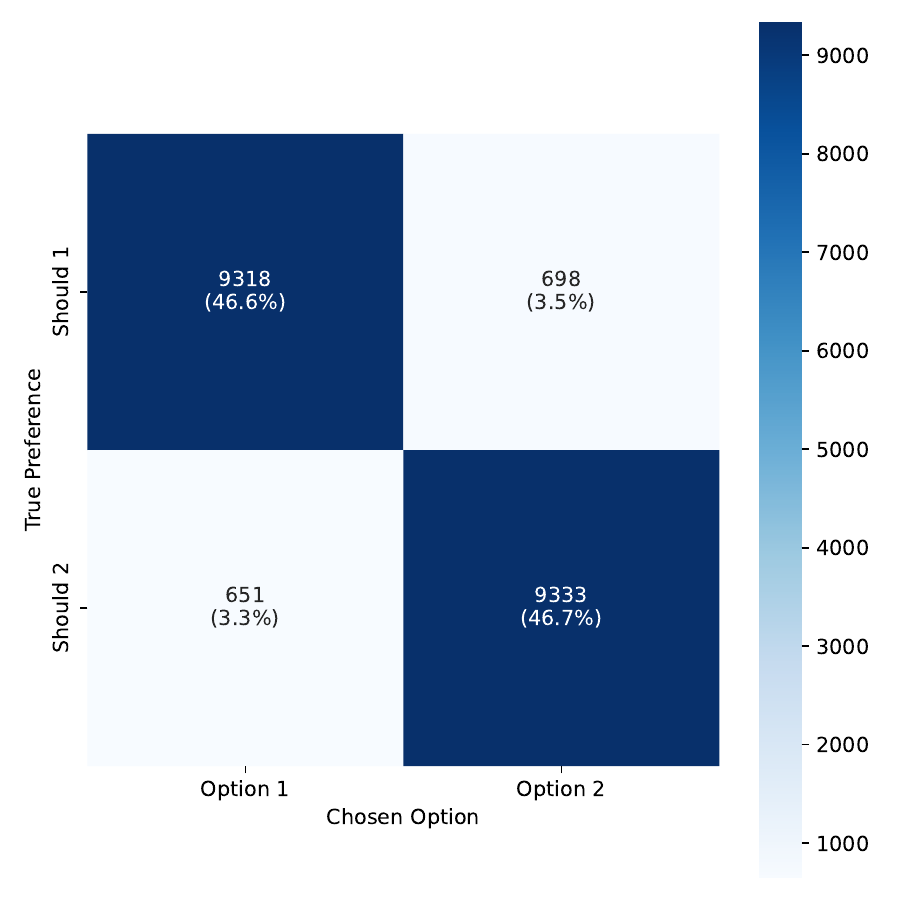}
    \caption{DDM}
\end{subfigure}
\caption{Confusion matrices of the synthetic datasets.}\label{fig:confusion_matrices}
\end{figure}

\begin{figure}
\begin{subfigure}[t]{\textwidth}
    \centering
    \includegraphics[width=0.95\textwidth]{./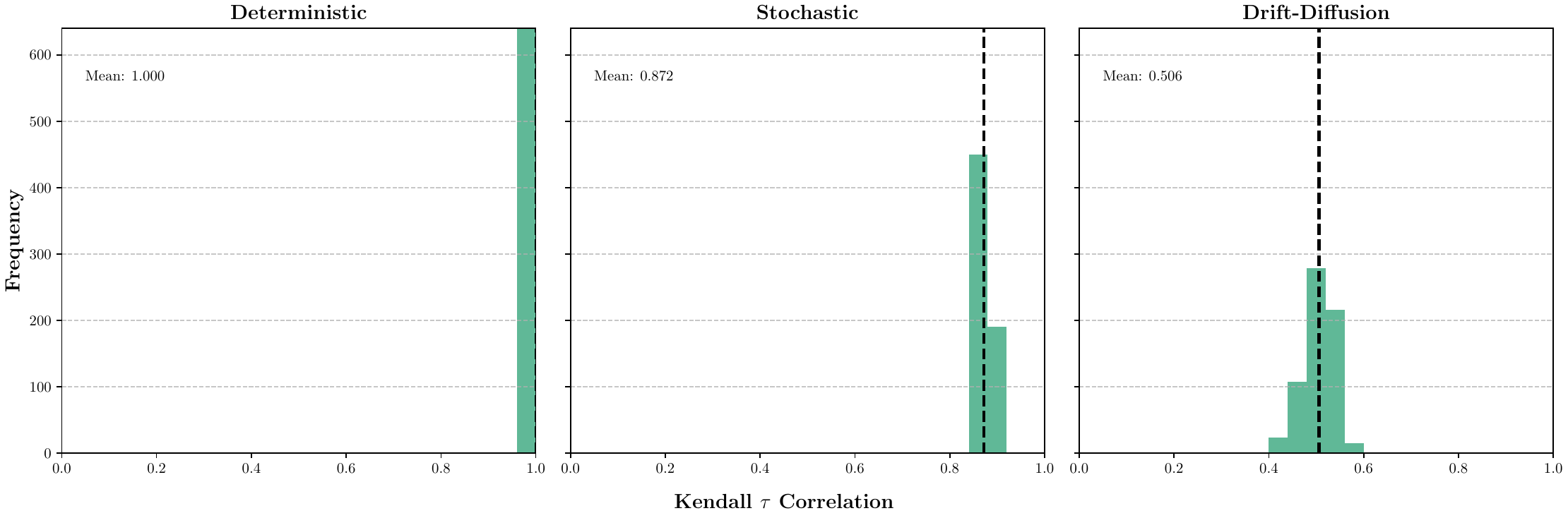}
    \caption{No variability}
\end{subfigure}
\begin{subfigure}[t]{\textwidth}
    \centering
    \includegraphics[width=0.95\textwidth]{./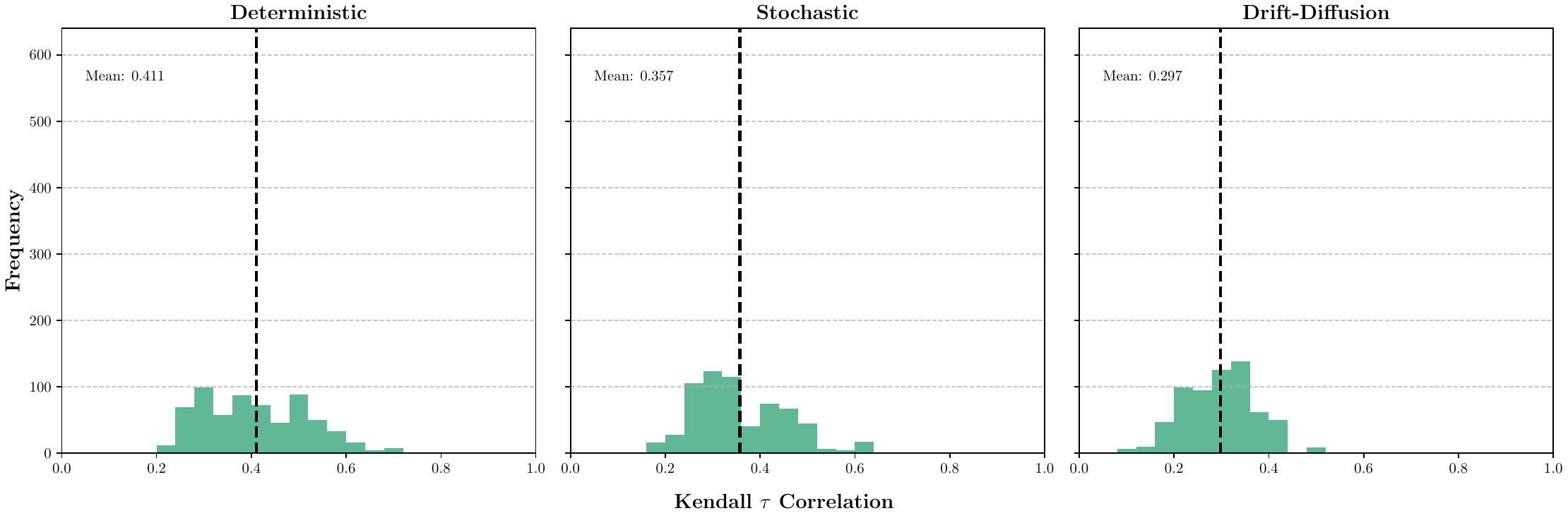}
    \caption{With variability}
\end{subfigure}
\caption{Histograms of the Kendall-tau correlation between the strength ranking implied by response times and the ground-truth strength ranking on synthetic datasets.}\label{fig:kendall_tau}
\end{figure}

\subsubsection{Dataset analysis}
To characterize the synthetic datasets, we evaluated the noise levels affecting both ordinal preferences and response time signals (\cref{fig:confusion_matrices,fig:kendall_tau}).
The confusion matrices (\cref{fig:confusion_matrices}) confirm the expected ordinal preference accuracy:
perfect accuracy for the deterministic dataset, and approximately $8\%$ symmetric error rates for both the Stochastic and DDM datasets due to their probabilistic choice mechanisms.

\Cref{fig:kendall_tau} assesses the fidelity of response time as an indicator of preference strength.
It displays the distribution of Kendall's tau correlations (across \numTrials{} datasets) between the ranking derived from simulated RTs and the ranking derived from true utility differences.
Kendall's tau is used here as it quantifies the correlation of ranks (the \emph{order}), which is the specific information \rr{} aims to leverage from RTs.
The results highlight several points:
(1) Under ideal (deterministic, no variability) conditions, the RT ranking perfectly matches the true strength ranking (by design).
(2) Introducing inter-stratum variability weakens this correlation, demonstrating the challenge of using raw RTs globally, assuming such variability is present in real-world datasets.
(3) The inherent noise in the Stochastic and DDM generation processes further reduces the correlation, with the Stochastic process preserving the rank order more reliably than the DDM.
(4) Inter-stratum variability consistently degrades the RT-strength signal across dataset types.

\subsubsection{Assumptions}
Our method makes minimal assumptions about human behavior.
It requires only that (1) preferences approximately follow some utility function (standard Bradley-Terry assumption) and (2) we can approximately rank preference groups by strength.
The only assumption about response times is an inverse monotonic relationship with preference strength.
Our synthetic data, while far from perfectly realistic, reflects these properties:
preferences generated from utility functions (extensively validated through BT model usage) and response times following inverse monotonic relationships (supported by psychology literature \citep{milosavljevic2010drift}).
Going beyond this, one of our settings follows the drift-diffusion model, which has been extensively validated in psychological research \citep{ratcliff2008diffusion}, providing an approximation of response time behavior that goes beyond the monotonic relationship.
We choose base utility functions randomly, which while not aligned with specific human preferences, is reasonable since the model should learn any utility function matching our two key assumptions.

\subsection{Training}

We run \numTrials{} trials per condition.
Each trial uses an independently generated dataset consisting of 50 training examples, 200 test examples, and 20 features per example.
We optimize a small feed-forward neural network with a single hidden layer of size 64 and ReLU activation.
Due to the small dataset size, we use full-batch gradient descent, processing all training examples in each update.
Each model is trained for 200 gradient steps (AdamW optimizer, no early stopping, learning rate $0.001$, weight decay $0.01$).
This avoids batch packing considerations described in \cref{app:ranking_construction:batch_packing}, which apply only to larger-scale settings.
We optimize the standard cross-entropy loss for the BT baseline, MSE for \rtregression{}, and negative log-likelihood for \rr{}.
Each loss is normalized by dividing by the batch size (i.e., the full training set size).

\subsection{Additional results}%
\label{app:synthetic_details:additional_results}

\begin{figure}[tbp]
    \centering
    \includegraphics[width=\textwidth]{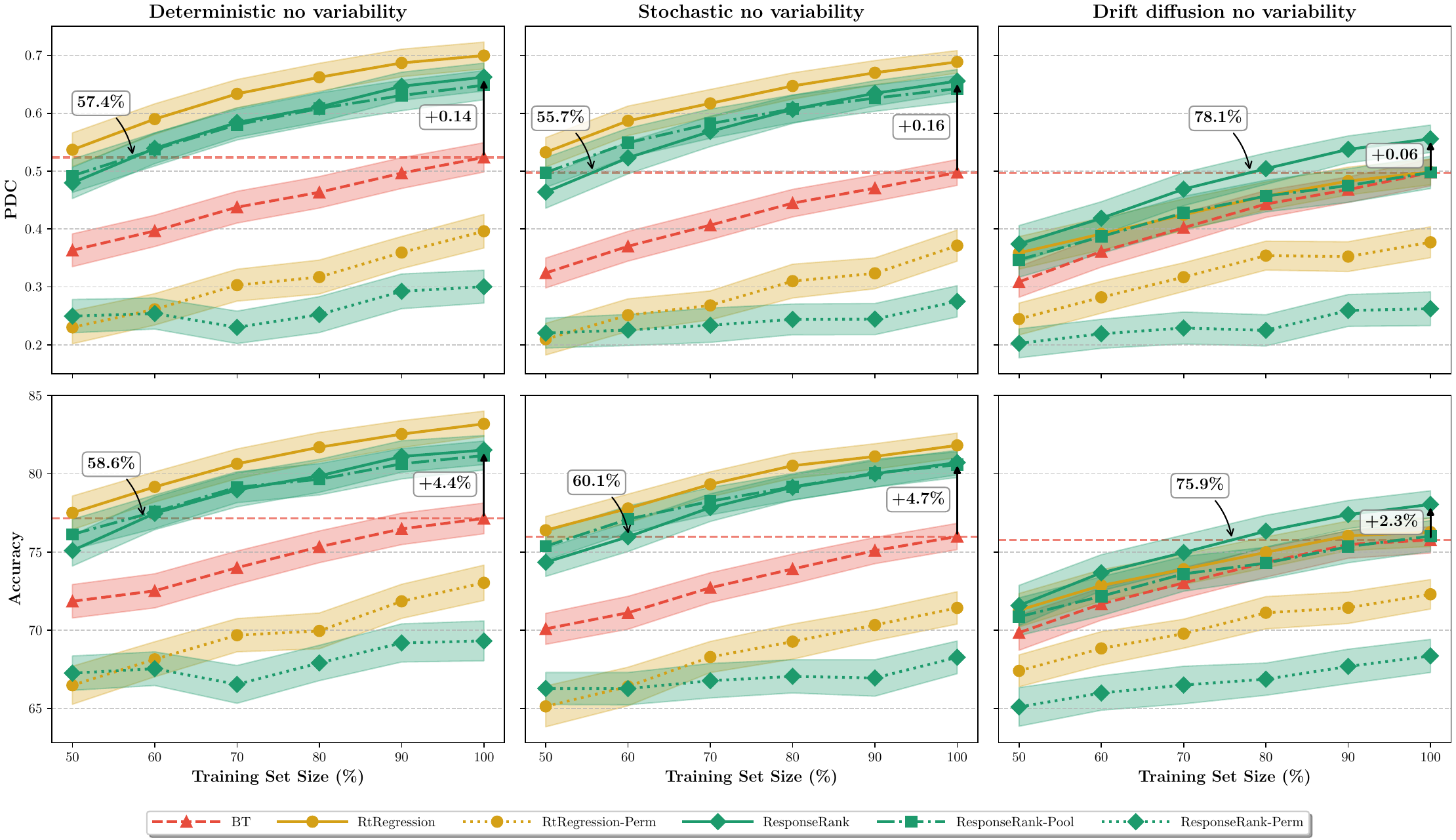}
    \caption{%
        \textbf{Synthetic results without RT variability across strata.}
        Comparison of PDC (top) and accuracy (bottom) on Deterministic, Drift-Diffusion (DDM), and Stochastic datasets as a function of dataset size ($\numTrials{}$ runs; shading indicates $95\%$ CI assuming normality).
        Here we ablate the effect of inter-stratum variability by omitting stratum-RT multipliers.
        BT baseline performance with the full dataset is a dashed red line; \rr{}'s interpolated breakeven point is marked by an arrow.
        Higher PDC indicates better learned preference strength (utility distances); higher accuracy indicates better ordinal preference predictions.
    }\label{fig:metrics_vs_datasize_no_var}
\end{figure}

\begin{figure}[tbp]
    \centering
    \includegraphics[width=\textwidth]{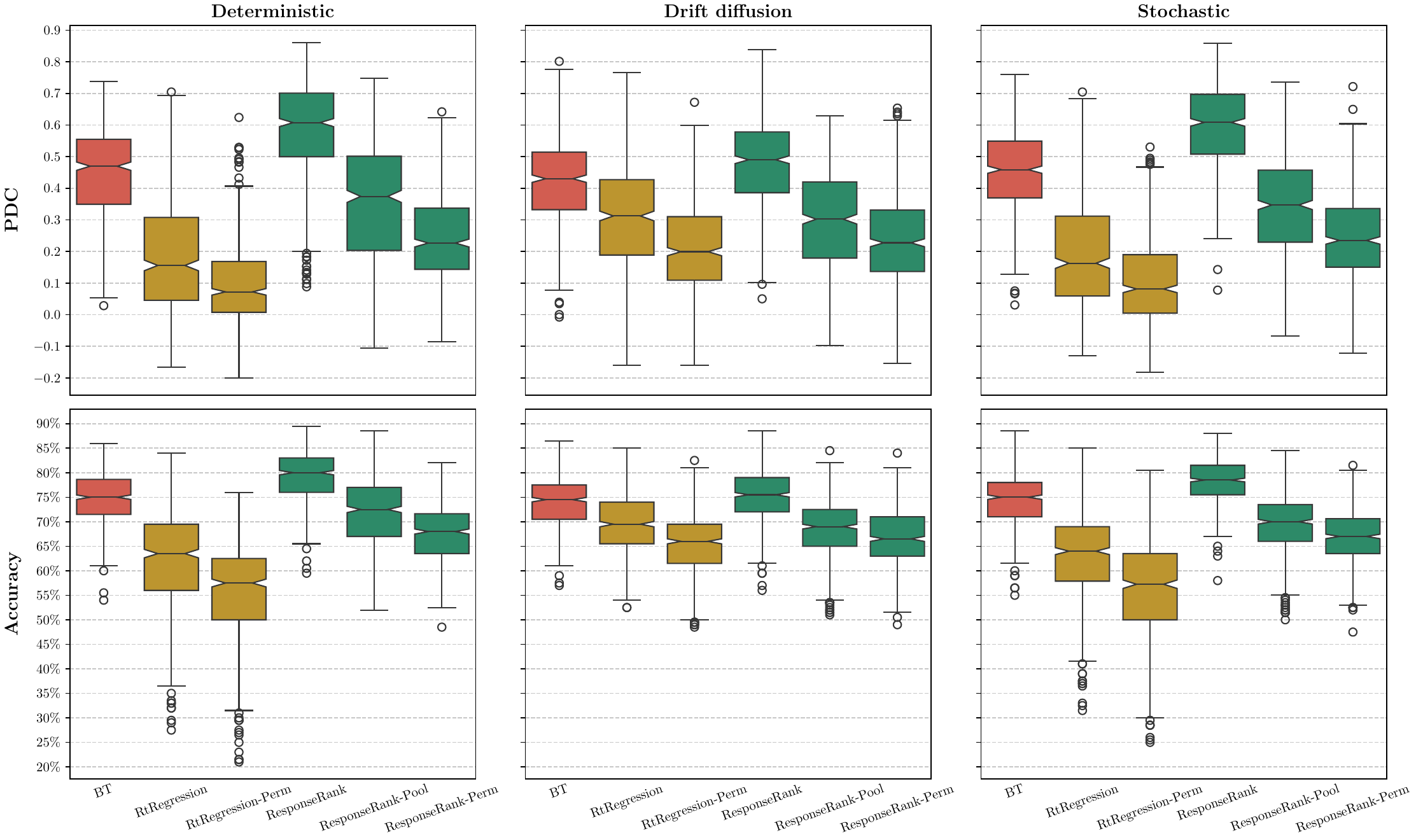}
    \caption{%
        Performance comparison on synthetic datasets when \textbf{inter-stratum RT variability is introduced}.
        Boxplots show PDC (top) and accuracy (bottom) across \numTrials{} runs for deterministic, drift-diffusion, and stochastic data.
        This condition simulates diverse annotator speeds or task complexities affecting RTs across different strata.
        Higher PDC indicates better learned preference strength (utility distances); higher accuracy indicates better ordinal preference predictions.
    }\label{fig:metrics_comparison_boxplot_with_var}
\end{figure}

\begin{table}[tbp]
\centering
\caption{%
PDC performance across datasets with response time variability over \numTrials{} trials.
Format: mean $\pm$ std [$\pm$ 95\% CI half-width].
Best values per column are bolded.
}%
\label{tab:pdc_with_var}
\resizebox{\textwidth}{!}{
\begin{tabular}{lrrrrrr}
\toprule
Dataset / Learner & 50\% & 60\% & 70\% & 80\% & 90\% & 100\% \\
\midrule
\multicolumn{7}{l}{\textbf{Deterministic}} \\
BT & $0.34 \pm 0.15$ [$\pm$ 0.03] & $0.39 \pm 0.14$ [$\pm$ 0.03] & $0.44 \pm 0.13$ [$\pm$ 0.03] & $0.48 \pm 0.12$ [$\pm$ 0.02] & $0.51 \pm 0.11$ [$\pm$ 0.02] & $0.53 \pm 0.10$ [$\pm$ 0.02] \\
RtRegression & $0.13 \pm 0.17$ [$\pm$ 0.03] & $0.16 \pm 0.18$ [$\pm$ 0.04] & $0.17 \pm 0.17$ [$\pm$ 0.03] & $0.20 \pm 0.17$ [$\pm$ 0.03] & $0.22 \pm 0.18$ [$\pm$ 0.04] & $0.23 \pm 0.19$ [$\pm$ 0.04] \\
RtRegression-Perm & $0.08 \pm 0.12$ [$\pm$ 0.02] & $0.08 \pm 0.12$ [$\pm$ 0.02] & $0.09 \pm 0.12$ [$\pm$ 0.02] & $0.10 \pm 0.12$ [$\pm$ 0.02] & $0.11 \pm 0.13$ [$\pm$ 0.03] & $0.12 \pm 0.14$ [$\pm$ 0.03] \\
ResponseRank & $\mathbf{0.47} \pm 0.14$ [$\pm$ 0.03] & $\mathbf{0.54} \pm 0.14$ [$\pm$ 0.03] & $\mathbf{0.58} \pm 0.14$ [$\pm$ 0.03] & $\mathbf{0.62} \pm 0.13$ [$\pm$ 0.03] & $\mathbf{0.65} \pm 0.13$ [$\pm$ 0.03] & $\mathbf{0.67} \pm 0.12$ [$\pm$ 0.02] \\
ResponseRank-Pool & $0.30 \pm 0.17$ [$\pm$ 0.03] & $0.32 \pm 0.18$ [$\pm$ 0.04] & $0.34 \pm 0.18$ [$\pm$ 0.04] & $0.37 \pm 0.18$ [$\pm$ 0.04] & $0.39 \pm 0.19$ [$\pm$ 0.04] & $0.42 \pm 0.19$ [$\pm$ 0.04] \\
ResponseRank-Perm & $0.21 \pm 0.14$ [$\pm$ 0.03] & $0.22 \pm 0.13$ [$\pm$ 0.03] & $0.25 \pm 0.15$ [$\pm$ 0.03] & $0.24 \pm 0.13$ [$\pm$ 0.03] & $0.26 \pm 0.16$ [$\pm$ 0.03] & $0.27 \pm 0.14$ [$\pm$ 0.03] \\
\midrule
\multicolumn{7}{l}{\textbf{Drift diffusion}} \\
BT & $0.33 \pm 0.13$ [$\pm$ 0.03] & $0.38 \pm 0.12$ [$\pm$ 0.02] & $0.42 \pm 0.13$ [$\pm$ 0.03] & $0.44 \pm 0.13$ [$\pm$ 0.03] & $0.47 \pm 0.13$ [$\pm$ 0.03] & $0.50 \pm 0.13$ [$\pm$ 0.03] \\
RtRegression & $0.24 \pm 0.16$ [$\pm$ 0.03] & $0.27 \pm 0.15$ [$\pm$ 0.03] & $0.29 \pm 0.15$ [$\pm$ 0.03] & $0.33 \pm 0.14$ [$\pm$ 0.03] & $0.35 \pm 0.15$ [$\pm$ 0.03] & $0.37 \pm 0.16$ [$\pm$ 0.03] \\
RtRegression-Perm & $0.16 \pm 0.12$ [$\pm$ 0.02] & $0.20 \pm 0.14$ [$\pm$ 0.03] & $0.20 \pm 0.13$ [$\pm$ 0.03] & $0.22 \pm 0.14$ [$\pm$ 0.03] & $0.23 \pm 0.14$ [$\pm$ 0.03] & $0.25 \pm 0.15$ [$\pm$ 0.03] \\
ResponseRank & $\mathbf{0.38} \pm 0.14$ [$\pm$ 0.03] & $\mathbf{0.42} \pm 0.13$ [$\pm$ 0.03] & $\mathbf{0.47} \pm 0.13$ [$\pm$ 0.03] & $\mathbf{0.52} \pm 0.12$ [$\pm$ 0.02] & $\mathbf{0.53} \pm 0.13$ [$\pm$ 0.03] & $\mathbf{0.56} \pm 0.12$ [$\pm$ 0.02] \\
ResponseRank-Pool & $0.27 \pm 0.14$ [$\pm$ 0.03] & $0.28 \pm 0.14$ [$\pm$ 0.03] & $0.29 \pm 0.16$ [$\pm$ 0.03] & $0.31 \pm 0.16$ [$\pm$ 0.03] & $0.32 \pm 0.15$ [$\pm$ 0.03] & $0.33 \pm 0.15$ [$\pm$ 0.03] \\
ResponseRank-Perm & $0.21 \pm 0.13$ [$\pm$ 0.03] & $0.22 \pm 0.14$ [$\pm$ 0.03] & $0.23 \pm 0.16$ [$\pm$ 0.03] & $0.25 \pm 0.13$ [$\pm$ 0.02] & $0.25 \pm 0.15$ [$\pm$ 0.03] & $0.23 \pm 0.15$ [$\pm$ 0.03] \\
\midrule
\multicolumn{7}{l}{\textbf{Stochastic}} \\
BT & $0.36 \pm 0.13$ [$\pm$ 0.03] & $0.41 \pm 0.13$ [$\pm$ 0.03] & $0.44 \pm 0.13$ [$\pm$ 0.03] & $0.49 \pm 0.12$ [$\pm$ 0.02] & $0.50 \pm 0.12$ [$\pm$ 0.02] & $0.53 \pm 0.12$ [$\pm$ 0.02] \\
RtRegression & $0.13 \pm 0.15$ [$\pm$ 0.03] & $0.16 \pm 0.15$ [$\pm$ 0.03] & $0.18 \pm 0.16$ [$\pm$ 0.03] & $0.21 \pm 0.17$ [$\pm$ 0.03] & $0.23 \pm 0.18$ [$\pm$ 0.04] & $0.24 \pm 0.19$ [$\pm$ 0.04] \\
RtRegression-Perm & $0.09 \pm 0.12$ [$\pm$ 0.02] & $0.08 \pm 0.13$ [$\pm$ 0.02] & $0.10 \pm 0.14$ [$\pm$ 0.03] & $0.11 \pm 0.14$ [$\pm$ 0.03] & $0.12 \pm 0.15$ [$\pm$ 0.03] & $0.13 \pm 0.14$ [$\pm$ 0.03] \\
ResponseRank & $\mathbf{0.48} \pm 0.14$ [$\pm$ 0.03] & $\mathbf{0.54} \pm 0.13$ [$\pm$ 0.03] & $\mathbf{0.59} \pm 0.12$ [$\pm$ 0.02] & $\mathbf{0.63} \pm 0.10$ [$\pm$ 0.02] & $\mathbf{0.65} \pm 0.11$ [$\pm$ 0.02] & $\mathbf{0.68} \pm 0.09$ [$\pm$ 0.02] \\
ResponseRank-Pool & $0.29 \pm 0.16$ [$\pm$ 0.03] & $0.31 \pm 0.16$ [$\pm$ 0.03] & $0.33 \pm 0.16$ [$\pm$ 0.03] & $0.36 \pm 0.17$ [$\pm$ 0.03] & $0.37 \pm 0.17$ [$\pm$ 0.03] & $0.38 \pm 0.16$ [$\pm$ 0.03] \\
ResponseRank-Perm & $0.23 \pm 0.14$ [$\pm$ 0.03] & $0.22 \pm 0.13$ [$\pm$ 0.03] & $0.26 \pm 0.13$ [$\pm$ 0.03] & $0.25 \pm 0.14$ [$\pm$ 0.03] & $0.26 \pm 0.13$ [$\pm$ 0.03] & $0.26 \pm 0.13$ [$\pm$ 0.03] \\
\bottomrule
\end{tabular}

}
\end{table}

\begin{table}[tbp]
\centering
\caption{%
Accuracy performance across datasets with response time variability over \numTrials{} trials.
Format: mean $\pm$ std [$\pm$ 95\% CI half-width].
Best values per column are bolded.
}%
\label{tab:choice_accuracy_with_var}
\resizebox{\textwidth}{!}{
\begin{tabular}{lrrrrrr}
\toprule
Dataset / Learner & 50\% & 60\% & 70\% & 80\% & 90\% & 100\% \\
\midrule
\multicolumn{7}{l}{\textbf{Deterministic}} \\
BT & $70.8 \pm 5.5\%$ [$\pm$ 1.1] & $73.3 \pm 5.0\%$ [$\pm$ 1.0] & $74.2 \pm 4.4\%$ [$\pm$ 0.9] & $75.7 \pm 4.4\%$ [$\pm$ 0.9] & $77.4 \pm 4.0\%$ [$\pm$ 0.8] & $77.7 \pm 4.2\%$ [$\pm$ 0.8] \\
RtRegression & $60.4 \pm 9.5\%$ [$\pm$ 1.9] & $61.1 \pm 9.9\%$ [$\pm$ 2.0] & $62.2 \pm 9.7\%$ [$\pm$ 1.9] & $62.8 \pm 10.6\%$ [$\pm$ 2.1] & $63.3 \pm 11.0\%$ [$\pm$ 2.2] & $64.2 \pm 11.0\%$ [$\pm$ 2.2] \\
RtRegression-Perm & $54.0 \pm 9.8\%$ [$\pm$ 2.0] & $55.4 \pm 9.0\%$ [$\pm$ 1.8] & $55.2 \pm 10.6\%$ [$\pm$ 2.1] & $56.0 \pm 10.1\%$ [$\pm$ 2.0] & $56.2 \pm 11.0\%$ [$\pm$ 2.2] & $56.5 \pm 10.9\%$ [$\pm$ 2.2] \\
ResponseRank & $\mathbf{75.2} \pm 5.3\%$ [$\pm$ 1.0] & $\mathbf{77.5} \pm 4.8\%$ [$\pm$ 1.0] & $\mathbf{78.9} \pm 5.0\%$ [$\pm$ 1.0] & $\mathbf{80.5} \pm 4.6\%$ [$\pm$ 0.9] & $\mathbf{81.5} \pm 4.7\%$ [$\pm$ 0.9] & $\mathbf{82.1} \pm 4.3\%$ [$\pm$ 0.9] \\
ResponseRank-Pool & $69.5 \pm 6.9\%$ [$\pm$ 1.4] & $70.6 \pm 7.0\%$ [$\pm$ 1.4] & $71.5 \pm 7.0\%$ [$\pm$ 1.4] & $71.6 \pm 7.1\%$ [$\pm$ 1.4] & $72.6 \pm 7.4\%$ [$\pm$ 1.5] & $73.3 \pm 7.3\%$ [$\pm$ 1.4] \\
ResponseRank-Perm & $65.9 \pm 6.1\%$ [$\pm$ 1.2] & $67.2 \pm 5.8\%$ [$\pm$ 1.1] & $67.9 \pm 6.1\%$ [$\pm$ 1.2] & $68.2 \pm 5.0\%$ [$\pm$ 1.0] & $68.0 \pm 5.5\%$ [$\pm$ 1.1] & $69.0 \pm 5.4\%$ [$\pm$ 1.1] \\
\midrule
\multicolumn{7}{l}{\textbf{Drift diffusion}} \\
BT & $70.9 \pm 5.0\%$ [$\pm$ 1.0] & $72.2 \pm 4.6\%$ [$\pm$ 0.9] & $73.6 \pm 5.0\%$ [$\pm$ 1.0] & $74.8 \pm 4.4\%$ [$\pm$ 0.9] & $75.6 \pm 4.4\%$ [$\pm$ 0.9] & $76.3 \pm 4.7\%$ [$\pm$ 0.9] \\
RtRegression & $67.1 \pm 6.6\%$ [$\pm$ 1.3] & $68.0 \pm 5.8\%$ [$\pm$ 1.2] & $68.8 \pm 6.1\%$ [$\pm$ 1.2] & $69.8 \pm 6.0\%$ [$\pm$ 1.2] & $70.9 \pm 5.8\%$ [$\pm$ 1.1] & $71.4 \pm 5.9\%$ [$\pm$ 1.2] \\
RtRegression-Perm & $63.7 \pm 6.4\%$ [$\pm$ 1.3] & $65.5 \pm 6.2\%$ [$\pm$ 1.2] & $64.9 \pm 6.0\%$ [$\pm$ 1.2] & $66.5 \pm 6.1\%$ [$\pm$ 1.2] & $66.5 \pm 5.7\%$ [$\pm$ 1.1] & $67.1 \pm 5.8\%$ [$\pm$ 1.2] \\
ResponseRank & $\mathbf{72.2} \pm 5.5\%$ [$\pm$ 1.1] & $\mathbf{73.5} \pm 4.9\%$ [$\pm$ 1.0] & $\mathbf{74.8} \pm 4.8\%$ [$\pm$ 1.0] & $\mathbf{76.1} \pm 4.7\%$ [$\pm$ 0.9] & $\mathbf{76.9} \pm 4.3\%$ [$\pm$ 0.9] & $\mathbf{77.7} \pm 4.5\%$ [$\pm$ 0.9] \\
ResponseRank-Pool & $67.8 \pm 5.6\%$ [$\pm$ 1.1] & $68.1 \pm 6.1\%$ [$\pm$ 1.2] & $67.9 \pm 6.1\%$ [$\pm$ 1.2] & $68.9 \pm 5.8\%$ [$\pm$ 1.1] & $69.1 \pm 5.9\%$ [$\pm$ 1.2] & $69.9 \pm 5.8\%$ [$\pm$ 1.2] \\
ResponseRank-Perm & $66.3 \pm 5.6\%$ [$\pm$ 1.1] & $66.3 \pm 5.8\%$ [$\pm$ 1.2] & $66.2 \pm 5.9\%$ [$\pm$ 1.2] & $67.8 \pm 5.7\%$ [$\pm$ 1.1] & $67.4 \pm 5.5\%$ [$\pm$ 1.1] & $67.2 \pm 5.8\%$ [$\pm$ 1.2] \\
\midrule
\multicolumn{7}{l}{\textbf{Stochastic}} \\
BT & $71.6 \pm 5.4\%$ [$\pm$ 1.1] & $73.0 \pm 5.4\%$ [$\pm$ 1.1] & $73.9 \pm 5.2\%$ [$\pm$ 1.0] & $75.3 \pm 4.7\%$ [$\pm$ 0.9] & $75.8 \pm 4.4\%$ [$\pm$ 0.9] & $76.9 \pm 4.2\%$ [$\pm$ 0.8] \\
RtRegression & $60.0 \pm 9.1\%$ [$\pm$ 1.8] & $61.3 \pm 9.1\%$ [$\pm$ 1.8] & $62.6 \pm 8.7\%$ [$\pm$ 1.7] & $63.7 \pm 9.0\%$ [$\pm$ 1.8] & $64.1 \pm 9.4\%$ [$\pm$ 1.9] & $64.8 \pm 9.4\%$ [$\pm$ 1.9] \\
RtRegression-Perm & $54.3 \pm 9.5\%$ [$\pm$ 1.9] & $54.9 \pm 9.8\%$ [$\pm$ 1.9] & $56.3 \pm 10.0\%$ [$\pm$ 2.0] & $56.9 \pm 10.4\%$ [$\pm$ 2.1] & $57.2 \pm 11.2\%$ [$\pm$ 2.2] & $57.9 \pm 11.1\%$ [$\pm$ 2.2] \\
ResponseRank & $\mathbf{74.9} \pm 4.6\%$ [$\pm$ 0.9] & $\mathbf{76.8} \pm 4.3\%$ [$\pm$ 0.8] & $\mathbf{78.3} \pm 4.0\%$ [$\pm$ 0.8] & $\mathbf{79.1} \pm 3.6\%$ [$\pm$ 0.7] & $\mathbf{80.0} \pm 3.7\%$ [$\pm$ 0.7] & $\mathbf{80.8} \pm 3.6\%$ [$\pm$ 0.7] \\
ResponseRank-Pool & $67.9 \pm 6.2\%$ [$\pm$ 1.2] & $68.8 \pm 6.2\%$ [$\pm$ 1.2] & $69.8 \pm 5.6\%$ [$\pm$ 1.1] & $70.6 \pm 6.5\%$ [$\pm$ 1.3] & $70.7 \pm 6.4\%$ [$\pm$ 1.3] & $71.0 \pm 6.2\%$ [$\pm$ 1.2] \\
ResponseRank-Perm & $65.9 \pm 6.3\%$ [$\pm$ 1.2] & $66.5 \pm 5.4\%$ [$\pm$ 1.1] & $66.7 \pm 5.1\%$ [$\pm$ 1.0] & $67.9 \pm 5.3\%$ [$\pm$ 1.0] & $67.3 \pm 5.5\%$ [$\pm$ 1.1] & $67.3 \pm 5.9\%$ [$\pm$ 1.2] \\
\bottomrule
\end{tabular}

}
\end{table}

\begin{figure}[tbp]
    \centering
    \includegraphics[width=\textwidth]{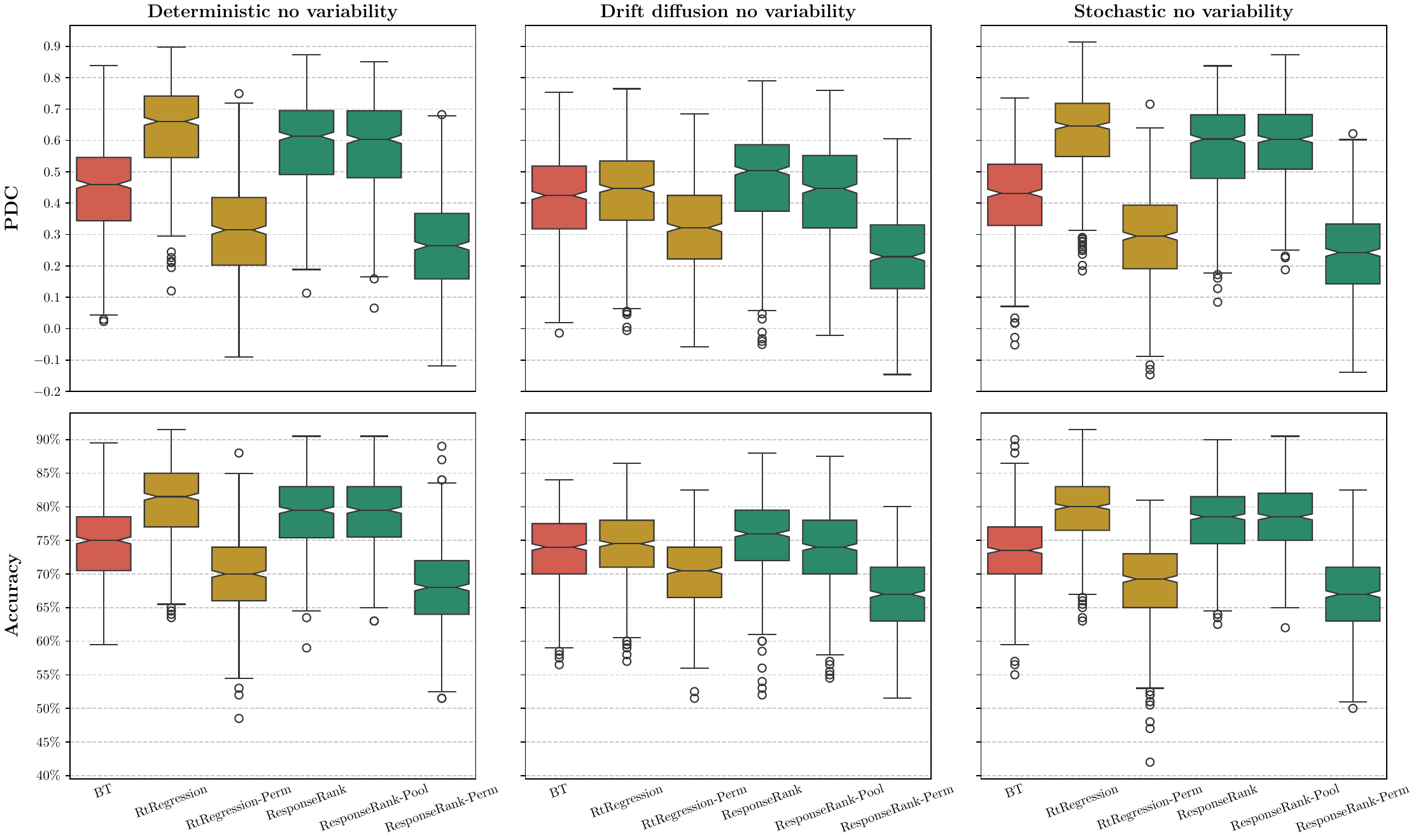}
    \caption{%
        Performance comparison on synthetic datasets when \textbf{inter-stratum RT variability is absent}.
        Boxplots show PDC (top) and accuracy (bottom) across \numTrials{} runs for deterministic, drift-diffusion, and stochastic data.
        In this condition, all strata have identical RT patterns (no inter-stratum variability).
        Higher PDC indicates better learned preference strength (utility distances); higher accuracy indicates better ordinal preference predictions.
    }\label{fig:metrics_comparison_boxplot_no_var}
\end{figure}

\begin{table}[tbp]
\centering
\caption{PDC performance across datasets without response time variability. Values are reported as mean $\pm$ standard deviation over \numTrials{} trials.}%
\label{tab:pdc_no_var}
\resizebox{\textwidth}{!}{
\begin{tabular}{lrrrrrr}
\toprule
Dataset / Learner & 50\% & 60\% & 70\% & 80\% & 90\% & 100\% \\
\midrule
\multicolumn{7}{l}{\textbf{Deterministic no variability}} \\
BT & $0.36 \pm 0.14$ [$\pm$ 0.03] & $0.40 \pm 0.14$ [$\pm$ 0.03] & $0.44 \pm 0.14$ [$\pm$ 0.03] & $0.46 \pm 0.14$ [$\pm$ 0.03] & $0.50 \pm 0.14$ [$\pm$ 0.03] & $0.52 \pm 0.13$ [$\pm$ 0.03] \\
RtRegression & $\mathbf{0.54} \pm 0.15$ [$\pm$ 0.03] & $\mathbf{0.59} \pm 0.13$ [$\pm$ 0.03] & $\mathbf{0.63} \pm 0.13$ [$\pm$ 0.03] & $\mathbf{0.66} \pm 0.12$ [$\pm$ 0.02] & $\mathbf{0.69} \pm 0.12$ [$\pm$ 0.02] & $\mathbf{0.70} \pm 0.12$ [$\pm$ 0.02] \\
RtRegression-Perm & $0.23 \pm 0.15$ [$\pm$ 0.03] & $0.26 \pm 0.14$ [$\pm$ 0.03] & $0.30 \pm 0.14$ [$\pm$ 0.03] & $0.32 \pm 0.15$ [$\pm$ 0.03] & $0.36 \pm 0.14$ [$\pm$ 0.03] & $0.40 \pm 0.15$ [$\pm$ 0.03] \\
ResponseRank & $0.48 \pm 0.14$ [$\pm$ 0.03] & $0.54 \pm 0.14$ [$\pm$ 0.03] & $0.58 \pm 0.13$ [$\pm$ 0.03] & $0.61 \pm 0.13$ [$\pm$ 0.03] & $0.65 \pm 0.12$ [$\pm$ 0.02] & $0.66 \pm 0.12$ [$\pm$ 0.02] \\
ResponseRank-Pool & $0.49 \pm 0.15$ [$\pm$ 0.03] & $0.54 \pm 0.15$ [$\pm$ 0.03] & $0.58 \pm 0.14$ [$\pm$ 0.03] & $0.61 \pm 0.14$ [$\pm$ 0.03] & $0.63 \pm 0.13$ [$\pm$ 0.03] & $0.65 \pm 0.13$ [$\pm$ 0.03] \\
ResponseRank-Perm & $0.25 \pm 0.15$ [$\pm$ 0.03] & $0.25 \pm 0.14$ [$\pm$ 0.03] & $0.23 \pm 0.14$ [$\pm$ 0.03] & $0.25 \pm 0.16$ [$\pm$ 0.03] & $0.29 \pm 0.15$ [$\pm$ 0.03] & $0.30 \pm 0.14$ [$\pm$ 0.03] \\
\midrule
\multicolumn{7}{l}{\textbf{Drift diffusion no variability}} \\
BT & $0.31 \pm 0.14$ [$\pm$ 0.03] & $0.36 \pm 0.14$ [$\pm$ 0.03] & $0.40 \pm 0.13$ [$\pm$ 0.03] & $0.44 \pm 0.12$ [$\pm$ 0.02] & $0.47 \pm 0.11$ [$\pm$ 0.02] & $0.50 \pm 0.12$ [$\pm$ 0.02] \\
RtRegression & $0.36 \pm 0.14$ [$\pm$ 0.03] & $0.39 \pm 0.14$ [$\pm$ 0.03] & $0.42 \pm 0.14$ [$\pm$ 0.03] & $0.46 \pm 0.12$ [$\pm$ 0.02] & $0.48 \pm 0.12$ [$\pm$ 0.02] & $0.50 \pm 0.11$ [$\pm$ 0.02] \\
RtRegression-Perm & $0.24 \pm 0.14$ [$\pm$ 0.03] & $0.28 \pm 0.14$ [$\pm$ 0.03] & $0.32 \pm 0.13$ [$\pm$ 0.03] & $0.35 \pm 0.13$ [$\pm$ 0.03] & $0.35 \pm 0.13$ [$\pm$ 0.03] & $0.38 \pm 0.14$ [$\pm$ 0.03] \\
ResponseRank & $\mathbf{0.37} \pm 0.16$ [$\pm$ 0.03] & $\mathbf{0.42} \pm 0.14$ [$\pm$ 0.03] & $\mathbf{0.47} \pm 0.15$ [$\pm$ 0.03] & $\mathbf{0.50} \pm 0.14$ [$\pm$ 0.03] & $\mathbf{0.54} \pm 0.12$ [$\pm$ 0.02] & $\mathbf{0.56} \pm 0.12$ [$\pm$ 0.02] \\
ResponseRank-Pool & $0.35 \pm 0.15$ [$\pm$ 0.03] & $0.39 \pm 0.15$ [$\pm$ 0.03] & $0.43 \pm 0.15$ [$\pm$ 0.03] & $0.46 \pm 0.14$ [$\pm$ 0.03] & $0.48 \pm 0.15$ [$\pm$ 0.03] & $0.50 \pm 0.14$ [$\pm$ 0.03] \\
ResponseRank-Perm & $0.20 \pm 0.13$ [$\pm$ 0.03] & $0.22 \pm 0.13$ [$\pm$ 0.03] & $0.23 \pm 0.14$ [$\pm$ 0.03] & $0.22 \pm 0.14$ [$\pm$ 0.03] & $0.26 \pm 0.14$ [$\pm$ 0.03] & $0.26 \pm 0.15$ [$\pm$ 0.03] \\
\midrule
\multicolumn{7}{l}{\textbf{Stochastic no variability}} \\
BT & $0.32 \pm 0.13$ [$\pm$ 0.03] & $0.37 \pm 0.13$ [$\pm$ 0.03] & $0.41 \pm 0.13$ [$\pm$ 0.03] & $0.44 \pm 0.12$ [$\pm$ 0.02] & $0.47 \pm 0.12$ [$\pm$ 0.02] & $0.50 \pm 0.11$ [$\pm$ 0.02] \\
RtRegression & $\mathbf{0.53} \pm 0.13$ [$\pm$ 0.03] & $\mathbf{0.59} \pm 0.13$ [$\pm$ 0.03] & $\mathbf{0.62} \pm 0.12$ [$\pm$ 0.02] & $\mathbf{0.65} \pm 0.12$ [$\pm$ 0.02] & $\mathbf{0.67} \pm 0.11$ [$\pm$ 0.02] & $\mathbf{0.69} \pm 0.10$ [$\pm$ 0.02] \\
RtRegression-Perm & $0.21 \pm 0.14$ [$\pm$ 0.03] & $0.25 \pm 0.14$ [$\pm$ 0.03] & $0.27 \pm 0.13$ [$\pm$ 0.03] & $0.31 \pm 0.15$ [$\pm$ 0.03] & $0.32 \pm 0.14$ [$\pm$ 0.03] & $0.37 \pm 0.14$ [$\pm$ 0.03] \\
ResponseRank & $0.46 \pm 0.14$ [$\pm$ 0.03] & $0.52 \pm 0.14$ [$\pm$ 0.03] & $0.57 \pm 0.13$ [$\pm$ 0.03] & $0.61 \pm 0.12$ [$\pm$ 0.02] & $0.63 \pm 0.11$ [$\pm$ 0.02] & $0.66 \pm 0.10$ [$\pm$ 0.02] \\
ResponseRank-Pool & $0.50 \pm 0.13$ [$\pm$ 0.03] & $0.55 \pm 0.13$ [$\pm$ 0.03] & $0.58 \pm 0.13$ [$\pm$ 0.03] & $0.61 \pm 0.12$ [$\pm$ 0.02] & $0.63 \pm 0.12$ [$\pm$ 0.02] & $0.64 \pm 0.12$ [$\pm$ 0.02] \\
ResponseRank-Perm & $0.22 \pm 0.13$ [$\pm$ 0.03] & $0.23 \pm 0.14$ [$\pm$ 0.03] & $0.23 \pm 0.15$ [$\pm$ 0.03] & $0.24 \pm 0.14$ [$\pm$ 0.03] & $0.24 \pm 0.14$ [$\pm$ 0.03] & $0.28 \pm 0.14$ [$\pm$ 0.03] \\
\bottomrule
\end{tabular}

}
\end{table}

\begin{table}[tbp]
\centering
\caption{Accuracy performance across datasets without response time variability. Values are reported as mean $\pm$ standard deviation over \numTrials{} trials.}%
\label{tab:choice_accuracy_no_var}
\resizebox{\textwidth}{!}{
\begin{tabular}{lrrrrrr}
\toprule
Dataset / Learner & 50\% & 60\% & 70\% & 80\% & 90\% & 100\% \\
\midrule
\multicolumn{7}{l}{\textbf{Deterministic no variability}} \\
BT & $71.9 \pm 5.5\%$ [$\pm$ 1.1] & $72.5 \pm 5.6\%$ [$\pm$ 1.1] & $74.0 \pm 5.4\%$ [$\pm$ 1.1] & $75.3 \pm 5.2\%$ [$\pm$ 1.0] & $76.5 \pm 5.1\%$ [$\pm$ 1.0] & $77.2 \pm 5.0\%$ [$\pm$ 1.0] \\
RtRegression & $\mathbf{77.5} \pm 5.5\%$ [$\pm$ 1.1] & $\mathbf{79.2} \pm 4.9\%$ [$\pm$ 1.0] & $\mathbf{80.6} \pm 4.9\%$ [$\pm$ 1.0] & $\mathbf{81.7} \pm 4.8\%$ [$\pm$ 1.0] & $\mathbf{82.5} \pm 4.4\%$ [$\pm$ 0.9] & $\mathbf{83.2} \pm 4.1\%$ [$\pm$ 0.8] \\
RtRegression-Perm & $66.5 \pm 6.2\%$ [$\pm$ 1.2] & $68.1 \pm 5.7\%$ [$\pm$ 1.1] & $69.7 \pm 5.4\%$ [$\pm$ 1.1] & $70.0 \pm 5.8\%$ [$\pm$ 1.2] & $71.8 \pm 5.6\%$ [$\pm$ 1.1] & $73.0 \pm 5.7\%$ [$\pm$ 1.1] \\
ResponseRank & $75.1 \pm 5.0\%$ [$\pm$ 1.0] & $77.5 \pm 5.3\%$ [$\pm$ 1.1] & $79.0 \pm 5.5\%$ [$\pm$ 1.1] & $79.9 \pm 5.4\%$ [$\pm$ 1.1] & $81.1 \pm 5.1\%$ [$\pm$ 1.0] & $81.5 \pm 4.7\%$ [$\pm$ 0.9] \\
ResponseRank-Pool & $76.1 \pm 4.9\%$ [$\pm$ 1.0] & $77.6 \pm 5.5\%$ [$\pm$ 1.1] & $79.1 \pm 5.0\%$ [$\pm$ 1.0] & $79.7 \pm 5.1\%$ [$\pm$ 1.0] & $80.6 \pm 4.9\%$ [$\pm$ 1.0] & $81.2 \pm 4.7\%$ [$\pm$ 0.9] \\
ResponseRank-Perm & $67.3 \pm 5.6\%$ [$\pm$ 1.1] & $67.5 \pm 5.5\%$ [$\pm$ 1.1] & $66.5 \pm 6.2\%$ [$\pm$ 1.2] & $67.9 \pm 5.8\%$ [$\pm$ 1.1] & $69.2 \pm 6.2\%$ [$\pm$ 1.2] & $69.3 \pm 6.5\%$ [$\pm$ 1.3] \\
\midrule
\multicolumn{7}{l}{\textbf{Drift diffusion no variability}} \\
BT & $69.8 \pm 5.8\%$ [$\pm$ 1.1] & $71.7 \pm 5.3\%$ [$\pm$ 1.0] & $73.0 \pm 5.0\%$ [$\pm$ 1.0] & $74.3 \pm 4.8\%$ [$\pm$ 1.0] & $75.5 \pm 4.5\%$ [$\pm$ 0.9] & $75.8 \pm 4.3\%$ [$\pm$ 0.9] \\
RtRegression & $71.3 \pm 5.5\%$ [$\pm$ 1.1] & $72.9 \pm 5.1\%$ [$\pm$ 1.0] & $73.9 \pm 4.9\%$ [$\pm$ 1.0] & $75.0 \pm 5.1\%$ [$\pm$ 1.0] & $76.0 \pm 4.8\%$ [$\pm$ 1.0] & $76.3 \pm 4.7\%$ [$\pm$ 0.9] \\
RtRegression-Perm & $67.4 \pm 5.2\%$ [$\pm$ 1.0] & $68.8 \pm 5.5\%$ [$\pm$ 1.1] & $69.8 \pm 4.7\%$ [$\pm$ 0.9] & $71.1 \pm 5.3\%$ [$\pm$ 1.0] & $71.4 \pm 5.1\%$ [$\pm$ 1.0] & $72.3 \pm 4.8\%$ [$\pm$ 1.0] \\
ResponseRank & $\mathbf{71.6} \pm 6.6\%$ [$\pm$ 1.3] & $\mathbf{73.7} \pm 5.9\%$ [$\pm$ 1.2] & $\mathbf{75.0} \pm 5.7\%$ [$\pm$ 1.1] & $\mathbf{76.3} \pm 5.2\%$ [$\pm$ 1.0] & $\mathbf{77.4} \pm 4.6\%$ [$\pm$ 0.9] & $\mathbf{78.0} \pm 4.5\%$ [$\pm$ 0.9] \\
ResponseRank-Pool & $70.9 \pm 6.2\%$ [$\pm$ 1.2] & $72.2 \pm 6.4\%$ [$\pm$ 1.3] & $73.6 \pm 5.7\%$ [$\pm$ 1.1] & $74.3 \pm 5.2\%$ [$\pm$ 1.0] & $75.3 \pm 5.3\%$ [$\pm$ 1.0] & $76.0 \pm 5.0\%$ [$\pm$ 1.0] \\
ResponseRank-Perm & $65.1 \pm 6.3\%$ [$\pm$ 1.3] & $66.0 \pm 5.6\%$ [$\pm$ 1.1] & $66.5 \pm 6.1\%$ [$\pm$ 1.2] & $66.9 \pm 5.3\%$ [$\pm$ 1.0] & $67.7 \pm 5.6\%$ [$\pm$ 1.1] & $68.4 \pm 5.4\%$ [$\pm$ 1.1] \\
\midrule
\multicolumn{7}{l}{\textbf{Stochastic no variability}} \\
BT & $70.1 \pm 5.0\%$ [$\pm$ 1.0] & $71.1 \pm 5.4\%$ [$\pm$ 1.1] & $72.7 \pm 4.9\%$ [$\pm$ 1.0] & $73.9 \pm 4.8\%$ [$\pm$ 1.0] & $75.1 \pm 4.3\%$ [$\pm$ 0.8] & $76.0 \pm 4.3\%$ [$\pm$ 0.9] \\
RtRegression & $\mathbf{76.4} \pm 4.5\%$ [$\pm$ 0.9] & $\mathbf{77.8} \pm 4.7\%$ [$\pm$ 0.9] & $\mathbf{79.3} \pm 4.1\%$ [$\pm$ 0.8] & $\mathbf{80.5} \pm 4.1\%$ [$\pm$ 0.8] & $\mathbf{81.1} \pm 4.2\%$ [$\pm$ 0.8] & $\mathbf{81.8} \pm 4.0\%$ [$\pm$ 0.8] \\
RtRegression-Perm & $65.1 \pm 6.7\%$ [$\pm$ 1.3] & $66.4 \pm 6.3\%$ [$\pm$ 1.2] & $68.3 \pm 5.1\%$ [$\pm$ 1.0] & $69.3 \pm 5.7\%$ [$\pm$ 1.1] & $70.3 \pm 5.0\%$ [$\pm$ 1.0] & $71.4 \pm 5.3\%$ [$\pm$ 1.0] \\
ResponseRank & $74.3 \pm 4.6\%$ [$\pm$ 0.9] & $76.0 \pm 4.8\%$ [$\pm$ 1.0] & $77.8 \pm 4.5\%$ [$\pm$ 0.9] & $79.1 \pm 4.2\%$ [$\pm$ 0.8] & $80.0 \pm 4.3\%$ [$\pm$ 0.9] & $80.7 \pm 4.0\%$ [$\pm$ 0.8] \\
ResponseRank-Pool & $75.4 \pm 4.6\%$ [$\pm$ 0.9] & $77.1 \pm 4.3\%$ [$\pm$ 0.9] & $78.2 \pm 4.5\%$ [$\pm$ 0.9] & $79.2 \pm 4.3\%$ [$\pm$ 0.9] & $80.0 \pm 4.6\%$ [$\pm$ 0.9] & $80.6 \pm 4.2\%$ [$\pm$ 0.8] \\
ResponseRank-Perm & $66.3 \pm 5.2\%$ [$\pm$ 1.0] & $66.3 \pm 5.3\%$ [$\pm$ 1.0] & $66.8 \pm 5.6\%$ [$\pm$ 1.1] & $67.1 \pm 5.4\%$ [$\pm$ 1.1] & $67.0 \pm 5.9\%$ [$\pm$ 1.2] & $68.3 \pm 5.3\%$ [$\pm$ 1.1] \\
\bottomrule
\end{tabular}

}
\end{table}

Here we present additional experiments supporting the ones in the main body as well as additional detail and visualizations for the main experiments.

\textbf{Results without inter-stratum RT variability.}
In the main experiments, we multiply response times in each stratum with a stratum-specific random multiplier to simulate effects such as individual annotator speeds.
Here we ablate this choice by presenting results without this variability multiplier.
This simulates a setting where all annotators display the exact same response time patterns.
We include these results to demonstrate \rtregression{}'s behavior under idealized conditions and the cost of unnecessary stratification on \rr{}.

\Cref{fig:metrics_vs_datasize_no_var} shows that, when inter-stratum RT variability is absent, stratification offers no benefit compared to an \rr{} variant using a single global stratum (\rrPool{}).
This is expected, as strata have no difference in this setting and smaller rankings offer less grounding.
Both variants continue to outperform the BT baseline.

We further find that in this setting, where \rtregression{} assumes a perfectly accurate model of the strength-RT relationship in the deterministic and stochastic datasets, it performs comparably to \rr{}.
This is to be expected, but not reflective of realistic settings.

We further include a boxplot of performances with the full dataset (\cref{fig:metrics_comparison_boxplot_with_var,fig:metrics_comparison_boxplot_no_var}) and numerical results for the experiments presented in the main body (\cref{tab:pdc_with_var,tab:choice_accuracy_with_var,tab:pdc_no_var,tab:choice_accuracy_no_var}). %
Boxplots (\cref{fig:metrics_comparison_boxplot_with_var,fig:metrics_comparison_boxplot_no_var}), visualize median (center line), interquartile ranges (IQR; boxes), 95\% confidence intervals of the median (notches), and outliers (dots).

\section{Supplemental details on MultiPref experiments}%
\label{app:multipref}

In this section we share additional results and details on our experiments with the MultiPref dataset, initially discussed in \cref{sec:multipref}.

\subsection{Data processing}%
\label{app:multipref:data_processing}

MultiPref is a dataset of 10,461 instances (5,323 unique prompts), each consisting of a prompt and two completions \citep{miranda2025hybrid}.
For each instance, \emph{four distinct preference annotations} are collected: two from expert annotators (defined as having a graduate degree in the relevant domain) and two from normal crowdworkers, drawn from a pool of 227 total annotators.
Each annotation includes \emph{extensive metadata}: sub-preferences for helpfulness, truthfulness, and harmlessness; explicit preference strength (clear vs.\ slight); and time spent annotating.

\textbf{Filtering and splits.}
We preprocess the dataset by filtering samples where one of the compared texts exceeds the maximum sequence length of 1,024 tokens
resulting in a final dataset of
9,846 %
samples.
We shuffle this filtered dataset and then split 2,000 samples off as a test set, resulting in \emph{distinct splits for different random seeds}.
We report all results across 30 random seeds.

\textbf{Annotation sampling.}
We randomly sample one of the four annotations per comparison to simulate typical RLHF data collection for the training set.
For evaluation, we use MultiPref's official \texttt{human\_overall\_binarized} labels, which aggregates annotations by majority vote. %
Tied preferences are ignored for both training and evaluation, resulting in smaller effective train and test set size varying slightly depending on the sampled annotations determined by the seed ($26.32\%$ %
of annotations in the full length-filtered dataset are ties).

\textbf{Fractional training sizes.}
To simulate different training set sizes (reported as fractions in the line charts), we select a random subset of the training set, keeping the test set unchanged.
This does not affect results reported in bar charts, which all use the full training set (fraction $1.0$).
Uniformly sampling individual comparisons could result in unrealistic annotation patterns with very few or even only one comparison per annotator.
This would undermine stratification by annotator identity (\cref{app:multipref:strength_signals}), which relies on sufficient within-annotator data.
We therefore perform \emph{annotator-aware sampling}:
after annotation sampling, we group comparisons by annotator, shuffle these groups, and sequentially include groups until reaching the target fraction.
All but the last group are included completely.

\subsection{Strength signals and stratification}%
\label{app:multipref:strength_signals}

Stratification groups comparisons into homogeneous subsets where the relationship between the strength signal and preference strength is locally valid, mitigating confounds such as individual annotator speed and task complexity.
This is analogous to blocking in experimental design~\citep{dodge2008randomized}, where units are grouped to reduce unexplained variance.
We consider three distinct strength signals from the MultiPref dataset:

\textbf{Response time.}
We derive strength from the total time spent annotating each comparison, assuming shorter times indicate stronger preferences.
Since annotation time may depend on individual annotator speed and reading time, we stratify by annotator identity and text length.
We first discretize length by grouping instances into 8 global length buckets.
Strata are then formed by the Cartesian product of annotator identity and length bucket.
After stratification, we form rankings of size $\le 2$ within each stratum using the partitioning strategy (\cref{app:ranking_construction:partitioning}) to minimize the effect of noisy response times.
Some resulting strata may have size one (\rtSingletonFraction{}\% of comparisons on average); those reduce to Bradley--Terry (\cref{thm:responserank_bt}).
We refer to this variant as \mintro{RR-RT}.
As discussed in \cref{sec:multipref}, the response time signal captures not only the preference decision time, but also time spent on additional annotations and justifications, likely confounding its relationship with preference strength.

\textbf{Stated strength.}
We use the explicit binary preference strength annotation (slight vs.\ clear) as the strength signal.
Since this produces only two distinct strength values, most rankings are limited to size 2 (pairs of preferences with different stated strengths).
As stated-strength judgments may be annotator-specific, we stratify by annotator identity before forming rankings.
Within each stratum, we pair comparisons with different stated strengths using the partitioning strategy (\cref{app:ranking_construction:partitioning}).
As strengths are imbalanced (more `slight' than `clear'), %
many comparisons cannot be paired and remain singletons (\statedSingletonFraction{}\% of comparisons), which reduce to Bradley--Terry (\cref{thm:responserank_bt}).
We refer to this variant as \mintro{RR-Stated}.

\textbf{Inter-annotator agreement.}
We compute weighted inter-annotator agreement as our primary strength signal.
Each of the four annotations per instance indicates either a clear or slight preference for one of the two responses.
Given $N$ annotations in total, including $n_t$ ties, $n_c^+$ strong and $n_s^+$ slight preferences for the chosen response, and $n_c^-$ clear and $n_s^-$ slight preferences for the rejected response, we compute the agreement as
\[
    \text{Agreement} = \big( 1.0 \cdot (n_c^+ - n_c^-) + 0.5 \cdot (n_s^+ - n_s^-) \big) / N
    \,\text.
\]
We do not use any stratification for inter-annotator agreement.
As MultiPref has $N = 4$ annotations per instance, the agreement score can only take a few discrete values, however, requiring \emph{tie splitting} through the partitioning strategy (\cref{app:ranking_construction:partitioning}).
This results in rankings of size \sizeAblationFullFragSize{} on average.
We refer to this variant as \mintro{RR-Agree}.

\subsection{Detailed experimental setup}%
\label{app:multipref:training_setup}

\begin{table}
    \centering
    \caption{%
        Hyperparameters and training configuration for MultiPref experiments.
        We use the same hyperparameters for BT and all RR variants.
    }\label{tab:multipref_hyperparameters}
    \begin{tabular}{@{}ll@{}}
        \toprule
        \textbf{Hyperparameter} & \textbf{Value} \\ \midrule
        Learning Rate & 15e-6 \\
        LR scheduler & Linear with 0.05 warmup ratio \\
        Gradient Clipping & 1.0 \\
        Weight Decay & 0.1 \\
        AdamW optimizer settings ($\beta_1$, $\beta_2$, $\epsilon$) & (0.9, 0.999, 1e-08) \\
        On-Device Batch Size & 16 \\
        Gradient Accumulation Steps & 4 \\
        Max Sequence Length (tokens) & 1024 \\
        Dropout & no \\
        Precision & bfloat16 \\
        Epochs & 3 \\ \bottomrule
    \end{tabular}
\end{table}

\textbf{Model and hyperparameters.}
Reward models are initialized from \texttt{Llama-3.1-8B-Instruct} \citep{llamateam2024llama} with a single classification head that extracts a scalar reward prediction from the last hidden state, fine-tuned in full using both BT and \rr{} losses (negative log-likelihood).
For both losses, we normalize by the batch size (number of comparisons).
We use equal hyperparameters (\cref{tab:multipref_hyperparameters}) for both methods, supported by the hyperparameter sensitivity analysis in \cref{app:multipref:hyperparameter_sensitivity}.

\textbf{Evaluation.}
We evaluate reward models on both in-distribution and out-of-distribution test data.
The in-distribution test data is a held-out portion of the MultiPref dataset ($2{,}000$ samples) using aggregated labels (\texttt{human\_overall\_binarized}), providing an in-distribution evaluation with respect to prompts and responses.
For out-of-distribution evaluation, we primarily rely on RewardBench 2 as it correlates strongly with downstream performance \citep{malik2025rewardbench}.
In \cref{tab:multipref_test_accuracy,tab:multipref_rewardbench_v1,tab:multipref_rewardbench_v2}, we additionally report results for the original RewardBench benchmark \citep{lambert2025rewardbench}.
As we did for the MultiPref dataset, we filter RewardBench for samples where both responses are within the 1024 token limit.

\subsection{Details on main experiments}%
\label{app:multipref:main_experiments_details}

\textbf{Performance vs.\ training set size.}
Matching the synthetic experiments in \cref{sec:synthetic_experiments}, we investigate performance as a function of training set size to determine to which degree the best-performing strength-aware methods (primarily RR-Agree, with RR-Stated shown for comparison) improve sample efficiency.
\Cref{fig:multipref_datasize} shows that RR-Agree outperforms BT by 3.0 percentage points on RewardBench 2.
Additionally, RR-Agree matches final BT performance on that benchmark with approximately 30\% fewer preference pairs, demonstrating its high potential for data-efficient preference learning.
\begin{figure}
    \centering
    \includegraphics[width=\textwidth]{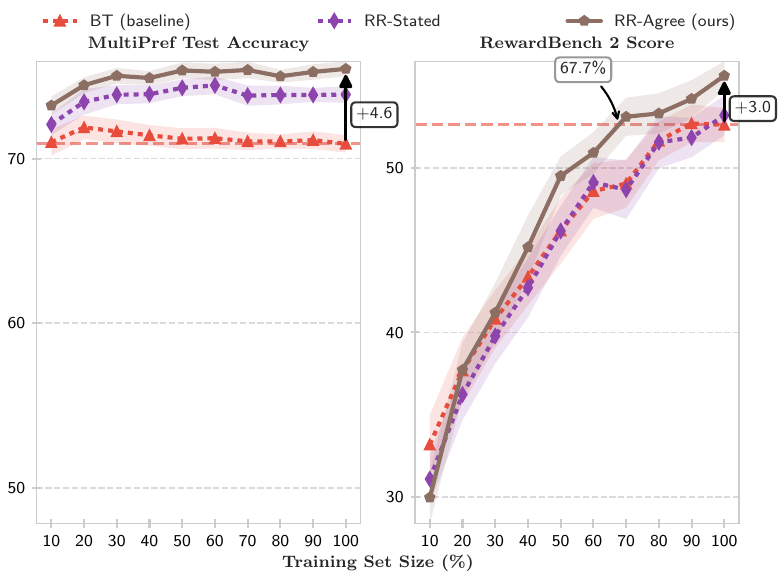}
    \caption{%
        \textbf{Performance vs.\ training set size.}
        As a function of training set size (percent of entire training set), we show:
        MultiPref test set accuracy (in-distribution).
        RewardBench 2 score (out-of-distribution).
        $30$ runs; shading indicates $95\%$ CI assuming normality.
    }\label{fig:multipref_datasize}
\end{figure}

\textbf{Numerical results.}
To support reproducibility, we report numerical results for the experiments in the main text as well as the training data size experiments in \cref{tab:multipref_test_accuracy,tab:multipref_rewardbench_v1,tab:multipref_rewardbench_v2}.
We also report results for the original RewardBench benchmark (version 1) for additional validation.
Note that reported confidence intervals assume normality and are not corrected for multiple comparisons.
\begin{table}[htbp]
    \centering
    \caption{%
    MultiPref test set accuracy.
    Format: mean $\pm$ std [$\pm$ 95\% CI half-width].
    Best values per column are bolded.
    }%
    \label{tab:multipref_test_accuracy}
    \resizebox{\textwidth}{!}{
    \begin{tabular}{@{}l@{}} %
    \begin{tabular}{@{}lrrrrr@{}}
\toprule
Learner & 10\% & 20\% & 30\% & 40\% & 50\% \\
\midrule
BT & $71.0 \pm 2.2\%$ [$\pm$ 0.8] & $71.9 \pm 1.9\%$ [$\pm$ 0.7] & $71.6 \pm 1.9\%$ [$\pm$ 0.7] & $71.4 \pm 1.7\%$ [$\pm$ 0.6] & $71.2 \pm 1.7\%$ [$\pm$ 0.6] \\
BT@2 & $71.8 \pm 1.7\%$ [$\pm$ 0.6] & $73.5 \pm 1.6\%$ [$\pm$ 0.6] & $73.6 \pm 1.7\%$ [$\pm$ 0.6] & $73.6 \pm 1.7\%$ [$\pm$ 0.6] & $74.2 \pm 1.6\%$ [$\pm$ 0.6] \\
RR-Random & -- & -- & -- & -- & -- \\
RR-RT & $71.3 \pm 1.7\%$ [$\pm$ 0.6] & $71.9 \pm 1.9\%$ [$\pm$ 0.7] & $72.2 \pm 1.7\%$ [$\pm$ 0.6] & $72.7 \pm 1.9\%$ [$\pm$ 0.7] & $72.5 \pm 1.6\%$ [$\pm$ 0.6] \\
RR-Stated & $72.1 \pm 1.7\%$ [$\pm$ 0.6] & $73.4 \pm 2.1\%$ [$\pm$ 0.8] & $73.9 \pm 1.5\%$ [$\pm$ 0.6] & $73.9 \pm 1.5\%$ [$\pm$ 0.5] & $74.3 \pm 1.4\%$ [$\pm$ 0.5] \\
RR-Agree & $\mathbf{73.2} \pm 1.5\%$ [$\pm$ 0.6] & $\mathbf{74.5} \pm 1.3\%$ [$\pm$ 0.5] & $\mathbf{75.0} \pm 1.3\%$ [$\pm$ 0.5] & $\mathbf{74.9} \pm 1.1\%$ [$\pm$ 0.4] & $\mathbf{75.4} \pm 1.3\%$ [$\pm$ 0.5] \\
\bottomrule
\end{tabular}
\\
    \\
    \begin{tabular}{@{}lrrrrr@{}}
\toprule
Learner & 60\% & 70\% & 80\% & 90\% & 100\% \\
\midrule
BT & $71.2 \pm 1.7\%$ [$\pm$ 0.6] & $71.0 \pm 1.4\%$ [$\pm$ 0.5] & $71.0 \pm 1.1\%$ [$\pm$ 0.4] & $71.1 \pm 1.4\%$ [$\pm$ 0.5] & $70.9 \pm 1.4\%$ [$\pm$ 0.5] \\
BT@2 & $74.3 \pm 1.2\%$ [$\pm$ 0.5] & $74.2 \pm 1.3\%$ [$\pm$ 0.5] & $74.5 \pm 1.3\%$ [$\pm$ 0.5] & $74.3 \pm 1.1\%$ [$\pm$ 0.4] & $74.2 \pm 1.3\%$ [$\pm$ 0.5] \\
RR-Random & -- & -- & -- & -- & $72.2 \pm 1.3\%$ [$\pm$ 0.5] \\
RR-RT & $72.5 \pm 1.5\%$ [$\pm$ 0.6] & $72.6 \pm 1.5\%$ [$\pm$ 0.5] & $72.1 \pm 1.6\%$ [$\pm$ 0.6] & $72.1 \pm 1.6\%$ [$\pm$ 0.6] & $72.4 \pm 1.4\%$ [$\pm$ 0.5] \\
RR-Stated & $74.4 \pm 1.4\%$ [$\pm$ 0.5] & $73.8 \pm 1.6\%$ [$\pm$ 0.6] & $73.9 \pm 1.5\%$ [$\pm$ 0.6] & $73.9 \pm 1.1\%$ [$\pm$ 0.4] & $73.9 \pm 1.4\%$ [$\pm$ 0.5] \\
RR-Agree & $\mathbf{75.3} \pm 1.3\%$ [$\pm$ 0.5] & $\mathbf{75.4} \pm 1.3\%$ [$\pm$ 0.5] & $\mathbf{75.0} \pm 1.0\%$ [$\pm$ 0.4] & $\mathbf{75.3} \pm 1.3\%$ [$\pm$ 0.5] & $\mathbf{75.4} \pm 1.2\%$ [$\pm$ 0.5] \\
\bottomrule
\end{tabular}

    \end{tabular}
    }
\end{table}
\begin{table}[htbp]
    \centering
    \caption{%
    RewardBench version 1 score.
    Format: mean $\pm$ std [$\pm$ 95\% CI half-width].
    Best values per column are bolded.
    }%
    \label{tab:multipref_rewardbench_v1}
    \resizebox{\textwidth}{!}{
    \begin{tabular}{@{}l@{}} %
    \begin{tabular}{@{}lrrrrr@{}}
\toprule
Learner & 10\% & 20\% & 30\% & 40\% & 50\% \\
\midrule
BT & $\mathbf{65.2} \pm 5.1\%$ [$\pm$ 1.9] & $69.1 \pm 5.3\%$ [$\pm$ 2.0] & $70.4 \pm 5.1\%$ [$\pm$ 1.9] & $72.8 \pm 4.1\%$ [$\pm$ 1.5] & $74.6 \pm 4.0\%$ [$\pm$ 1.5] \\
BT@2 & $61.9 \pm 4.5\%$ [$\pm$ 1.7] & $64.9 \pm 5.2\%$ [$\pm$ 1.9] & $69.0 \pm 4.9\%$ [$\pm$ 1.8] & $69.4 \pm 6.0\%$ [$\pm$ 2.2] & $71.9 \pm 4.8\%$ [$\pm$ 1.8] \\
RR-Random & -- & -- & -- & -- & -- \\
RR-RT & $61.1 \pm 4.4\%$ [$\pm$ 1.6] & $64.1 \pm 4.5\%$ [$\pm$ 1.7] & $66.7 \pm 4.7\%$ [$\pm$ 1.7] & $68.6 \pm 4.2\%$ [$\pm$ 1.6] & $71.2 \pm 5.0\%$ [$\pm$ 1.9] \\
RR-Stated & $64.0 \pm 5.3\%$ [$\pm$ 2.0] & $68.3 \pm 4.8\%$ [$\pm$ 1.8] & $70.8 \pm 4.6\%$ [$\pm$ 1.7] & $72.3 \pm 4.1\%$ [$\pm$ 1.5] & $74.5 \pm 3.8\%$ [$\pm$ 1.4] \\
RR-Agree & $62.4 \pm 5.1\%$ [$\pm$ 1.9] & $\mathbf{69.5} \pm 4.5\%$ [$\pm$ 1.7] & $\mathbf{71.3} \pm 5.0\%$ [$\pm$ 1.9] & $\mathbf{74.4} \pm 4.6\%$ [$\pm$ 1.7] & $\mathbf{77.0} \pm 2.8\%$ [$\pm$ 1.0] \\
\bottomrule
\end{tabular}
\\
    \\
    \begin{tabular}{@{}lrrrrr@{}}
\toprule
Learner & 60\% & 70\% & 80\% & 90\% & 100\% \\
\midrule
BT & $76.3 \pm 3.0\%$ [$\pm$ 1.1] & $76.3 \pm 3.0\%$ [$\pm$ 1.1] & $77.4 \pm 3.2\%$ [$\pm$ 1.2] & $77.9 \pm 3.2\%$ [$\pm$ 1.2] & $78.4 \pm 2.3\%$ [$\pm$ 0.9] \\
BT@2 & $73.5 \pm 5.4\%$ [$\pm$ 2.0] & $74.3 \pm 4.5\%$ [$\pm$ 1.7] & $75.8 \pm 3.1\%$ [$\pm$ 1.2] & $77.8 \pm 3.7\%$ [$\pm$ 1.4] & $77.8 \pm 3.7\%$ [$\pm$ 1.4] \\
RR-Random & -- & -- & -- & -- & $76.3 \pm 2.9\%$ [$\pm$ 1.1] \\
RR-RT & $71.4 \pm 3.5\%$ [$\pm$ 1.3] & $72.4 \pm 4.1\%$ [$\pm$ 1.5] & $73.4 \pm 3.6\%$ [$\pm$ 1.3] & $72.8 \pm 4.1\%$ [$\pm$ 1.5] & $74.0 \pm 5.2\%$ [$\pm$ 1.9] \\
RR-Stated & $77.0 \pm 3.3\%$ [$\pm$ 1.2] & $75.6 \pm 3.1\%$ [$\pm$ 1.2] & $78.4 \pm 2.2\%$ [$\pm$ 0.8] & $78.3 \pm 2.5\%$ [$\pm$ 0.9] & $79.2 \pm 2.2\%$ [$\pm$ 0.8] \\
RR-Agree & $\mathbf{78.0} \pm 2.6\%$ [$\pm$ 1.0] & $\mathbf{79.1} \pm 2.3\%$ [$\pm$ 0.9] & $\mathbf{78.6} \pm 2.8\%$ [$\pm$ 1.0] & $\mathbf{79.2} \pm 1.9\%$ [$\pm$ 0.7] & $\mathbf{80.0} \pm 2.0\%$ [$\pm$ 0.7] \\
\bottomrule
\end{tabular}

    \end{tabular}
    }
\end{table}
\begin{table}[htbp]
    \centering
    \caption{%
    RewardBench 2 score.
    Format: mean $\pm$ std [$\pm$ 95\% CI half-width].
    Best values per column are bolded.
    }%
    \label{tab:multipref_rewardbench_v2}
    \resizebox{\textwidth}{!}{
    \begin{tabular}{@{}l@{}} %
    \begin{tabular}{@{}lrrrrr@{}}
\toprule
Learner & 10\% & 20\% & 30\% & 40\% & 50\% \\
\midrule
BT & $\mathbf{33.2} \pm 4.8\%$ [$\pm$ 1.8] & $37.7 \pm 5.4\%$ [$\pm$ 2.0] & $40.8 \pm 4.7\%$ [$\pm$ 1.8] & $43.4 \pm 4.6\%$ [$\pm$ 1.7] & $46.2 \pm 5.4\%$ [$\pm$ 2.0] \\
BT@2 & $29.9 \pm 3.6\%$ [$\pm$ 1.3] & $32.9 \pm 5.3\%$ [$\pm$ 2.0] & $36.8 \pm 4.7\%$ [$\pm$ 1.7] & $38.8 \pm 6.0\%$ [$\pm$ 2.2] & $41.6 \pm 5.2\%$ [$\pm$ 1.9] \\
RR-Random & -- & -- & -- & -- & -- \\
RR-RT & $29.6 \pm 3.5\%$ [$\pm$ 1.3] & $32.0 \pm 4.3\%$ [$\pm$ 1.6] & $34.7 \pm 4.7\%$ [$\pm$ 1.8] & $35.8 \pm 4.1\%$ [$\pm$ 1.5] & $40.2 \pm 4.9\%$ [$\pm$ 1.8] \\
RR-Stated & $31.1 \pm 4.0\%$ [$\pm$ 1.5] & $36.2 \pm 4.2\%$ [$\pm$ 1.6] & $39.8 \pm 4.5\%$ [$\pm$ 1.7] & $42.7 \pm 4.8\%$ [$\pm$ 1.8] & $46.2 \pm 4.0\%$ [$\pm$ 1.5] \\
RR-Agree & $30.0 \pm 4.2\%$ [$\pm$ 1.6] & $\mathbf{37.7} \pm 4.4\%$ [$\pm$ 1.6] & $\mathbf{41.2} \pm 4.9\%$ [$\pm$ 1.8] & $\mathbf{45.2} \pm 5.2\%$ [$\pm$ 1.9] & $\mathbf{49.5} \pm 3.2\%$ [$\pm$ 1.2] \\
\bottomrule
\end{tabular}
\\
    \\
    \begin{tabular}{@{}lrrrrr@{}}
\toprule
Learner & 60\% & 70\% & 80\% & 90\% & 100\% \\
\midrule
BT & $48.6 \pm 4.5\%$ [$\pm$ 1.7] & $49.0 \pm 3.9\%$ [$\pm$ 1.5] & $51.6 \pm 3.1\%$ [$\pm$ 1.2] & $52.7 \pm 3.1\%$ [$\pm$ 1.1] & $52.6 \pm 2.7\%$ [$\pm$ 1.0] \\
BT@2 & $44.7 \pm 5.9\%$ [$\pm$ 2.2] & $46.9 \pm 4.5\%$ [$\pm$ 1.7] & $48.8 \pm 4.4\%$ [$\pm$ 1.6] & $49.9 \pm 5.7\%$ [$\pm$ 2.1] & $51.1 \pm 5.5\%$ [$\pm$ 2.0] \\
RR-Random & -- & -- & -- & -- & $47.7 \pm 4.5\%$ [$\pm$ 1.7] \\
RR-RT & $41.2 \pm 4.5\%$ [$\pm$ 1.7] & $43.2 \pm 4.6\%$ [$\pm$ 1.7] & $45.2 \pm 3.9\%$ [$\pm$ 1.5] & $44.0 \pm 5.2\%$ [$\pm$ 1.9] & $47.2 \pm 5.2\%$ [$\pm$ 2.0] \\
RR-Stated & $49.1 \pm 4.1\%$ [$\pm$ 1.5] & $48.7 \pm 4.8\%$ [$\pm$ 1.8] & $51.6 \pm 4.2\%$ [$\pm$ 1.6] & $51.8 \pm 3.2\%$ [$\pm$ 1.2] & $53.2 \pm 3.2\%$ [$\pm$ 1.2] \\
RR-Agree & $\mathbf{50.9} \pm 3.4\%$ [$\pm$ 1.3] & $\mathbf{53.1} \pm 3.1\%$ [$\pm$ 1.2] & $\mathbf{53.3} \pm 3.3\%$ [$\pm$ 1.2] & $\mathbf{54.2} \pm 3.1\%$ [$\pm$ 1.1] & $\mathbf{55.6} \pm 2.3\%$ [$\pm$ 0.9] \\
\bottomrule
\end{tabular}

    \end{tabular}
    }
\end{table}

\textbf{RewardBench category breakdown.}
The RewardBench 2 benchmark consists of seven distinct categories, each measuring different aspects of reward model performance \citep{malik2025rewardbench}.
\Cref{fig:multipref_datasize_rb} shows the performance of all the variants studied in the main text
broken down by category as a function of training set size.
It shows that variants that make use of an effective strength signal (agreement and, to a lesser degree, stated strength) outperform the BT baseline primarily in the \emph{Safety}, \emph{Ties}, and \emph{Math} categories while the \emph{Focus} category is most impacted by BT training duration (2 vs.\ 3 epochs).
Performance increases on the particularly difficult Ties category highlight that agreement signals can help learn better-calibrated reward models.
\begin{figure}[tbp]
    \centering
    \includegraphics{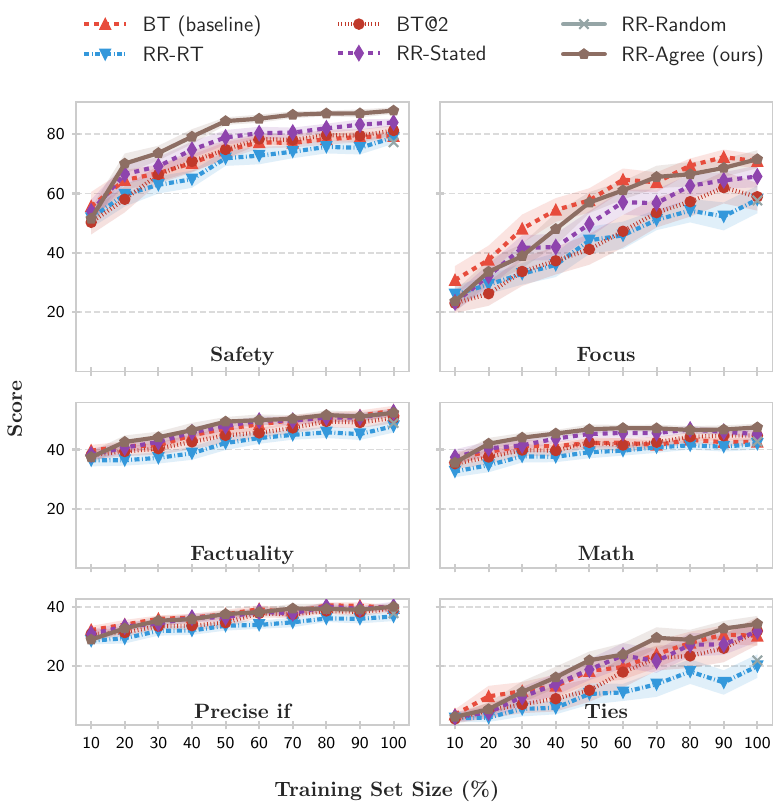}
    \caption{%
        \textbf{RewardBench category breakdown.}
        As a function of training set size (percent of entire training set), we show
        RewardBench 2 performance for each category composing the benchmark.
        $30$ runs; shading indicates $95\%$ CI assuming normality.
    }\label{fig:multipref_datasize_rb}
\end{figure}

\subsection{Validation experiments}%
\label{app:multipref:validation_experiments}

Here we present additional experiments and ablations to validate our choices.
All experiments report the mean performance across 30 seeds and indicate the 95\% CI (assuming normality) in the error bars.

\textbf{Impact of ranking size.}\label{par:ranking_length}
As ties are prevalent in agreement scores, tie splitting and batch packing (\cref{app:ranking_construction}) result in rankings of at most size 5 (average size \sizeAblationFullFragSize{}).
To investigate the effect of ranking size on RR-Agree, we additionally test variants that limit ranking sizes to at most 2, 3, and 4 (resulting in average sizes of \sizeAblationSizeTwoFragSize{}, \sizeAblationSizeThreeFragSize{}, and \sizeAblationSizeFourFragSize{} after batch packing\footnote{As 3 does not divide the batch size, this leads to increased fragmentation during batch packing.}).
\Cref{fig:size_ablation_bar} shows performance increases with ranking size, demonstrating that larger rankings convey more information about preference strength.
All variants, even with minimal rankings of size 2, outperform the BT baseline with the jump in performance between no strength signal (BT) and small rankings being particularly pronounced.

\begin{figure}[tbp]
    \centering
    \includegraphics[width=\linewidth]{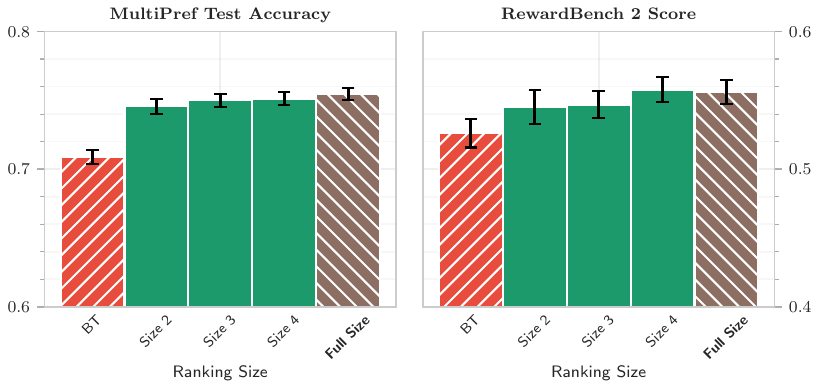}
    \caption{%
        \textbf{Impact of ranking size for RR-Agree.}
        Effective partition sizes after batch packing:
        \sizeAblationSizeTwoFragSize{} (size 2),
        \sizeAblationSizeThreeFragSize{} (size 3),
        \sizeAblationSizeFourFragSize{} (size 4),
        \sizeAblationFullFragSize{} (full size).
    }%
    \label{fig:size_ablation_bar}
\end{figure}

\textbf{Agreement signal with directional information.}
Agreement scores are by design independent of the chosen preference, only reflecting the degree of consensus among annotators.
This choice was made to simulate realistic settings where only a single annotator's choice is available, with agreement serving as a proof of concept for a strong strength signal.
As a consequence of this choice, cases may arise where the sampled annotation disagrees with the majority opinion, yet the agreement score remains high (e.g., when 3 out of 4 annotators prefer the response rejected by the chosen annotation).
To assess the impact of this design choice, we additionally evaluate a variant using \emph{signed agreement} as the strength signals, where agreement is multiplied with $+1$ if the chosen response aligns with the mean opinion (where slight preferences are weighted as 0.5) and $-1$ otherwise.
This is equivalent to using the mean preference as the strength signal, and we call this variant \mintro{RR-MeanPref}.
\Cref{fig:meanpref_size_comparison_bar} compares RR-Agree with RR-MeanPref.
As MeanPref can take more distinct values than Agreement, it leads to larger average ranking sizes (average \meanPrefFullFragSize{} compared to \sizeAblationFullFragSize{} for RR-Agree).
We therefore again compare several different configurations with limited ranking sizes for RR-MeanPref (effective sizes: \meanPrefSizeTwoFragSize{} for size 2, \meanPrefSizeFourFragSize{} for size 4, \meanPrefSizeSixFragSize{} for size 6).
We observe that, while RR-MeanPref slightly increases performance on the in-distribution MultiPref test set, it underperforms RR-Agree on the out-of-distribution RewardBench 2 benchmark, indicating potential benefits of slightly noisy rank signals for generalization.
The effects in both directions are small, however, and should not be over-interpreted.
Note that agreement and MeanPref differ rarely as the majority opinion is sampled most of the time.
We again observe increasing performance with increasing ranking size, though this effect plateaus at size 4 (the difference between size 6 and full size is not statistically significant: $\meanPrefSixVsFullRewardBenchMeanDiff{}$ pp, $p \meanPrefSixVsFullRewardBenchMeanP{}$ on RewardBench, $\meanPrefSixVsFullMultiprefMeanDiff{}$ pp, $p \meanPrefSixVsFullMultiprefMeanP{}$).

\begin{figure}
    \centering
    \includegraphics[width=\linewidth]{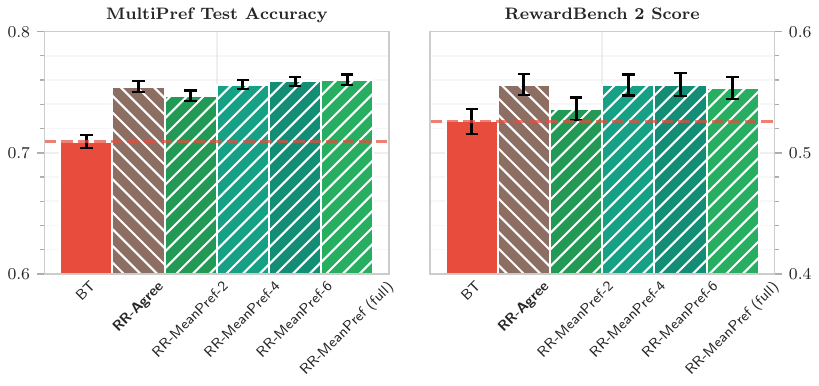}
    \caption{%
        \textbf{RR-Agree vs.\ RR-MeanPref ranking comparison.}
        We compare RR-Agree with different ranking size configurations of RR-MeanPref.
        The dashed baseline represents the BT model.
    }%
    \label{fig:meanpref_size_comparison_bar}
\end{figure}

\textbf{BT with agreement scores as soft labels.}
As agreement scores are a particularly clean and informative strength signal that gives a clear cardinal target, it would be natural to use them directly as soft labels in the BT loss.
We therefore evaluate two alternatives:
\mintro{BT-MeanPref}, which uses the mean opinion as the target preference probability like RR-MeanPref;
and \mintro{BT-Agree}, which uses
\(
    p^* = 0.5 + \text{Agreement}/2
\)
as the choice probability for the chosen response.
BT-MeanPref therefore uses a stronger signal than agreement, which is stronger than the binary preference observed by standard BT.
\Cref{fig:bt_soft_labels_comparison_bar} compares these approaches, showing a modest improvement of BT-Agree over BT in both MultiPref test ($\btAgreeMultiprefMeanDiff{}$ pp, $p \btAgreeMultiprefMeanP{}$) and RewardBench ($\btAgreeRewardBenchMeanDiff{}$ pp, $p \btAgreeRewardBenchMeanP{}$).
While BT-MeanPref leads to a further improvement on MultiPref test ($\btMeanPrefMultiprefMeanDiff{}$ pp vs.\ BT, $p \btMeanPrefMultiprefMeanP{}$), it underperforms the BT baseline on RewardBench ($\btMeanPrefRewardBenchMeanDiff{}$ pp, $p \btMeanPrefRewardBenchMeanP{}$), indicating potential overfitting to the training distribution.
All these effects are small and not statistically significant.
They highlight, however, how difficult soft labels can be to use effectively.

\begin{figure}
    \centering
    \includegraphics[width=\linewidth]{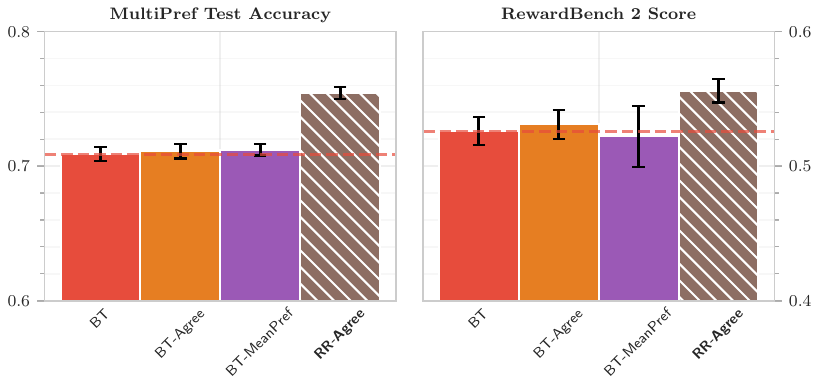}
    \caption{%
        \textbf{BT with agreement scores as soft labels.}
        We compare BT baseline, BT-Agree (BT loss with agreement scores as targets),
        BT-MeanPref (BT loss with mean preference as targets),
        and RR-Agree (\rr{} with agreement-based ranking).
    }%
    \label{fig:bt_soft_labels_comparison_bar}
\end{figure}

\subsection{Robustness analysis}%
\label{app:multipref:robustness_analysis}

\textbf{Robustness to noisy strength signals.}\label{par:strength_noise_robustness}
To assess the robustness of RR-Agree to noisy preference data, we conducted experiments with varying levels of data corruption using partial shuffle noise.
This method randomly selects a specified percentage of the preference annotations from the training dataset (prior to stratification) and shuffles their strength values among themselves.\footnote{%
While more realistic ranking noise models could be considered, partial shuffle noise provides a simple and interpretable way to control the noise level.
More realistic alternatives such as Mallows would operate on a rank level (after stratification) instead of on a value level, making comparison between different ranking sizes challenging, and have less interpretable parameters.
}
0\% represents clean data and 100\% represents completely randomized preferences.
\Cref{fig:noise_robustness_sensitivity_bar} shows performance of RR-Agree and the `size 2' variant which uses the same strength signal with smaller rankings across various noise levels.
It shows that RR-Agree falls below baseline performance with 50\% noise, while the smaller rankings are much more robust to noise.
This motivates our choice of ranking size 2 for the response time experiments with assumed noisy strength signals.

We further confirm the effect of ranking size on noise sensitivity in \cref{fig:randrank_size_ablation_bar}, where we compare RR-Random configurations (random, uninformative strength information) with rankings of different sizes.
We observe that performance degrades sharply with increasing ranking size, demonstrating that larger rankings make the \rr{} loss more sensitive to noise in the strength signal.

\begin{figure}[tbp]
    \centering
    \includegraphics[width=\linewidth]{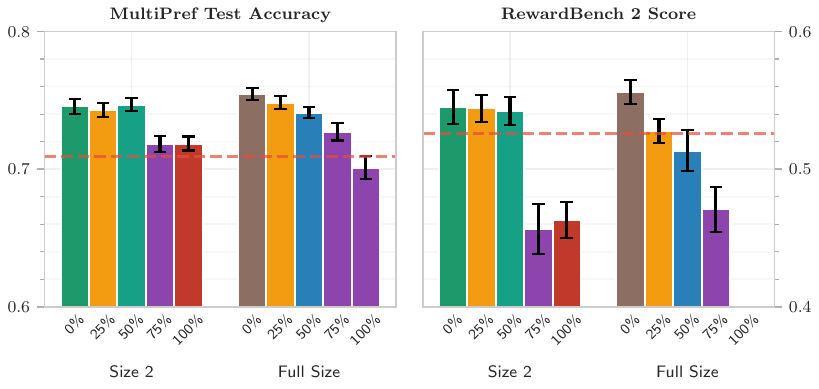}
    \caption{%
        \textbf{Robustness to noisy strength signals (RR-Agree).}
        Comparison of two variants of RR-Agree, one with full ranking size and one limited to size 2, under varying levels of partial shuffle noise.
    }%
    \label{fig:noise_robustness_sensitivity_bar}
\end{figure}
\begin{figure}[tbp]
    \centering
    \includegraphics[width=\linewidth]{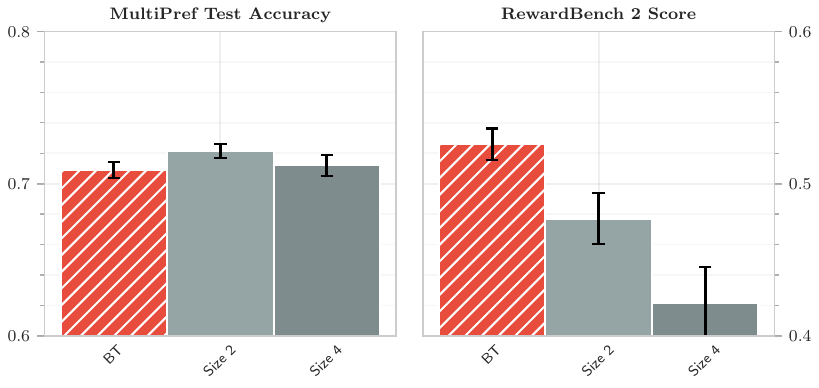}
    \caption{%
        \textbf{Robustness to noisy strength signals (RR-Random).}
        Comparison of RR-Random with different ranking sizes (2 vs 4) using completely random strength signals.
    }%
    \label{fig:randrank_size_ablation_bar}
\end{figure}

\textbf{Robustness to missing strength data.}
To evaluate the robustness of RR-Agree to missing rank data, we conducted experiments with different filter fractions applied to the training data.
We systematically varied the fraction of ranks retained during training.
As shown in \cref{fig:filter_robustness_sensitivity_bar}, RR-Agree performance degrades gracefully to BT performance.
While very little strength information is sufficient for performance increases on the MultiPref test set, performance regresses much faster on RewardBench 2 and reaches baseline performance at around 75\% missing data.

\begin{figure}
    \centering
    \includegraphics[width=\linewidth]{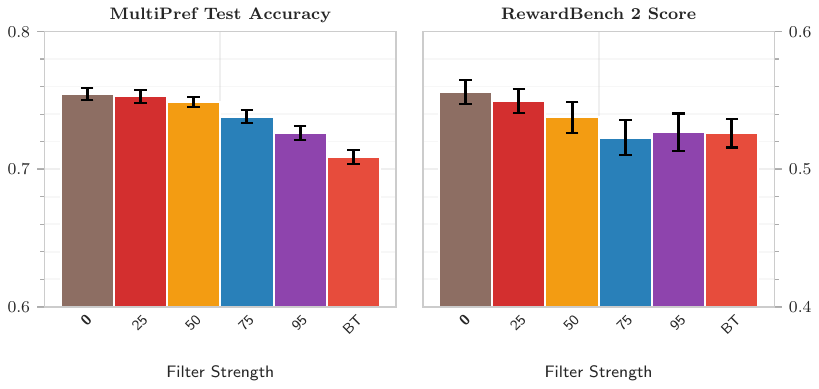}
    \caption{%
        \textbf{Robustness to missing strength data.}
        Filter robustness analysis comparing RR-Agree variants with different filter fractions applied to the training data.
    }%
    \label{fig:filter_robustness_sensitivity_bar}
\end{figure}

\subsection{Signal-specific stratification analysis}%
\label{app:multipref:signal_stratification}

\textbf{Impact of stratification on response time signal.}
As individual annotators are likely to have different baseline speeds, we stratify by annotator identity in our response time experiments (RR-RT).
Since longer texts naturally require more time to read and evaluate, we additionally apply length stratification to ensure fair comparisons: examples are grouped by text length into buckets, and response times are only compared within each bucket.
Here we ablate both choices.
\Cref{fig:response_time_length_strat_bar} shows results for RR-RT with different levels of length stratification (fewer buckets means coarser stratification).
It shows very little impact of stratification on RR-RT performance, further supporting that the response time signal is weak in this dataset, independent of stratification choice.
This may also question the assumption that longer texts necessarily require more time to evaluate, as annotators may skim or skip parts of longer texts.
The MultiPref evaluation shows a slight benefit of stratification (4 bucket is better than annotator-only which is better than no stratification), but this is not reflected in RewardBench performance.
It is notable that the no-stratification condition underperforms the random baseline (RR-Random) on MultiPref test.
One possible explanation is that systematically fast annotators provide low-quality annotations (classified as strong by the RT signal) or longer texts (with longer RTs) being systematically classified as weak.
We choose 8 length buckets for our main experiments as a compromise between granularity and sufficient data per bucket, but the figure indicates that annotator-only stratification or even no stratification would perform similarly.

\begin{figure}
    \centering
    \includegraphics[width=\linewidth]{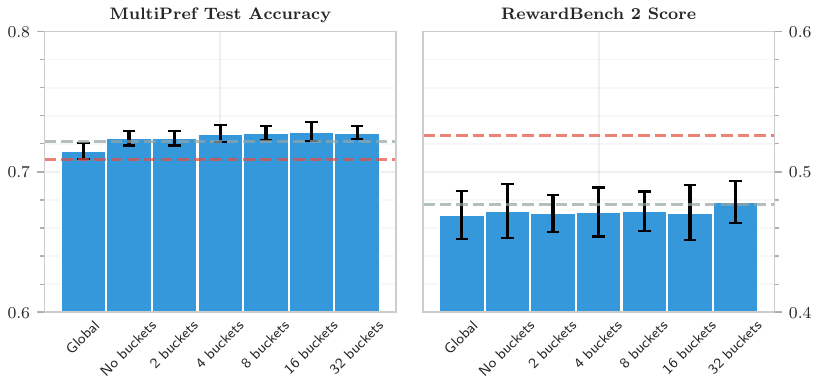}
    \caption{%
        \textbf{Impact of stratification on response time signal (length buckets).}
        Comparison of different stratification approaches (no stratification, annotator-only, annotator + 2/4/8/16/32 length buckets) for the RT-based approach.
        Reference lines show BT baseline (red) and RR-Random (gray).
    }%
    \label{fig:response_time_length_strat_bar}
\end{figure}

To put these empirical results into context, we analyze correlations in the dataset to understand the relationship between response time, preference strength, and text length.
Here we treat `consensus', which has empirically proven a reliable strength signal, as a proxy for the unknown true preference strength.
We find that \emph{RT is strongly correlated with length} (within-annotator median $\rho = \rtLengthWithinAnnotatorMedian{}$), supporting the intuition that longer texts take longer to evaluate.
The overall correlation between RT and consensus is weak (global $\rho = \rtConsensusGlobalCorr{}$; negative values indicate that lower RT corresponds to higher strength).
\emph{Stratifying by annotator improves the correlation slightly} (within-annotator median $\rho = \rtConsensusWithinAnnotatorMedian{}$), demonstrating that controlling for annotator-specific effects helps.
However, when additionally stratifying by length (8 buckets), the correlation weakens to $\rho = \rtConsensusLengthStratMedian{}$, showing that \emph{length stratification does not help} in this dataset.
This is because the already-weak RT signal operates largely through text length:
Length itself has a weak correlation with consensus (within-annotator length-consensus median $\rho = \lengthConsensusWithinAnnotatorMedian{}$), indicating that longer texts tend to have weaker preferences.
Since RT and length are strongly correlated, controlling for length removes part of the predictive power that RT has.
This analysis shows that RT is a weak strength signal in this dataset and, when used, annotator-only stratification is preferable.
Note, however, that RT variance is not fully explained by length, and datasets with simpler annotation protocols (e.g., forced binary choices with immediate feedback) may exhibit stronger RT-strength correlations.

A large part of the difference of performance across different length stratification choices may be explained by changes in ranking structure:
Stratification increases singleton fractions from \singletonFracGlobal{} (no stratification) to \singletonFracNoBuckets{} (annotator-only), then progressively to \singletonFracBucketsTwo{} (2 buckets), \singletonFracBucketsFour{} (4), \singletonFracBucketsEight{} (8), \singletonFracBucketsSixteen{} (16), and \singletonFracBucketsThirtyTwo{} (32).

\textbf{Impact of stratification on stated strength.}
We evaluate RR-Stated, an alternative approach that uses explicitly stated preference strength from annotators rather than inter-annotator agreement.
We compare two variants: the standard RR-Stated approach with local stratification, RR-Stated (global) which uses global ranking across all examples, and RR-Stated (partialshuffle100) which applies full shuffling noise to test robustness.
As shown in \cref{fig:stated_strength_ablation_bar}, the stated strength approach provides useful signal but underperforms compared to the agreement-based method.

\begin{figure}
    \centering
    \includegraphics[width=\linewidth]{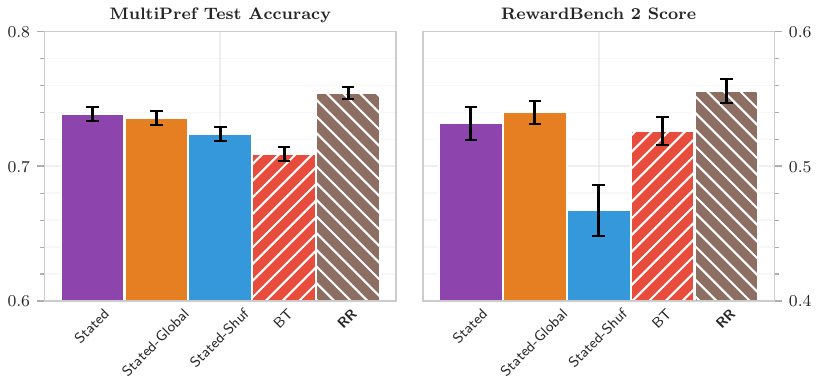}
    \caption{%
        \textbf{Stated preference strength ablation.}
        Comparison of RR-Stated variants that use explicit annotator-stated preference strength,
        compared to BT baseline and RR-Agree which uses inter-annotator agreement.
        The stated strength approach provides signal but underperforms the agreement-based method.
    }%
    \label{fig:stated_strength_ablation_bar}
\end{figure}

\textbf{Weighting of slight preferences.}
In our main experiments we assign slight preferences weight $0.5$ and clear preferences weight $1.0$ for the agreement computation.
This choice is somewhat arbitrary and depends on each annotator's individual calibration of `slight'.
Here we compare two alternative variants where we assign slight preferences weight $0.25$ and $0.75$ respectively.
\Cref{fig:slight_weight_experiment_bar} shows that performance varies slightly across these variants, but is largely robust to this choice.

\begin{figure}[tbp]
    \centering
    \includegraphics[width=\linewidth]{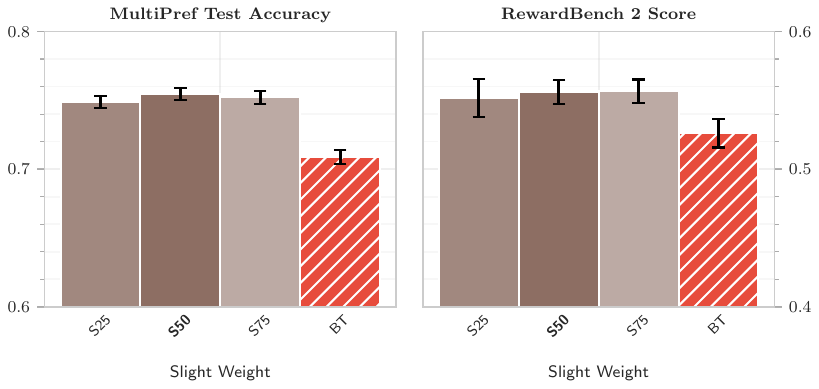}
    \caption{%
        \textbf{Weighting of slight preferences.}
        Comparison of RR-Agree variants with varying weight for slight preferences, showing robustness to different choices.
    }%
    \label{fig:slight_weight_experiment_bar}
\end{figure}

\subsection{Hyperparameter sensitivity}%
\label{app:multipref:hyperparameter_sensitivity}

To evaluate the robustness of both BT and RR-Agree across different training configurations, we conducted a hyperparameter sensitivity analysis.
We systematically vary learning rate (\cref{fig:lr_sensitivity_bar}), gradient accumulation steps (\cref{fig:acc_sensitivity_bar}), gradient clipping (\cref{fig:clip_sensitivity_bar}), number of training epochs (\cref{fig:epochs_sensitivity_bar}), warmup ratio (\cref{fig:warmup_sensitivity_bar}), and weight decay (\cref{fig:wd_sensitivity_bar}).
The values used in our main experiments are \textbf{bolded}.
While some hyperparameters have meaningful impact (particularly those related to the amount of optimization, e.g., epochs and learning rate), both methods perform reasonably across a range of values.

The main hyperparameters were selected through exploratory manual tuning during development, using RewardBench 2 performance as feedback.
While this could lead to optimistic absolute performance estimates, the scope for such overfitting is limited:
several hyperparameters have minimal impact, and the tuning was small-scale exploratory rather than systematic optimization.
Identical settings are used for BT and all \rr{} variants, and the sensitivity analysis confirms that these are favorable for both methods, ensuring fair comparison.

\begin{figure}[tbp]
    \centering
    \includegraphics[width=\linewidth]{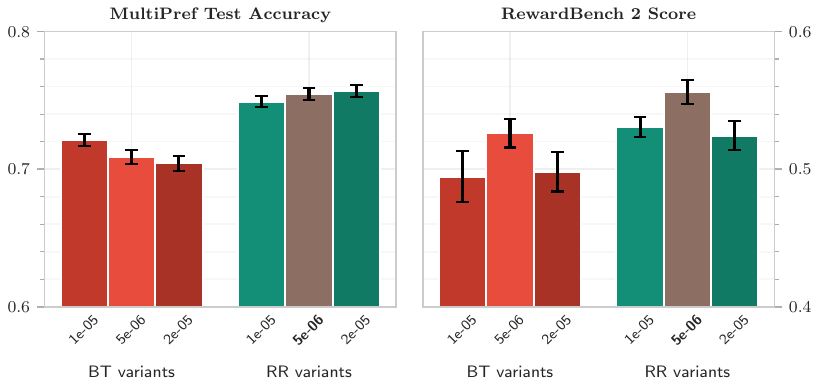}
    \caption{\textbf{Learning rate} sensitivity analysis.}%
    \label{fig:lr_sensitivity_bar}
\end{figure}
\begin{figure}[tbp]
    \centering
    \includegraphics[width=\linewidth]{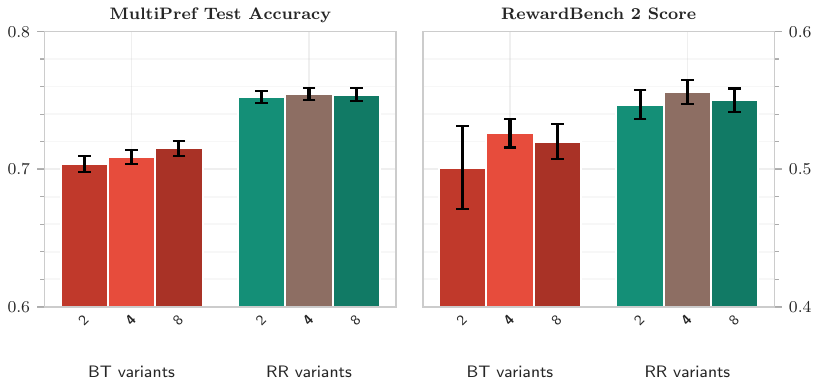}
    \caption{\textbf{Gradient accumulation steps} sensitivity analysis.}%
    \label{fig:acc_sensitivity_bar}
\end{figure}
\begin{figure}[tbp]
    \centering
    \includegraphics[width=\linewidth]{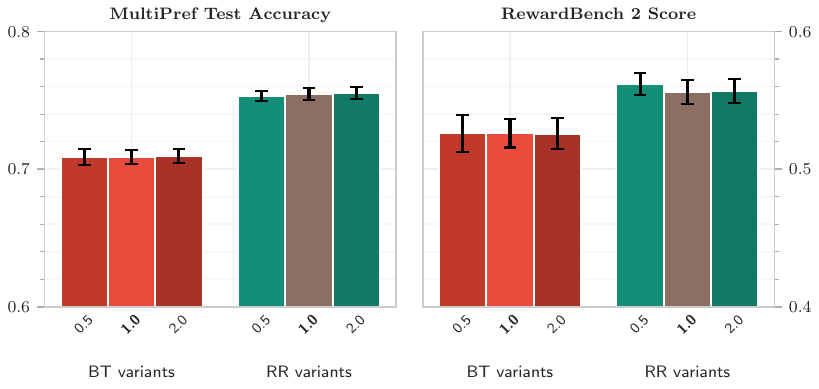}
    \caption{\textbf{Gradient clipping} sensitivity analysis.}%
    \label{fig:clip_sensitivity_bar}
\end{figure}
\begin{figure}[tbp]
    \centering
    \includegraphics[width=\linewidth]{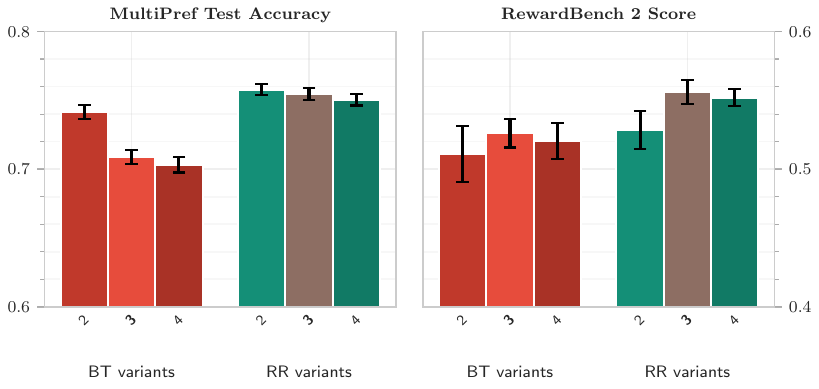}
    \caption{\textbf{Training epochs} sensitivity analysis.}%
    \label{fig:epochs_sensitivity_bar}
\end{figure}
\begin{figure}[tbp]
    \centering
    \includegraphics[width=\linewidth]{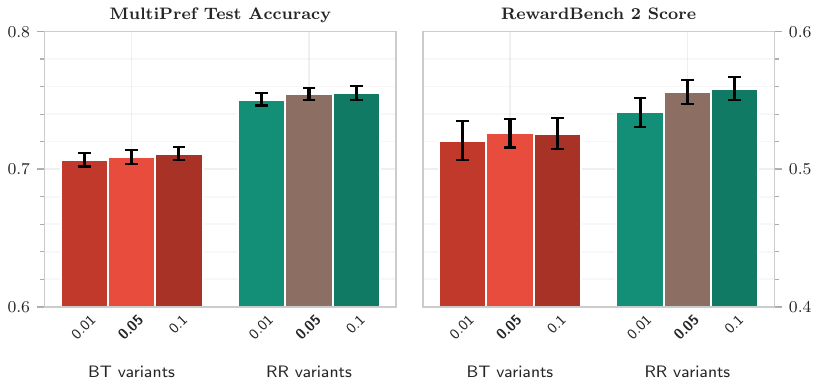}
    \caption{\textbf{Warmup ratio} sensitivity analysis.}%
    \label{fig:warmup_sensitivity_bar}
\end{figure}
\begin{figure}[tbp]
    \centering
    \includegraphics[width=\linewidth]{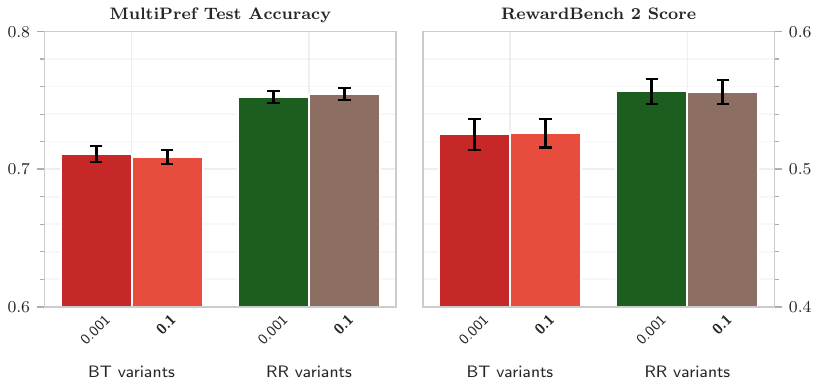}
    \caption{\textbf{Weight decay} sensitivity analysis.}%
    \label{fig:wd_sensitivity_bar}
\end{figure}

\section{Supplemental details on RL control experiments}\label{app:rl_control}

Our experiments show that \rr{} improves reward model accuracy across synthetic datasets and human preference datasets. In an additional series of experiments, we investigate the effectiveness of our approach for downstream reinforcement learning performance. We investigate this in a control experiment with synthetic preference labels and preference strength information.

\textbf{Setup.} We are interested in the performance of RL agents trained with learned reward models based on the \emph{ResponseRank} loss. We evaluate downstream performance in \emph{MuJoCo}~\citep{todorov2012mujoco}, a continuous control task, and \emph{highway-env}~\citep{leurent2018environment}, a benchmark environment with a discrete action space. Both are well-established environments in RL research. As the RL training algorithm, we use \emph{PPO} \citep{schulman2017proximal} from the \emph{Stable-Baselines3} implementation \citep{raffin2021stablebaselines3}. For downstream RL evaluation, we use the same hyperparameters from Stable Baselines3 Zoo across all experimental conditions.

\textbf{Generating synthetic preference and strength labels.}
We follow methodology from \citet{metz2025reward}. We collect a dataset of trajectories based on expert models (trained via standard RL on ground-truth rewards) that generate trajectories sampled across multiple checkpoints, ensuring diverse policy coverage and skill levels for our offline dataset, similar to~\citep{brown2019extrapolating}.
Then we randomly sample 5000 segment pairs of length 50 (truncated at episode boundaries) for reward model training. Out of these 5000 pairs, 4000 are used as the reward model training set, and 1000 samples are used as the validation set.
We generate synthetic preference and preference-strength labels from ground truth-rewards $r_{gt}(s,a)$ for state-action pairs in the environment. Given an annotated trajectory pair $(a_i,b_i)$ and discounted return $R_{gt}(seg) = \sum_{t\in0:50} \lambda^t r_{gt}(s_t, a_t)$ we synthesize the preference using
\[
    a_i \succ b_i \Leftrightarrow R_{gt}(a_i) \geq R_{gt}(b_i)
\]
and the strength as $s_\theta(a_i,b_i) = R_{gt}(a_i) - R_{gt}(b_i)$.
Clearer preference differences correspond to lower simulated response times.
Note that since only the order of pseudo RTs matters, negative values are allowed, and any preference strength estimate could substitute for response time.
The discount factor $\lambda$ is specific to an environment and fixed across all training configurations. We then randomly construct rankings of size 16 (matching our on-device batch size, avoiding the need for batch packing) for reward model training using the \rr{} loss.

\textbf{Noise}
To test robustness to reward noise, we also experimented with perturbed rewards.
Here, we sample rewards from a truncated Gaussian distribution with mean equal to the retrieved environment reward and scale $\sigma = \alpha \cdot \sigma_r$, where $\alpha$ is the noise factor (from tables) and  $\sigma_r$ is the dataset reward standard deviation.
Truncation ensures samples stay within the observed reward range, following prior work \citep{metz2025reward}, though it's not essential to our approach.
Higher noise levels reduce both preference accuracy and strength reliability.

\textbf{Training}
As reward models, we use 6-layer MLPs with 256 hidden units, processing concatenated state and action vectors (one-hot encoded for discrete action spaces).
We optimize with AdamW \citep{loshchilov2019decoupled} with a learning rate of $1e-5$, weight decay enabled, batch size of $16$, and early stopping on a validation holdout set (patience=5).
We train the reward model on the pre-collected data, and do not use further online training of the reward model during RL training.

\begin{table}[tb]
\centering
\caption{Validation accuracies of reward models for different environments and noise levels. \rr{} reward models exhibit higher reward model accuracy, but differences are minor and generally fall within the confidence interval.}
\label{tab:reward_model_accuracy}
\small
\setlength{\tabcolsep}{6pt}
\begin{tabular}{lrr}
\toprule
\textbf{Environment / noise} & 
\textbf{RR} & 
\textbf{BT} \\
\midrule
HalfCheetah-v5 / noise=0 & $\mathbf{0.998} \pm 0.001$ [$\pm$ 0.001] & $0.995 \pm 0.005$ [$\pm$ 0.006] \\
HalfCheetah-v5 / noise=0.1 & $\mathbf{0.982} \pm 0.004$ [$\pm$ 0.006] & $0.980 \pm 0.004$ [$\pm$ 0.006] \\
HalfCheetah-v5 / noise=0.25 & $\mathbf{0.932} \pm 0.004$ [$\pm$ 0.005] & $0.930 \pm 0.005$ [$\pm$ 0.007] \\
HalfCheetah-v5 / noise=0.5 & $\mathbf{0.864} \pm 0.003$ [$\pm$ 0.003] & $0.857 \pm 0.011$ [$\pm$ 0.016] \\
\midrule
Swimmer-v5 / noise=0 & $\mathbf{0.979} \pm 0.012$ [$\pm$ 0.016] & $0.948 \pm 0.009$ [$\pm$ 0.013] \\
Swimmer-v5 / noise=0.1 & $\mathbf{0.920} \pm 0.018$ [$\pm$ 0.026] & $0.904 \pm 0.020$ [$\pm$ 0.028] \\
Swimmer-v5 / noise=0.25 & $\mathbf{0.812} \pm 0.028$ [$\pm$ 0.039] & $0.799 \pm 0.027$ [$\pm$ 0.037] \\
Swimmer-v5 / noise=0.5 & $\mathbf{0.716} \pm 0.033$ [$\pm$ 0.046] & $0.714 \pm 0.029$ [$\pm$ 0.041] \\
\midrule
Walker2d-v5 / noise=0 & $\mathbf{1.000} \pm 0.001$ [$\pm$ 0.001] & $0.996 \pm 0.002$ [$\pm$ 0.003] \\
Walker2d-v5 / noise=0.1 & $\mathbf{0.990} \pm 0.002$ [$\pm$ 0.002] & $0.988 \pm 0.001$ [$\pm$ 0.002] \\
Walker2d-v5 / noise=0.25 & $\mathbf{0.955} \pm 0.008$ [$\pm$ 0.011] & $0.952 \pm 0.008$ [$\pm$ 0.011] \\
Walker2d-v5 / noise=0.5 & $\mathbf{0.884} \pm 0.005$ [$\pm$ 0.007] & $0.883 \pm 0.004$ [$\pm$ 0.005] \\
\midrule
merge-v0 / noise=0 & $0.843 \pm 0.012$ [$\pm$ 0.017] & $\mathbf{0.845} \pm 0.010$ [$\pm$ 0.014] \\
merge-v0 / noise=0.1 & $\mathbf{0.829} \pm 0.013$ [$\pm$ 0.018] & $0.828 \pm 0.013$ [$\pm$ 0.018] \\
merge-v0 / noise=0.25 & $\mathbf{0.798} \pm 0.020$ [$\pm$ 0.028] & $0.789 \pm 0.018$ [$\pm$ 0.025] \\
merge-v0 / noise=0.5 & $\mathbf{0.714} \pm 0.010$ [$\pm$ 0.014] & $0.712 \pm 0.010$ [$\pm$ 0.013] \\
\bottomrule
\end{tabular}

\end{table}

\textbf{Reward Model Training} In \cref{tab:reward_model_accuracy}, we report reward model accuracies, i.e., prediction accuracy for identifying the preferred segment. We find that \rr{} outperforms the BT-baseline in most configurations, but that accuracies only differ slightly. We hypothesize that the improved downstream RL  performance (see \cref{fig:rl_control_combined_reward_lines}), is due to the learned reward models predicting a better shaped reward function compared to the Bradley-Terry baseline.

\textbf{Downstream RL Results} \Cref{fig:rl_control_combined_reward_lines} shows that \rr{} generally outperforms BT models.
For merge-v0, \rr{} consistently outperforms despite similar reward model accuracy, suggesting better reward shaping. In all environments, the maximum reward achieved during training is higher for the RL agent with access to the \rr{} reward model. For all but one environment, also the final reward is higher.

\textbf{Noise robustness.}
The results above were measured with perfect environment rewards.
To measure robustness to noise, we also benchmarked with different noise levels (following \citet{metz2025reward}), which perturbs the underlying ground-truth reward and in turn extracted response times and preferences.
\Cref{fig:rl_control_combined_reward_lines} (numerical results in \cref{tab:rl_control_final_reward_table,tab:rl_control_max_reward_table}) shows that \rr{} outperforms the BT-model in low-noise scenarios, and only underperforms when overall RL performance deteriorates significantly.
In environments that are generally robust to noise (such as merge-v0 or HalfCheetah), \rr{} also stays stable.

\begin{figure}
    \centering
    \includegraphics[width=\textwidth]{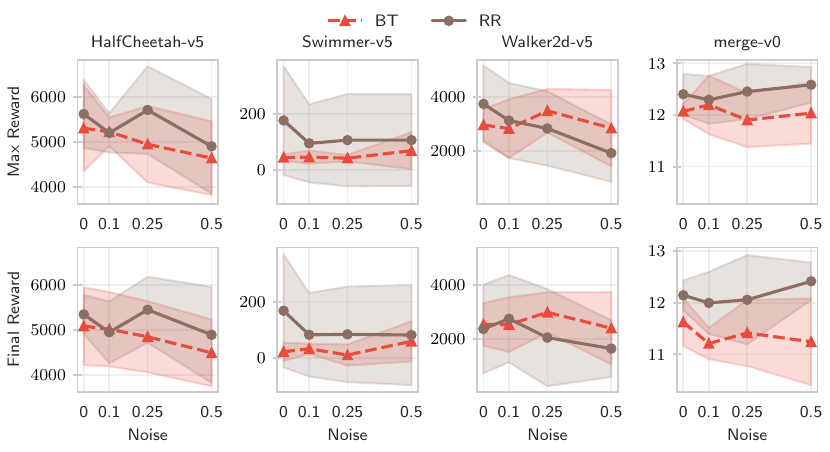}
    \caption{%
        \textbf{RL control with reward noise.}
        Showing final and maximum reward (mean and 95\% CI) across 5 seeds as a function of noise.
    }\label{fig:rl_control_combined_reward_lines}
\end{figure}

\begin{table}[tb]
    \centering
    \caption{RL control with reward noise (final reward).}%
    \label{tab:rl_control_final_reward_table}
    \begin{tabular}{lrr}
\toprule
\textbf{Environment / noise} & \textbf{BT} & \textbf{RR} \\
\midrule
HalfCheetah-v5 / noise=0 & $5083.9 \pm 621.0$ [$\pm$ 862.1] & $\mathbf{5345.0} \pm 311.9$ [$\pm$ 433.0] \\
HalfCheetah-v5 / noise=0.1 & $\mathbf{5014.9} \pm 595.3$ [$\pm$ 826.4] & $4946.7 \pm 493.5$ [$\pm$ 685.2] \\
HalfCheetah-v5 / noise=0.25 & $4847.7 \pm 567.5$ [$\pm$ 787.9] & $\mathbf{5448.2} \pm 526.5$ [$\pm$ 730.9] \\
HalfCheetah-v5 / noise=0.5 & $4491.9 \pm 533.8$ [$\pm$ 741.1] & $\mathbf{4890.7} \pm 767.4$ [$\pm$ 1065.3] \\
\midrule
Swimmer-v5 / noise=0 & $21.2 \pm 23.8$ [$\pm$ 33.0] & $\mathbf{167.7} \pm 145.2$ [$\pm$ 201.6] \\
Swimmer-v5 / noise=0.1 & $32.2 \pm 12.9$ [$\pm$ 17.9] & $\mathbf{81.8} \pm 107.1$ [$\pm$ 148.7] \\
Swimmer-v5 / noise=0.25 & $9.8 \pm 27.1$ [$\pm$ 37.7] & $\mathbf{83.4} \pm 122.8$ [$\pm$ 170.4] \\
Swimmer-v5 / noise=0.5 & $58.5 \pm 51.8$ [$\pm$ 71.9] & $\mathbf{81.1} \pm 128.7$ [$\pm$ 178.7] \\
\midrule
Walker2d-v5 / noise=0 & $\mathbf{2531.5} \pm 570.8$ [$\pm$ 792.4] & $2363.9 \pm 1183.3$ [$\pm$ 1642.7] \\
Walker2d-v5 / noise=0.1 & $2526.2 \pm 730.5$ [$\pm$ 1014.1] & $\mathbf{2742.2} \pm 1164.8$ [$\pm$ 1617.0] \\
Walker2d-v5 / noise=0.25 & $\mathbf{2978.8} \pm 539.8$ [$\pm$ 749.4] & $2042.0 \pm 1295.0$ [$\pm$ 1797.7] \\
Walker2d-v5 / noise=0.5 & $\mathbf{2391.6} \pm 965.8$ [$\pm$ 1340.8] & $1636.8 \pm 758.4$ [$\pm$ 1052.8] \\
\midrule
merge-v0 / noise=0 & $11.6 \pm 0.3$ [$\pm$ 0.5] & $\mathbf{12.1} \pm 0.2$ [$\pm$ 0.3] \\
merge-v0 / noise=0.1 & $11.2 \pm 0.2$ [$\pm$ 0.3] & $\mathbf{12.0} \pm 0.4$ [$\pm$ 0.6] \\
merge-v0 / noise=0.25 & $11.4 \pm 0.5$ [$\pm$ 0.6] & $\mathbf{12.1} \pm 0.6$ [$\pm$ 0.9] \\
merge-v0 / noise=0.5 & $11.2 \pm 0.6$ [$\pm$ 0.8] & $\mathbf{12.4} \pm 0.3$ [$\pm$ 0.4] \\
\bottomrule
\end{tabular}

\end{table}

\begin{table}[tb]
    \centering
    \caption{RL control with reward noise (max reward).}%
    \label{tab:rl_control_max_reward_table}
    \begin{tabular}{lrr}
\toprule
\textbf{Environment / noise} & \textbf{BT} & \textbf{RR} \\
\midrule
HalfCheetah-v5 / noise=0 & $5306.8 \pm 685.6$ [$\pm$ 951.8] & $\mathbf{5623.6} \pm 544.2$ [$\pm$ 755.4] \\
HalfCheetah-v5 / noise=0.1 & $\mathbf{5227.7} \pm 230.0$ [$\pm$ 319.3] & $5203.7 \pm 312.5$ [$\pm$ 433.8] \\
HalfCheetah-v5 / noise=0.25 & $4951.4 \pm 611.6$ [$\pm$ 849.1] & $\mathbf{5710.4} \pm 701.3$ [$\pm$ 973.6] \\
HalfCheetah-v5 / noise=0.5 & $4644.0 \pm 588.5$ [$\pm$ 816.9] & $\mathbf{4905.3} \pm 756.5$ [$\pm$ 1050.2] \\
\midrule
Swimmer-v5 / noise=0 & $44.3 \pm 8.2$ [$\pm$ 11.3] & $\mathbf{176.3} \pm 139.9$ [$\pm$ 194.2] \\
Swimmer-v5 / noise=0.1 & $45.7 \pm 17.0$ [$\pm$ 23.7] & $\mathbf{94.6} \pm 100.3$ [$\pm$ 139.2] \\
Swimmer-v5 / noise=0.25 & $41.8 \pm 8.4$ [$\pm$ 11.6] & $\mathbf{106.0} \pm 118.6$ [$\pm$ 164.6] \\
Swimmer-v5 / noise=0.5 & $68.6 \pm 47.8$ [$\pm$ 66.3] & $\mathbf{106.2} \pm 117.8$ [$\pm$ 163.5] \\
\midrule
Walker2d-v5 / noise=0 & $2961.1 \pm 419.9$ [$\pm$ 582.9] & $\mathbf{3741.9} \pm 1024.6$ [$\pm$ 1422.3] \\
Walker2d-v5 / noise=0.1 & $2828.0 \pm 785.2$ [$\pm$ 1090.1] & $\mathbf{3126.1} \pm 1001.5$ [$\pm$ 1390.3] \\
Walker2d-v5 / noise=0.25 & $\mathbf{3486.3} \pm 585.2$ [$\pm$ 812.4] & $2831.4 \pm 993.7$ [$\pm$ 1379.4] \\
Walker2d-v5 / noise=0.5 & $\mathbf{2844.1} \pm 1020.5$ [$\pm$ 1416.6] & $1918.2 \pm 764.5$ [$\pm$ 1061.3] \\
\midrule
merge-v0 / noise=0 & $12.1 \pm 0.1$ [$\pm$ 0.1] & $\mathbf{12.4} \pm 0.3$ [$\pm$ 0.4] \\
merge-v0 / noise=0.1 & $12.2 \pm 0.4$ [$\pm$ 0.6] & $\mathbf{12.3} \pm 0.3$ [$\pm$ 0.5] \\
merge-v0 / noise=0.25 & $11.9 \pm 0.4$ [$\pm$ 0.5] & $\mathbf{12.5} \pm 0.4$ [$\pm$ 0.5] \\
merge-v0 / noise=0.5 & $12.0 \pm 0.4$ [$\pm$ 0.6] & $\mathbf{12.6} \pm 0.2$ [$\pm$ 0.3] \\
\bottomrule
\end{tabular}

\end{table}

\textbf{Training Curves}
In \cref{fig:rl-training-curve-halfcheetah} to \cref{fig:rl-training-curve-merge}, we report the training curves for the RL agent training with both the BT-baseline and \rr{} variants. The \rr{} reward models improve learning performance in most cases. However, the results for the \emph{Swimmer-v5} environment show very high variance for the \rr{} reward models.
Note that shaded areas in the training curve plots indicate standard deviation, not 95\% CI as in most other plots.
This indicates training stability and complements the 95\% CIs used in \cref{fig:rl_control_combined_reward_lines}.

\begin{figure}[ht]
    \centering
    \includegraphics[width=0.9\linewidth]{./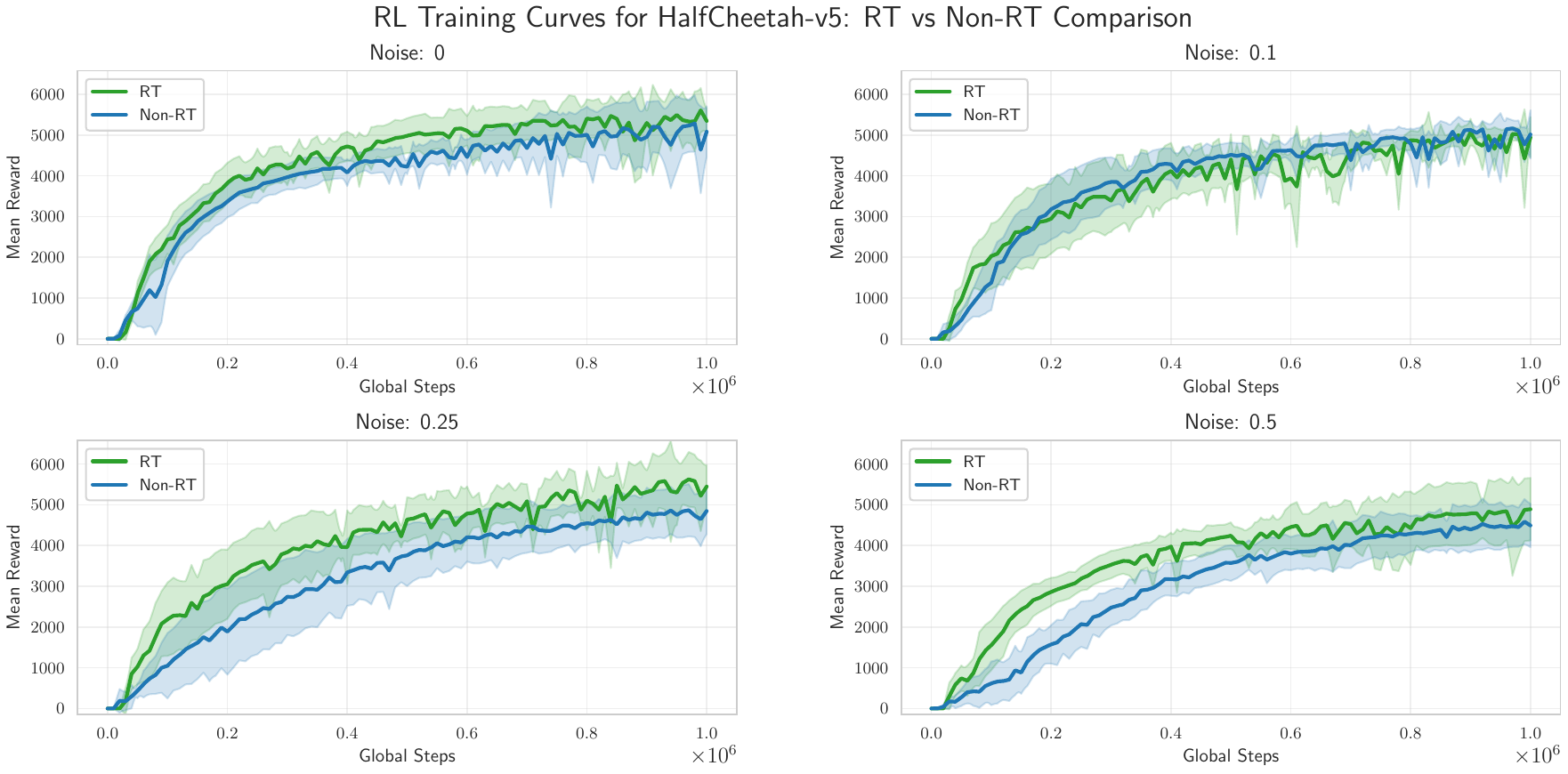}
    \caption{Training curves for \emph{HalfCheetah-v5} across different noise levels. Showing mean and std across 5 seeds.}%
    \label{fig:rl-training-curve-halfcheetah}
\end{figure}

\begin{figure}[ht]
    \centering
    \includegraphics[width=0.9\linewidth]{./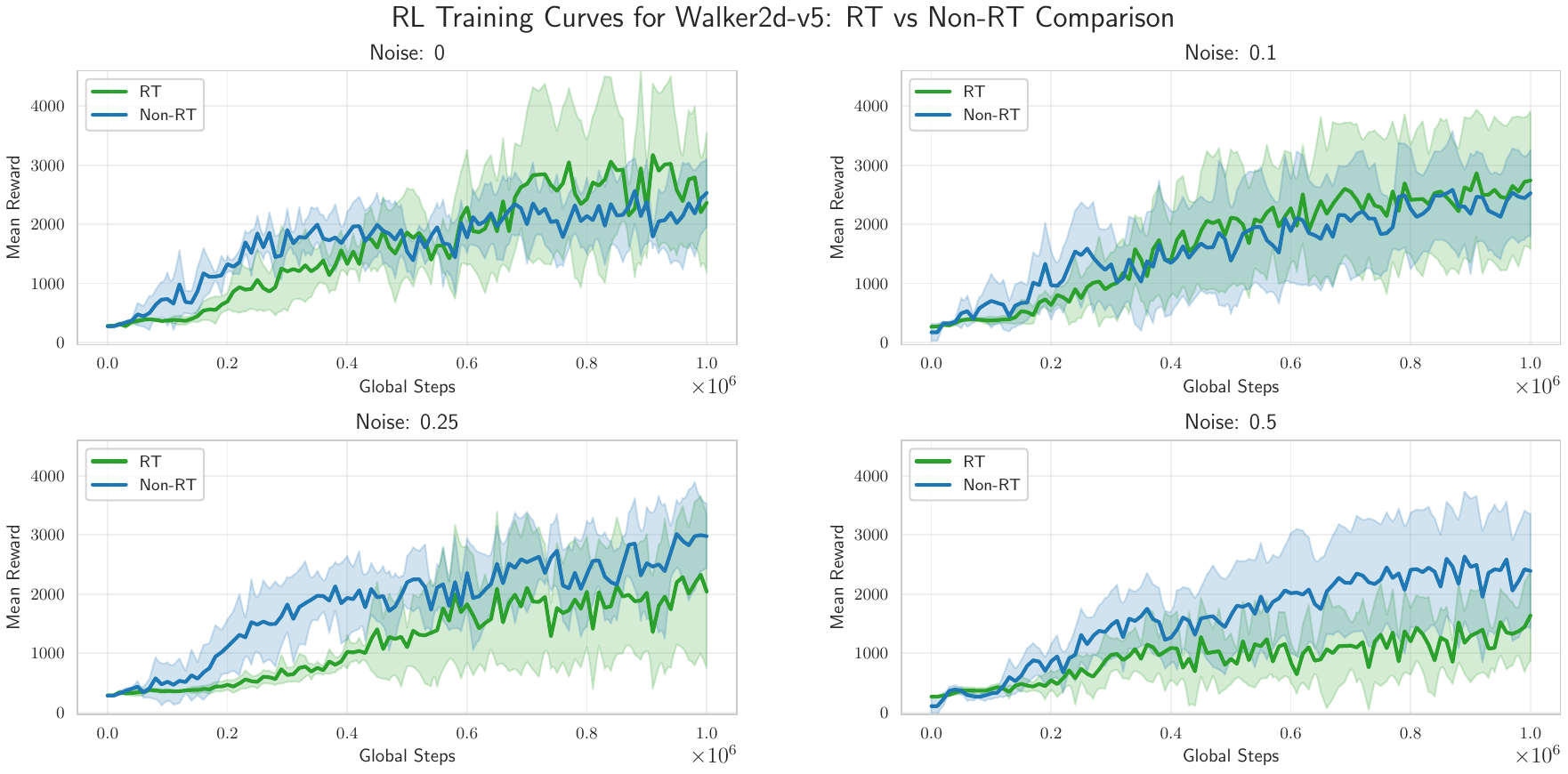}
    \caption{Training curves for \emph{Walker2d-v5} across different noise levels. Showing mean and std across 5 seeds.}%
    \label{fig:rl-training-curve-walker}
\end{figure}

\begin{figure}[ht]
    \centering
    \includegraphics[width=0.9\linewidth]{./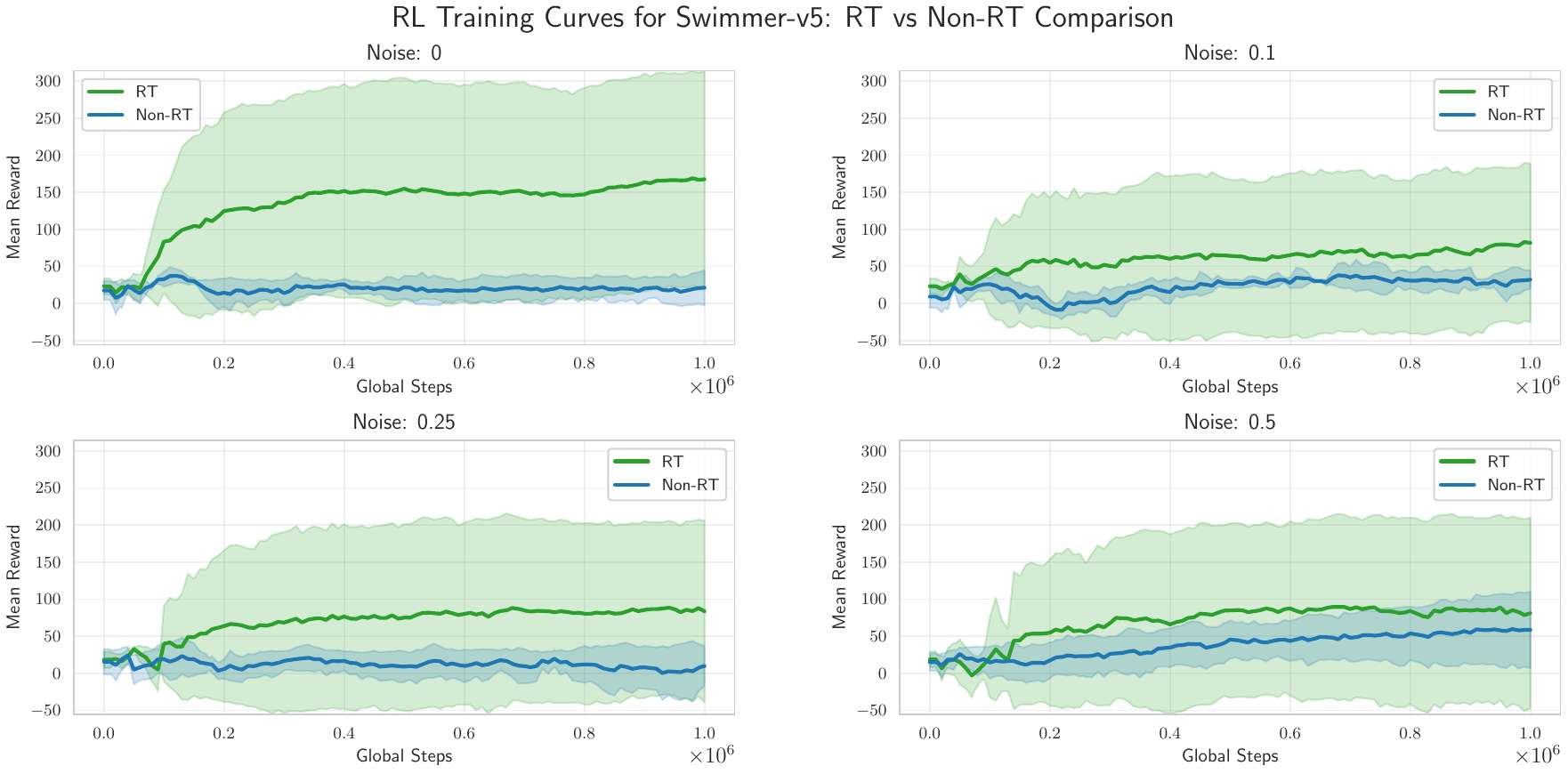}
    \caption{Training curves for \emph{Swimmer-v5} across different noise levels. Showing mean and std across 5 seeds.}%
    \label{fig:rl-training-curve-swimmer}
\end{figure}

\begin{figure}[ht]
    \centering
    \includegraphics[width=0.9\linewidth]{./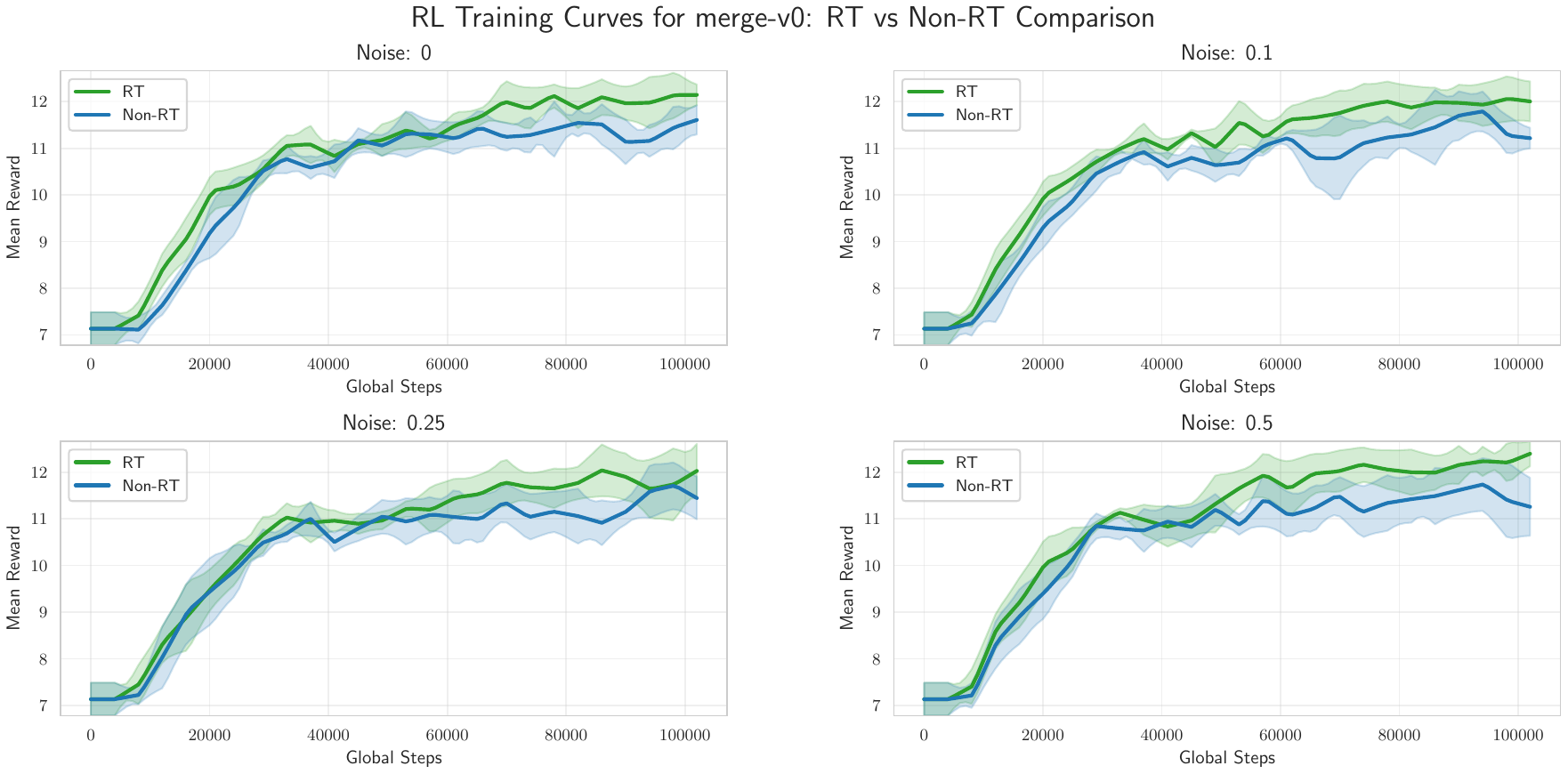}
    \caption{Training curves for \emph{merge-v0} across different noise levels.  Showing mean and std across 5 seeds.}%
    \label{fig:rl-training-curve-merge}
\end{figure}

\textbf{Discussion of RL Results} Our results provide a strong first indication that the improved reward modeling performance translates to downstream RL performance.
However, we acknowledge that this investigation is limited so far and requires additional experiments, ideally across more scenarios and environments.
Larger scale validation in the control domain remains important future work.

\section{Background on response times and preference learning}%
\label{app:background}

This section discusses the foundational models and empirical evidence that support using response times (RTs) as a signal for preference strength.
It also addresses the complexities inherent in this relationship and draws conceptual parallels to learning strength in other contexts.

\subsection{Foundational support for RTs as preference signals}%
\label{app:background:rt_theoretical_foundations}

The premise that RTs offer valuable signals for inferring preference strength is supported by a body of theoretical work, cognitive modeling, and empirical evidence. %
Humans demonstrably infer others' preference strength from RTs in various contexts, from simple inference tasks~\citep{
gates2021rational,%
frydman2022using%
} to strategic interactions like bargaining games~\citep{
konovalov2023decision%
}.

A key theoretical justification is that decision time can reveal latent aspects of preferences beyond the choice outcome itself~\citep{%
bavard2024humans,%
alos-ferrer2021time%
}.
\Citet{alos-ferrer2021time}, for instance, provide formal grounding, demonstrating through analysis of RT distributions that preferences ($u(x) \ge u(y)$) can be learned with weaker noise assumptions than choice-only data typically require.
Their findings indicate that RTs can help recover basic preferences, generalize preference order under symmetric noise assumptions, and even generalize relative choice probabilities under stricter Fechnerian noise assumptions.
This informational content of RTs is further highlighted by empirical findings showing that RTs often correlate with latent utility differences learned by models trained only on binary comparisons (implicitly learning strength, as discussed in \cref{app:implicit_strength}), suggesting RTs reflect these underlying differences \citep{%
shvartsman2024response%
}
and may help learn them more data-efficiently.
While \rr{} does not estimate full RT distributions, this fundamental insight that RTs inherently contain information about underlying utility differences supports their use as a signal for preference strength.

The drift-diffusion model (DDM)~\citep{
    ratcliff1978theory,%
    ratcliff2008diffusion%
} provides a well-supported cognitive model explaining this link.
It describes the decision process as a stochastic accumulation of evidence over time until a decision threshold is reached.
The DDM posits that stronger preferences lead to faster evidence accumulation (higher drift rate) and thus quicker decisions.
This inverse relationship is observed empirically; for example, faster responses frequently indicate stronger preferences in tasks like food choice~\citep{%
    milosavljevic2010drift%
}, while more cognitive effort (longer RTs) is common when decision-makers are close to indifference in decisions under risk~\citep{%
    moffatt2005stochastic%
}.
In value-based DDM applications~\citep{
milosavljevic2010drift%
},
the \emph{drift rate} $v$, indicating evidence accumulation speed and direction, is often modeled as a linear function of the utility difference between two options, $v = \beta(u_a - u_b)$.
Here, $u_a$ and $u_b$ are item utilities, and $\beta$ is a sensitivity parameter.
This model predicts that larger utility differences (stronger preferences) result in higher drift rates and, consequently, faster RTs, providing a theoretical justification for using RTs as a proxy for preference strength.

Moreover, the ability to learn preference strength, as opposed to just ordinal preferences, has been shown to enhance model generalization and robustness.
From a theoretical point of view, assuming the existence of item utilities allows for generalization to unseen comparisons.
Utility differences, often implied by RTs, provide additional structural information about the underlying utility function that aids learning beyond simple preference order.
For instance, incorporating RTs to better model utility differences can lead to improved out-of-sample prediction in choice tasks~\citep{clithero2018improving} and better performance in preference-based bandit settings~\citep{li2024enhancing}.
Humans similarly use RT-implied utility differences to infer outcomes of unseen comparisons beyond simple transitivity~\citep{gates2021rational}.
Understanding preference strength is also critical for robust value alignment, particularly when choice probabilities are near degenerate values (0 or 1), where outcomes become highly sensitive to small perturbations in learned utilities~\citep{%
xu2025strong%
}.
These benefits underscore the value of methods capable of extracting cardinal preference information.

\subsection{Complexities and confounding factors in using RTs}%
\label{app:background:rt_complexities}

Despite foundational support, the relationship between RT and preference strength is complex and subject to numerous confounding influences.
Within the DDM framework itself, observed RTs are affected not only by drift rate (utility difference) but also by parameters like decision threshold (caution), non-decision time (e.g., stimulus encoding), and evidence accumulation starting point.
These parameters can vary significantly across individuals (e.g., differing annotator speeds) and tasks.

External contextual influences further complicate the picture.
User fatigue, experience and boredom, or variations in financial incentives can impact RTs~\citep{
    moffatt2005stochastic%
}.
Task complexity is a major factor; for instance, RTs can be dominated by the reading time required for lengthy texts or by the inherent difficulty of discriminating between complex stimuli~\citep{%
    goecke2021binding%
}.
This overall variability makes absolute RTs globally noisy indicators of preference strength.

The nature of the RT-strength relationship can also be nuanced.
Factors like decision difficulty, uncertainty, and cognitive load can mean slower RTs sometimes correlate with stronger deliberation over highly valued but conflicting options~\citep{%
campbell2018link%
}.
\Citet{campbell2018link} highlights that RT is linked not only to utility differences but also to error variance and cognitive processing strategies.
The use of RTs extends to various contexts for inferring cognitive processes; for example, \citet{montero-porras2022fast} show that in some settings of the iterated Prisoner's Dilemma, extended deliberation times (slower RTs) are associated with higher cooperation rates, indicating RTs can signal underlying strategy or intent -- further showing the complex nature of the signal.

Furthermore, the typical inverse RT-strength link may not hold universally.
Under certain high-variance experimental conditions, increased deliberation time might not lead to improved (or even worsen) decision accuracy~\citep{
busemeyer1985decision%
}.
In strategic settings, individuals might consciously alter their RTs to influence an observer's perception, irrespective of their true preference strength~\citep{
konovalov2023decision%
}.
Even under ideal conditions, the marginal information that RTs provide beyond choice consistency varies with preference strength:
RTs are most informative when preferences are strong and choices alone are near-degenerate, but contribute less when preferences are weak and choices already vary informatively \citep{%
li2024enhancing%
}.

The design of \rr{} is fundamentally motivated by the need to navigate these complexities.
By using \emph{relative} RTs within carefully constructed, homogeneous strata (e.g., per-annotator or per-session), \rr{} aims to control for many of these DDM-internal and external confounders.
The assumption is that within such local contexts, confounding influences are more likely to be constant or vary less, allowing relative RT differences to more reliably reflect preference strength.
This approach enables \rr{} to learn utility differences robustly without explicitly modeling the full DDM or the exact, complex relationship between absolute RTs and preference strength across diverse settings and individuals.
Moreover, \citet{gates2021rational} suggest that relative utility differences can often be inferred from RTs without needing strict assumptions about the decision model.

\subsection{Learning strength in classification-like settings}%
\label{app:background:classification_settings}

Learning utility functions with choice models like the Bradley--Terry model effectively reduces preference learning to a classification task, where preference strength can be reflected in the classifier's confidence.
Techniques in supervised learning that address label certainty or aim to modulate learning based on a kind of `strength' can offer conceptual parallels to learning preference strength.

Label smoothing is a common technique to prevent overfitting in classification by relaxing hard labels to a target distribution, typically mixing the ground-truth label with a uniform distribution~\citep{szegedy2016rethinking}.
While this can mitigate overconfidence in predictions, its fixed and static nature implies that the model is encouraged to predict this smoothed target distribution regardless of the input, which might be suboptimal in some cases~\citep{muller2019when}.

Label relaxation provides an alternative by \emph{relaxing} the classification target to a range of permissible values, rather than a single fixed distribution~\citep{lienen2021label}.
This approach allows the model to predict labels based on its own learned confidence for a given input, relying on its inductive biases to distinguish between easy (high confidence) and hard (low confidence) examples.
The model is not forced to predict a specific distribution but can learn to assign higher confidence (analogous to stronger preference) to certain outcomes based on the data.

While these techniques are not directly about learning from RTs, they address the challenge of moving beyond binary outcomes to incorporate a notion of strength or confidence in a learning process.
This shares a conceptual similarity with \rr{}'s goal of extracting preference strength information from relative RTs, rather than solely relying on binary preference choices.

\subsection{Second-order preference learning}%
\label{app:background:second_order}

\rr{}'s approach to learning preference strength is grounded in the concept of second-order preferences.
A \emph{second-order preference relation} is a preference over comparisons themselves, rather than just a preference over individual items.
For instance, the preference for $a_1$ over $b_1$ might be judged stronger than the preference for $a_2$ over $b_2$.
Assuming an underlying utility function $u(x)$, this implies an ordering over utility \emph{differences}: $\abs{u(a_1) - u(b_1)} > \abs{u(a_2) - u(b_2)}$.

In \rr{}, we leverage response times (RTs) to induce a ranking over a set of pairwise comparisons.
This RT-derived ranking (see \cref{sec:method} and \cref{tab:second-order-example}) serves as an empirical proxy for such a second-order preference relation, with faster RTs (for a given choice direction) indicating stronger preference.

The theoretical basis for learning from second-order preferences is well-established.
\Citet{suppes1955axiomatization} demonstrate that, under certain assumptions concerning a second-order preference relation (which they term a \emph{difference relation}), information about the ordering of utility differences is sufficient to identify an underlying utility function $u(x)$ up to a positive linear transformation.
\rr{} aims to recover this cardinal utility function.
Presupposing the existence of $u(x)$ (a common starting point in RLHF), these assumptions from \citet{suppes1955axiomatization} translate to two main categories of conditions for our context:

\begin{enumerate}
    \item \textbf{Reliable RT Proxy:}
    Human-generated RTs must consistently reflect the ordering of true latent utility differences.
    Since RTs are real numbers, any quantitative strength measure derived from them will be numerically comparable and transitive.
    The core assumption is that this derived order is a faithful proxy for the true order of utility differences.

    \item \textbf{Structural Integrity of True Latent Preferences:}
    The underlying true utility function $u(x)$ and its differences are assumed to possess structural properties ensuring they behave like measures on a continuous, well-ordered scale.
    Key among these (following \citet{suppes1955axiomatization}'s Axioms A6, A9, A10, A11) are that:
    \begin{itemize}
        \item
            The set of items $K$ and the utility function $u$ are sufficiently rich (e.g., to allow for approximate midpoints and continuity).
        \item
            Utility differences combine additively and consistently.
        \item
            All non-zero utility differences are Archimedean (i.e., commensurable, without infinitely small or large differences relative to others).
            While RT-derived strengths are inherently Archimedean (due to RTs being real numbers), the foundational assumption is that this reflects an Archimedean nature of the \emph{true underlying preferences}.
    \end{itemize}
\end{enumerate}
In essence, these structural assumptions ensure that subjective `preference strength' corresponds to a well-behaved cardinal utility scale.
\rr{}, by learning from RT-derived rankings, operates on the premise that RTs are a sufficiently faithful proxy to allow recovery of a utility model $u_\theta(x)$ that reflects these underlying principles of $u(x)$.

A second-order preference relation subsumes a first-order one by
\(
    a \succeq_1 b \iff ab \succeq_2 aa
\).
\rr{} realizes this by:
(1) using the first-order preference $p_i$ to normalize comparisons into a $(w_i, l_i)$ format, and
(2) ranking the resultant utility differences $s_\theta(w_i, l_i)$ against a \emph{virtual anchor}.
This anchor serves as the conceptual $aa$ element, with $s_\theta(aa)=0$ (or more precisely, its score in the Plackett-Luce list is 0).
Note that this element does not really refer to any object being compared; the score is entirely virtual and not related to any comparison.
An anchor element serving a comparable role appears in the multiclass classification approach of \citet{furnkranz2008multilabel}, where an `artificial calibration label' separates relevant from irrelevant labels. %
Ranking $(w_i, l_i)$ pairs above this zero-score anchor directly ensures $s_\theta(w_i, l_i) > 0$, thereby establishing the correct ordinal preference $u_\theta(w_i) > u_\theta(l_i)$.
The predictor's inherent antisymmetry ($s_\theta(x,x)=0, s_\theta(x,y)=-s_\theta(y,x)$) then correctly orients $s_\theta(l_i, w_i)$ automatically.
Thus, a single Plackett-Luce loss, operating on these RT-ordered, correctly-signed utility differences $s_\theta(w_i,l_i)$ and the anchor, learns both preference order and strength simultaneously.

\section{Proof of reduction to BT}\label{app:responserank_bt_proof}

\begin{proof}[Proof for \cref{thm:responserank_bt}]
    The theorem considers the special case where an \rr{} stratum contains only a single comparison.
    Let this comparison be $q$, where item $a$ is preferred to item $b$.
    Following the \rr{} methodology (as described in \cref{sec:method}), this comparison $q$ is normalized to $(a,b)$ (winner first, loser second).
    Its RT-derived ranking places it above the virtual anchor element, $\lambda_0$.
    Thus, the target ranking for the Plackett-Luce (PL) model is the ordered list $[q, \lambda_0]$.

    The PL model assigns scores to the elements being ranked.
    \begin{itemize}
        \item The score for the comparison $q=(a,b)$ (representing its preference strength) is the predicted utility difference $s_\theta(q) = u_\theta(a) - u_\theta(b)$. For brevity, let's denote this as $s_{ab}$.
        \item The score for the virtual anchor $\lambda_0$ is fixed at $s(\lambda_0) = 0$.
    \end{itemize}

    The probability of observing the target ranking $[q, \lambda_0]$ under the Plackett-Luce model is calculated as the product of probabilities of selecting each item in order from the remaining set:
    \begin{align}
        P_{\text{PL}}\left([q, \lambda_0]\right) &= \underbrace{\frac{\exp(s_{ab})}{\exp(s_{ab}) + \exp(s(\lambda_0))}}_{\text{Prob. of } q \text{ chosen first from } \{q, \lambda_0\}} \cdot \underbrace{\frac{\exp(s(\lambda_0))}{\exp(s(\lambda_0))}}_{\text{Prob. of } \lambda_0 \text{ chosen next from } \{\lambda_0\}} \\
        &= \frac{\exp(s_{ab})}{\exp(s_{ab}) + \exp(0)} \cdot 1 \\
        &= \frac{\exp(s_{ab})}{\exp(s_{ab}) + 1} \label{eq:proof_pl_step1}
    \end{align}
    Now, substituting $s_{ab} = u_\theta(a) - u_\theta(b)$ (the predicted utility difference):
    \begin{align}
        P_{\text{PL}}\left([q, \lambda_0]\right) &= \frac{\exp(u_\theta(a) - u_\theta(b))}{1 + \exp(u_\theta(a) - u_\theta(b))} \\
        &= \frac{\exp(u_\theta(a)) / \exp(u_\theta(b))}{1 + \left(\exp(u_\theta(a)) / \exp(u_\theta(b))\right)} \\
        &= \frac{\exp(u_\theta(a))}{\exp(u_\theta(b)) + \exp(u_\theta(a))} \label{eq:proof_bt_form}
    \end{align}
    This final expression~\eqref{eq:proof_bt_form} is precisely the probability that item $a$ is preferred to item $b$ (denoted $a \succ b$) under the BT model, $P_{\text{BT}}(a \succ b)$.

    Optimizing the parameters $\theta$ to maximize likelihood is equivalent to maximizing the log-likelihood (LL).
    For a single observed preference, $y \in \{0,1\}$, where $y=1$ if $a \succ b$ and $y=0$ if $b \succ a$, and $p = P_{\text{BT}}(a \succ b)$, the negative log-likelihood is:
    \begin{align}
        \text{NLL} = -[y \log(p) + (1-y) \log(1-p)].
    \end{align}
    This NLL is, by definition, the binary cross-entropy between the true (observed) preference distribution (represented by $y$) and the model's predicted probability distribution (represented by $p$).
    Thus, maximizing $P_{\text{PL}}\left([q, \lambda_0]\right)$ is equivalent to minimizing the binary cross-entropy loss for $P_{\text{BT}}(a \succ b)$.

    This completes the proof.
\end{proof}

\section{Extended discussion of the Pearson distance correlation}\label{app:pdc}

This appendix provides a more detailed discussion of the Pearson Distance Correlation (PDC) metric, complementing \cref{sec:the_pearson_distance_correlation} in the main text.
We elaborate on its motivation, desirable properties, its validation via synthetic experiments, estimation from finite data, formal proofs of its properties, and its limitations.

\subsection{Reliance on true utilities}%
\label{app:pdc:true_utilities}
A practical challenge for PDC is its reliance on true utilities $u$ (yielding true differences $\Delta U$), which are often unavailable in real-world datasets. %
When a large dataset is accessible, we propose an approximation:
train a BT model on its \emph{entirety}, using the derived utility function $u_{\mathrm{BT}}$ as a proxy ground-truth.
This $u_{\mathrm{BT}}$ then allows computing an approximate PDC for other models (e.g., \rr{}) trained on \emph{smaller data subsets}.
This approach is justified because BT utilities are identified up to a constant additive shift in log space~\citep{train2009discrete};
thus, with sufficient data, the absolute utility differences will closely approximate the true $\Delta U$ (further discussed in \cref{app:implicit_strength}).
For model training it would be preferable to directly use the model trained on more data, but this method enables PDC-based evaluation of distance-learning on real-world data during method development.
This setup specifically tests how much preference strength can be extracted from (in this case artificially) limited data using informative signals.

\subsection{Motivation for a dedicated preference strength metric}%
\label{app:pdc:motivation}

Common metrics like calibration error (e.g., True Calibration Error, TCE, based on $l_1$ distance between predicted confidences and true likelihoods~\citep{
    guo2017calibration,%
    roelofs2022mitigating,%
    kumar2019verified%
}) might seem relevant, especially when evaluating models like Bradley-Terry that link utility differences to choice probabilities (akin to classification confidence).
However, such metrics are often insufficient for the specific purpose of isolating how well a model has learned \emph{cardinal distance} information (i.e., preference strength).
They typically lack key properties desirable for this task:
\begin{itemize}
    \item
    They often do not possess \emph{affine invariance}.
    Scaling utilities directly changes choice probabilities derived from them, thereby affecting calibration metrics.
    However, since only relative preference \emph{distances} (up to a positive scaling factor) influence expected utility maximization, the evaluation of learned distance should be robust to such utility transformations.
    Intuitively, if we rank policies by expected total reward, affine transformations of these preserve ordering and therefore do not change the optimal policy in an RLHF context.
    \item
    They usually lack \emph{ordinal independence}, as they tend to conflate ordinal accuracy (which item is preferred) with the accuracy of probability estimates (which are tied to utility differences).
    A model with perfect ordinal accuracy but poor probability estimation (due to miscalibrated distances) would be penalized, making it hard to isolate the quality of distance learning.
    \item
    They typically do not provide a clear \emph{baseline} for zero distance learning that is independent of ordinal performance.
\end{itemize}
To address these limitations, a metric for preference strength should ideally satisfy several key properties.
The Pearson Distance Correlation (PDC), as introduced in \cref{sec:the_pearson_distance_correlation}, is designed with these in mind:
\begin{enumerate}
    \item \textbf{Distance Sensitivity.}
    The metric should increase monotonically as the model's predictions more accurately capture information about the true differences (distances) in utility.
    It should specifically reflect the quality of the learned magnitudes of these differences.
    \item \textbf{Affine Invariance.}
    The metric should be unchanged by positive affine transformations of the utility function (i.e., $u'(x) = au(x) + b$ for $a > 0$).
    This is important because such transformations preserve the underlying preference structure (and thus relative distances), and only these relative distances are relevant for expected utility maximization, the ultimate goal of RLHF.
    Hence, a metric should not penalize a different, but equally good, scaling of utilities.
    Further, scaling alters choice probabilities derived from the utilities, as this derivation generally involves arbitrary scaling factors (e.g., temperature in a softmax function), factors that should not influence the assessment of learned distance.
    \item \textbf{Known Baseline and Scale.}
    The metric must have well-defined values corresponding to no learning of utility distances (e.g., 0, indicating predicted magnitudes are uncorrelated with true ones) and perfect learning (e.g., 1), respectively.
    This enables hypothesis testing about the presence of distance learning and standardized comparison across models, independent of their ordinal accuracy.
    \item \textbf{Ordinal Independence.}
    The metric's baseline value (indicating no distance learning) should be achievable even if the model perfectly predicts ordinal preferences (i.e., correctly identifies which item is preferred) but contains no information about the magnitude of these differences.
    This property is critical for distinguishing models that genuinely capture utility distances from those that merely predict accurate ordinal rankings.
\end{enumerate}
PDC's design allows it to reveal insights, such as a Bradley-Terry model implicitly learning significant distance information from ordinal feedback, which would be difficult to ascertain robustly with standard calibration metrics due to their conflation of ordinal and cardinal performance.

\textbf{Related approaches.}
\Citet{gleave2021quantifying} similarly leverage Pearson correlation's affine invariance to compare reward functions, correlating either canonically shaped rewards (EPIC) or episode returns (ERC).
While sharing much of the motivation, PDC differs by correlating absolute utility differences, explicitly discarding ordinal information to achieve ordinal independence and supplement existing accuracy metrics.

\subsection{Formal properties and proofs}%
\label{app:pdc:formal}

This section provides more formal statements and outlines the proofs for the properties of PDC, elaborating on \cref{thm:pdc_properties} from the main text.

\begin{theorem}[PDC Properties - Formal Statement]\label{thm:pdc_properties_formal_appendix}
    Given true utility function $u$ and predicted utility function $\hat{u}$, let $X, X'$ be independent items sampled i.i.d. from a data distribution $P_{\mathrm{data}}$.
    Define the true absolute utility differences as $\Delta U = |u(X) - u(X')|$ and the predicted absolute utility differences as $\Delta \hat{U} = |\hat{u}(X) - \hat{u}(X')|$.
    Assuming $\mathbb{V}[\Delta U] > 0$ and $\mathbb{V}[\Delta \hat{U}] > 0$, the Pearson Distance Correlation (PDC), $\PDC{}(u, \hat{u}) = \mathrm{Corr}(\Delta U, \Delta \hat{U})$, satisfies:
    \begin{enumerate}
        \item \textbf{Affine Invariance:}
        For any constants $c_1, c_2 \in \mathbb{R}$ and $s_1, s_2 > 0$:
        Let $u'(X) = s_1 u(X) + c_1$ and $\hat{u}'(X) = s_2 \hat{u}(X) + c_2$.
        Then $\Delta U' = s_1 \Delta U$ and $\Delta \hat{U}' = s_2 \Delta \hat{U}$.
        It follows that $\PDC{}(u', \hat{u}') = \PDC{}(u, \hat{u})$.

        \item \textbf{Distance Sensitivity:}
        Let $\hat{u}_1$ and $\hat{u}_2$ be two predicted utility functions, yielding $\Delta \hat{U}_1 = |\hat{u}_1(X) - \hat{u}_1(X')|$ and $\Delta \hat{U}_2 = |\hat{u}_2(X) - \hat{u}_2(X')|$.
        If $\Delta \hat{U}_1$ provides a ``better'' positive linear approximation to $\Delta U$ than $\Delta \hat{U}_2$ does, then $\PDC{}(u, \hat{u}_1) \ge \PDC{}(u, \hat{u}_2)$.
        ``Better'' here means that the variance of the residuals from an optimal linear regression of $\Delta \hat{U}_k$ on $\Delta U$ (i.e., $\mathbb{V}[\Delta \hat{U}_k - (a_k \Delta U + b_k)]$ for optimal $a_k > 0, b_k$) is smaller for $k=1$ than for $k=2$, relative to $\mathbb{V}[\Delta \hat{U}_1]$ and $\mathbb{V}[\Delta \hat{U}_2]$ respectively.

        \item \textbf{Known Baseline and Scale:}
        $\PDC{}(u, \hat{u}) = 0$ if $\Delta U$ and $\Delta \hat{U}$ are uncorrelated.
        $\PDC{}(u, \hat{u}) = 1$ if $\Delta \hat{U}$ is a perfect positive linear transformation of $\Delta U$ (i.e., $\Delta \hat{U} = a \Delta U + b$ with $a>0$).
        $\PDC{}(u, \hat{u}) = -1$ if $\Delta \hat{U}$ is a perfect negative linear transformation of $\Delta U$ (i.e., $\Delta \hat{U} = a \Delta U + b$ with $a<0$).

        \item \textbf{Ordinal Independence:}
        Let $\hat{u}_1$ and $\hat{u}_2$ be two predicted utility functions.
        If, for any pair $(X,X')$ drawn from $P_{\mathrm{data}} \times P_{\mathrm{data}}$, it holds that $|\hat{u}_1(X) - \hat{u}_1(X')| = |\hat{u}_2(X) - \hat{u}_2(X')|$ (i.e., the magnitude predictions are identical), then $\PDC{}(u, \hat{u}_1) = \PDC{}(u, \hat{u}_2)$.

        \item \textbf{Range}
        $-1 \le \PDC{}(u, \hat{u}) \le 1$.
    \end{enumerate}
\end{theorem}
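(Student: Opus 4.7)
The plan is to derive each of the five properties from basic facts about the Pearson correlation coefficient applied to the random variables $\Delta U$ and $\Delta \hat{U}$, exploiting that the absolute value of a utility difference is unchanged by an additive shift and scales linearly under a positive multiplicative rescaling.

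First I would dispatch the easy properties. For affine invariance (i), I would observe that $u'(X) - u'(X') = s_1(u(X) - u(X'))$, so $|u'(X) - u'(X')| = s_1 \Delta U$ (using $s_1 > 0$); analogously $\Delta \hat{U}' = s_2 \Delta \hat{U}$. Invariance of Pearson correlation under positive linear rescaling (additive constants are already absorbed by the absolute value; multiplicative factors cancel between the covariance and the product of standard deviations) then gives the claim. The range property (v) is immediate from Cauchy–Schwarz applied to the covariance in the definition of $\PDC$. Ordinal independence (iv) is essentially a tautology: the hypothesis forces $\Delta \hat{U}_1 = \Delta \hat{U}_2$ as random variables, so the two correlations with $\Delta U$ coincide. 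For known baseline and scale (iii), the $\PDC = 0$ case is the definition of uncorrelatedness, while $\PDC = \pm 1$ follows from the classical characterization of perfect Pearson correlation as a perfect linear relationship, with the sign matching the sign of the slope.

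The only piece requiring more thought is distance sensitivity (ii). My approach is to invoke the standard regression identity: writing $\Delta \hat{U}_k = a_k \Delta U + b_k + \varepsilon_k$ with $(a_k, b_k)$ chosen as the best linear predictor of $\Delta \hat{U}_k$ from $\Delta U$ (so $a_k = \mathrm{Cov}(\Delta U, \Delta \hat{U}_k)/\mathbb{V}[\Delta U]$ and $\varepsilon_k$ is uncorrelated with $\Delta U$), one obtains $\rho_k^2 = 1 - \mathbb{V}[\varepsilon_k]/\mathbb{V}[\Delta \hat{U}_k]$. Since the hypothesis restricts to $a_k > 0$, the correlation is non-negative, so $\rho_k = \sqrt{1 - \mathbb{V}[\varepsilon_k]/\mathbb{V}[\Delta \hat{U}_k]}$, which is monotonically decreasing in the relative residual variance. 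A strictly smaller relative residual variance for $k=1$ therefore implies $\PDC(u,\hat{u}_1) \ge \PDC(u,\hat{u}_2)$.

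The main obstacle is less a genuine mathematical difficulty and more a bookkeeping one: the argument hinges on comparing \emph{relative} (normalized) residual variances rather than absolute residual variances, and it requires the regression to be in the direction ``predict $\Delta \hat{U}_k$ from $\Delta U$'' so that the normalizing denominator is $\mathbb{V}[\Delta \hat{U}_k]$ rather than $\mathbb{V}[\Delta U]$. The theorem statement already fixes both conventions, so the remaining work is simply applying the $\rho^2 = 1 - \mathbb{V}[\varepsilon]/\mathbb{V}[\Delta \hat{U}]$ identity and taking the positive square root under the positive-slope restriction. No regularity conditions beyond the standing non-degeneracy assumptions $\mathbb{V}[\Delta U] > 0$ and $\mathbb{V}[\Delta \hat{U}] > 0$ will be needed.
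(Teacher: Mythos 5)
Your proposal is correct and follows essentially the same route as the paper's own proof: each property is reduced to a standard fact about the Pearson correlation, with affine invariance via $\Delta U' = s_1 \Delta U$, $\Delta\hat U' = s_2 \Delta\hat U$ and scale-invariance of the correlation, ordinal independence as the tautology $\Delta\hat U_1 = \Delta\hat U_2$, and distance sensitivity via the identity $\rho_k^2 = 1 - \mathbb{V}[\varepsilon_k]/\mathbb{V}[\Delta\hat U_k]$ combined with the positive-slope restriction to select the positive root. Your additional care about the regression direction (predicting $\Delta\hat U_k$ from $\Delta U$ so the normalizer is $\mathbb{V}[\Delta\hat U_k]$) is exactly the convention the paper uses.
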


\begin{proof}[Outline of Proofs for \cref{thm:pdc_properties_formal_appendix}]
    The PDC is defined as $\mathrm{Corr}(\Delta U, \Delta \hat{U})$.
    Its properties thus largely follow directly from standard properties of the Pearson correlation coefficient.
    \begin{enumerate}
        \item \textbf{Affine Invariance:}
        Given $u'(X) = s_1 u(X) + c_1$ and $\hat{u}'(X) = s_2 \hat{u}(X) + c_2$ with $s_1, s_2 > 0$.
        The transformed absolute differences are $\Delta U' = |u'(X) - u'(X')| = |s_1(u(X) - u(X'))| = s_1 \Delta U$ (since $s_1 > 0$), and similarly $\Delta \hat{U}' = s_2 \Delta \hat{U}$.
        The Pearson correlation coefficient $\mathrm{Corr}(A,B)$ is invariant to positive linear scaling of its arguments, i.e., $\mathrm{Corr}(s_1 A, s_2 B) = \mathrm{Corr}(A,B)$ for $s_1, s_2 > 0$ \citep[cf.][]{leerodgers1988thirteen}.
        Thus, $\PDC{}(u', \hat{u}') = \mathrm{Corr}(s_1 \Delta U, s_2 \Delta \hat{U}) = \mathrm{Corr}(\Delta U, \Delta \hat{U}) = \PDC{}(u, \hat{u})$.

        \item \textbf{Distance Sensitivity:}
        The PDC, being $\mathrm{Corr}(\Delta U, \Delta \hat{U}_k)$, directly reflects the goodness of the linear fit between these two variables.
        The square of the Pearson correlation is the coefficient of determination ($R^2$), which for a regression of $\Delta \hat{U}_k$ on $\Delta U$ is $1 - \mathbb{V}[\text{residuals}]/\mathbb{V}[\Delta \hat{U}_k]$.
        The coefficient of determination quantifies the proportion of the variance in one variable that is predictable from the other variable in a linear regression, thus reflecting the goodness of linear fit \citep{dodge2008coefficient,dodge2008correlation}.
        A ``better'' positive linear approximation as defined in \cref{thm:pdc_properties_formal_appendix} (i.e., a smaller relative residual variance, with assumed regression coefficient $a_k > 0$) implies a higher $R^2$.
        Consequently, this leads to a higher $\PDC{}(u, \hat{u}_k)$ value.

        \item \textbf{Known Baseline and Scale:}
        These are direct consequences of standard Pearson correlation properties:
        $\mathrm{Corr}(A,B) = 0$ if $A$ and $B$ are uncorrelated.
        $\mathrm{Corr}(A,B) = 1$ if $B = aA+b$ with $a>0$, and $\mathrm{Corr}(A,B) = -1$ if $B = aA+b$ with $a<0$ \citep{%
            hogg2019introduction%
        }.
        PDC inherits these directly with $A=\Delta U$ and $B=\Delta \hat{U}$.
        \item \textbf{Ordinal Independence:}
        If $|\hat{u}_1(X) - \hat{u}_1(X')| = |\hat{u}_2(X) - \hat{u}_2(X')|$ for all pairs $(X,X')$, then the random variables for predicted absolute differences, $\Delta \hat{U}_1$ and $\Delta \hat{U}_2$, are identical.
        By definition, $\PDC{}(u, \hat{u}_1) = \mathrm{Corr}(\Delta U, \Delta \hat{U}_1)$ and $\PDC{}(u, \hat{u}_2) = \mathrm{Corr}(\Delta U, \Delta \hat{U}_2)$.
        Since $\Delta \hat{U}_1 = \Delta \hat{U}_2$, it immediately follows that $\PDC{}(u, \hat{u}_1) = \PDC{}(u, \hat{u}_2)$.

        \item \textbf{Range:}
        The Pearson correlation coefficient $\mathrm{Corr}(A,B)$ always lies in the interval $[-1, 1]$ \citep{hogg2019introduction}.
        As PDC is defined as such a correlation, it directly inherits this range.
    \end{enumerate}
\end{proof}

\begin{corollary}[PDC baseline for no distance learning]\label{thm:pdc_null_baseline}
If predicted utility difference magnitudes $|\hat{u}(X) - \hat{u}(X')|$ are statistically independent of true utility difference magnitudes $|u(X) - u(X')|$ (even if signs of the utility differences are preserved), then the PDC satisfies:
\begin{equation}
    \PDC{}(u, \hat{u}, p_{\mathrm{data}}) = 0.
\end{equation}
This establishes a baseline of 0 when no meaningful distance information is learned, providing a basis for hypothesis tests.
\end{corollary}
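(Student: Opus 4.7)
The plan is to reduce the statement to the elementary fact that independent random variables have zero covariance, and hence zero Pearson correlation, applied to the magnitude random variables appearing in the PDC definition. My first step would be to unpack \cref{def:pdc} as $\PDC{}(u,\hat{u};p_{\mathrm{data}}) = \mathrm{Corr}(\Delta U, \Delta \hat{U})$ with $\Delta U = |u(X)-u(X')|$ and $\Delta \hat{U} = |\hat{u}(X)-\hat{u}(X')|$. The hypothesis asserts exactly that $\Delta U$ and $\Delta \hat{U}$ are statistically independent, so the covariance in the numerator of the Pearson correlation vanishes by the factorization of expectations ($\mathbb{E}[\Delta U \cdot \Delta \hat{U}] = \mathbb{E}[\Delta U]\,\mathbb{E}[\Delta \hat{U}]$). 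Invoking the standing non-degeneracy assumption $\mathbb{V}[\Delta U], \mathbb{V}[\Delta \hat{U}] > 0$ from \cref{thm:pdc_properties_formal_appendix} ensures the denominator is positive, giving $\PDC{} = 0$.

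The only point requiring genuine commentary is the parenthetical clause ``even if signs of the utility differences are preserved.'' I would address this by emphasizing that independence is required only between the magnitudes $\Delta U$ and $\Delta \hat{U}$, not between the signed differences $u(X)-u(X')$ and $\hat{u}(X)-\hat{u}(X')$. Since PDC is a function of absolute differences by construction, any residual sign-level dependence---up to and including perfect ordinal agreement---is orthogonal to the argument above. This makes the corollary the sharp formal counterpart of the ordinal-independence property (item (iv) of \cref{thm:pdc_properties_formal_appendix}): a model can rank every pair correctly and still attain the null baseline of $0$ provided its magnitude estimates are uninformative.

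I expect no substantive obstacle: once the definition is expanded, the body of the proof is a one-line invocation of the standard result that independence implies zero correlation. The ``hard'' part, to the extent there is one, is purely expository---namely, making clear that ``no distance learning'' is being operationalized in a very specific magnitude-independence sense (strictly stronger than mere uncorrelatedness of magnitudes, but strictly weaker than joint independence of the signed differences), and that this corollary is what licenses the use of $0$ as the null hypothesis in PDC-based significance tests referred to in the preceding text.
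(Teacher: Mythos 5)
Your proposal is correct and matches the paper's reasoning: the paper treats this as an immediate consequence of the ``Known Baseline and Scale'' property in \cref{thm:pdc_properties_formal_appendix} (uncorrelated magnitudes give $\mathrm{Corr}(\Delta U, \Delta\hat U)=0$), and independence of the magnitudes implies uncorrelatedness exactly via the covariance factorization you invoke. Your additional remarks on the sign clause and on the non-degeneracy of the denominator are consistent with the paper's standing assumptions and add nothing that conflicts with its argument.
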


\subsection{Synthetic experiment validating PDC properties}\label{app:pdc:synthetic_experiment}

\Cref{fig:pdc_vs_tce_heatmaps} in the main text (\cref{sec:the_pearson_distance_correlation}) compares PDC and TCE under simulated prediction degradation to validate PDC's properties.
The experiment starts with perfect predictions ($\hat{u}=u$) based on synthetic utility data ($N=1000$ items, random utilities with a normal distribution).
Degradation is applied by:
\begin{enumerate}
    \item Randomly flipping the sign of the predicted utility difference for a fraction $f_{\text{sign}}$ of item pairs (introducing ordinal errors, y-axis of the heatmaps).
    \item Shuffling the magnitudes (absolute values) of the predicted utility differences for a fraction $f_{\text{mag}}$ of item pairs (destroying distance information, x-axis of the heatmaps).
\end{enumerate}
This process is performed for both the original utilities $u$ and affine-transformed utilities $u'=2u+5$.

The figure demonstrates PDC's properties as follows:
\begin{itemize}
    \item
    \textbf{Ordinal Independence:}
    Observed along the left edge ($f_{\text{mag}}=0$) of the PDC heatmaps.
    Even as $f_{\text{sign}}$ increases (moving up the y-axis, introducing more ordinal errors), PDC remains at its maximum value (blue), indicating perfect correlation of magnitudes.
    This demonstrates that PDC's assessment of distance learning is unaffected by purely ordinal errors when true distance information is perfectly preserved.
    This contrasts sharply with TCE, which worsens (increases) with $f_{\text{sign}}$.
    \item
    \textbf{Distance Sensitivity:}
    Observed as one moves from left ($f_{\text{mag}}=0$) to right ($f_{\text{mag}}=1$) across the PDC heatmaps.
    PDC systematically decreases (plots become less blue/more red) as $f_{\text{mag}}$ increases, reflecting its sensitivity to the progressive loss of true magnitude information caused by shuffling.
    \item
    \textbf{Affine Invariance:}
    Demonstrated by comparing the left PDC panel (original $u$) with the right PDC panel (affine-scaled $u'=2u+5$).
    The heatmaps are nearly identical, confirming that PDC's evaluation is robust to positive affine transformations of the underlying utility function.
    TCE, conversely, shows different patterns for $u$ and $u'$, highlighting its lack of affine invariance.
    Note that technically the desirable property is invariance to affine transformations of the utility \emph{distance}.
    Scaling of utilities directly translates to scaling of distances, however, shifting utilities cancels out, and shifting of distances is not meaningful due to the anchoring at $0$.
    \item
    \textbf{Known Baseline and Scale:}
    The PDC heatmaps show a clear scale: optimal (blue, approximately 1) when $f_{\text{mag}}=0$ (perfect distance information, irrespective of $f_{\text{sign}}$), and a baseline (red, approximately 0) when $f_{\text{mag}}=1$ (all distance information destroyed, irrespective of $f_{\text{sign}}$).
    This aligns with the theoretical properties.
    \item \textbf{Contrast with TCE:}
    The TCE plots in \cref{fig:pdc_vs_tce_heatmaps} clearly lack these desirable properties.
    TCE increases (worsens) with \emph{both} sign flips ($f_{\text{sign}}>0$) and magnitude shuffling ($f_{\text{mag}}>0$), thereby conflating ordinal and distance errors.
    Furthermore, its sensitivity to affine scaling (evident from the differing left and right TCE panels) and its lack of a clear, ordinally-independent baseline for distance learning make interpretations of TCE regarding learned \emph{strength} ambiguous.
\end{itemize}
This empirical validation highlights the PDC's ability to specifically and robustly quantify learned utility distance information, fulfilling the design goals outlined in \cref{app:pdc:motivation} and \cref{sec:the_pearson_distance_correlation}.

\subsection{Estimation of PDC from sample data}%
\label{app:pdc:finite_data}

In practical scenarios, the PDC (\cref{def:pdc}) is estimated from a finite dataset of pairwise comparisons.
Let \( \mathcal{D}_{\mathrm{pairs}} = {\{(a_i, b_i)\}}_{i=1}^N \) denote \( N \) item-pairs sampled i.i.d.\ from \( p_{\mathrm{data}} \times p_{\mathrm{data}} \).
The \emph{sample PDC} is computed as the sample Pearson correlation coefficient:
\begin{equation}
    \hat{\rho}_{\mathrm{PDC}}(u, \hat{u}; \mathcal{D}_{\mathrm{pairs}})
    = \frac{\sum_{i=1}^N \left(\Delta U_i - \overline{\Delta U}\right)\left(\Delta \hat{U}_i - \overline{\Delta \hat{U}}\right)}{\sqrt{\sum_{i=1}^N {\left(\Delta U_i - \overline{\Delta U}\right)}^2 \sum_{i=1}^N {\left(\Delta \hat{U}_i - \overline{\Delta \hat{U}}\right)}^2}},
\end{equation}
where \( \Delta U_i = |u(a_i) - u(b_i)| \) and \( \Delta \hat{U}_i = |\hat{u}(a_i) - \hat{u}(b_i)| \) are the observed true and predicted absolute utility differences for the $i$-th pair, and \( \overline{\Delta U}, \overline{\Delta \hat{U}} \) denote their respective sample means.
While the sample PDC, \( \hat{\rho}_{\mathrm{PDC}} \), is not generally an unbiased estimator of the population PDC, \( \PDC{} \), it is a consistent estimator under standard assumptions~\citep[Section 9.7]{hogg2019introduction}.

\section{Baselines}\label{app:baselines}

This appendix details the baseline and control methods used for comparison against \rr{} in the experiments (\cref{sec:synthetic_experiments}).

\textbf{Bradley--Terry (BT).}
This baseline represents standard RLHF approaches that learn utility functions solely from preference labels, without considering response times.
As a \emph{response-time unaware} baseline, we use the Bradley--Terry model \citep{bradley1952rank} to learn the utility function from the preference labels $p_i$ alone.
The Bradley--Terry model is a special case of the Plackett--Luce model discussed in \cref{sec:method} for two alternatives.
Unlike \rr{}, which ranks entire comparisons, the BT model as used here ranks individual items (e.g., $a_i$ or $b_i$).
The model learns a utility function $u$ that maps each item to a scalar utility value.
It assumes that the probability of choosing item $a_i$ over item $b_i$ is given by the softmax function
\begin{equation}\label{eq:bt_baseline}
    p(a_i \succ b_i) = \frac{e^{u(a_i)}}{e^{u(a_i)} + e^{u(b_i)}}
    \,\text.
\end{equation}
This baseline helps establish the performance level achievable using only the ordinal preference information inherent in choices.

\textbf{Response Time Regression (\rtregression{}).}
The \rtregression{} baseline represents methods that \emph{directly model RTs} by assuming a specific, known, and global functional relationship between RT and preference strength.
It is a regression-based approach that directly models the response times $\tau_i$ as a function of the utility differences.
It assumes $\tau_i = \mathrm{link}(u(a_i) - u(b_i))$, where $\mathrm{link}$ is a specific function mapping utility differences to response times.
It further requires $\mathrm{link}$ to be known and invertible, which is a strong assumption often not met in practice.
Given this assumption, the response times $\tau_i$ give access to the unsigned utility differences via $|u(a_i) - u(b_i)| = \mathrm{link}^{-1}(\tau_i)$.
To model cardinal preference strength \emph{and} direction jointly, we convert this to a signed difference using the preference label $p_i$.
We define $\mathrm{sign}(p_i) = +1$ if $a_i$ is preferred (e.g., $p_i=0$), and $\mathrm{sign}(p_i) = -1$ if $b_i$ is preferred (e.g., $p_i=1$).
The regression target is then:
\begin{equation}
    \Delta \hat{u}_i = \mathrm{sign}(p_i) \cdot \mathrm{link}^{-1}(\tau_i)
    \,\text.
\end{equation}
This signed difference $\Delta \hat{u}_i$ forms the regression target.
A neural network $u_\theta$ is trained with an MSE loss to predict these signed differences.
The network's prediction for a pair $(a_i, b_i)$ is the strength predictor $s_\theta(a_i, b_i) = u_\theta(a_i) - u_\theta(b_i)$.
Minimizing $\sum_i ( s_\theta(a_i, b_i) - \Delta \hat{u}_i )^2$ yields the parameters $\theta$ for our utility function $u_\theta$.
In our experiments, we assume a hyperbolic link function (\cref{eq:link_baseline}) that \emph{exactly matches} the data-generating process in some synthetic datasets.
\begin{equation}\label{eq:link_baseline}
    \mathrm{link}(\Delta u; \mathrm{rt}_{\min}, \mathrm{rt}_{\max})
    \coloneqq
    \mathrm{rt}_{\min} + \frac{\mathrm{rt}_{\max} - \mathrm{rt}_{\min}}{\abs{\Delta u} + 1}
    \text.
\end{equation}
This specific link function is invertible (if the sign of $\Delta u$ is known), inversely proportional to $|\Delta u|$, and bounded between $\mathrm{rt}_{\min}$ and $\mathrm{rt}_{\max}$.
While informed by the general observation of an inverse RT-strength relationship \citep{gates2021rational}, the specific choice of \cref{eq:link_baseline} is illustrative and by no means unique; the requirement to assume \emph{some} explicit link function is a primary weakness of this baseline approach.

Critically, in contrast to \rr{}, this \rtregression{} approach relies on the \emph{absolute} RT values (processed via the assumed inverse link function).
It makes strong global assumptions about the \emph{exact} functional form relating RT and preference strength.
\rr{} avoids these assumptions by using only the \emph{relative} order of RTs within carefully constructed strata (\cref{sec:method}).
This reliance on relative ranks within strata provides robustness when the exact global RT-strength relationship is unknown or varies significantly across contexts or individuals.

\textbf{Permutation Controls (\rrPerm{} and \rtregressionPerm{}).}
These controls verify that performance gains stem from informative RT signals, rather than from the structure of the loss or model itself.
The core mechanism involves randomly permuting the response times $\tau_i$ across the dataset to get $\tau_{\pi(i)}$, where $\pi$ is a random permutation.
This isolates the contribution of RT magnitude information while preserving the ordinal preference direction.
We include two such controls based on our primary RT-aware methods:
\begin{itemize}
    \item \textbf{\rrPerm{}:}
        Applies the \rr{} ranking method but uses the permuted times $\tau_{\pi(i)}$ when constructing the magnitude component of the sort keys $k_i$. The preference label $p_i$ still determines the key's directional component.
    \item \textbf{\rtregressionPerm{}:}
        Applies the \rtregression{} method but uses the permuted times $\tau_{\pi(i)}$ as input to the inverse link function, i.e., the regression target becomes $\Delta \hat{u}'_i = \mathrm{sign}(p_i) \cdot \mathrm{link}^{-1}(\tau_{\pi(i)})$.
        Similarly here the sign of the inferred utility strength remains untouched by the permutation.
\end{itemize}
A significant drop in performance (especially in PDC, \cref{sec:the_pearson_distance_correlation}) for these controls compared to their non-permuted versions (\rr{} and \rtregression{}) confirms that the original methods successfully leveraged meaningful information encoded in the timing of human responses.

\textbf{Stratification ablation (\rrPool{}).}
This ablation investigates the specific benefit of \emph{stratification} within the \rr{} framework.
We consider \rrPool{}, a variant of \rr{} that ablates the stratification step.
This variant omits stratification based on metadata $m_i$ (\cref{sec:method}).
Instead, comparisons are grouped into random batches of size $b$ for ranking.
All other aspects of the \rr{} method (e.g., learning utility differences via Plackett-Luce loss on comparison ranks) remain the same.
By comparing the performance of \rr{} to \rrPool{}, we can isolate the performance impact specifically attributable to the strategic use of stratification to control for confounders.
If \rr{} significantly outperforms \rrPool{}, it validates the effectiveness of the stratification strategy.

\section{Extended discussion of related work}%
\label{app:extended_rw}

This appendix provides further context and detailed comparisons related to the main paper's discussion of related work, covering foundational perspectives on response times and prior computational methods leveraging RTs or other strength signals.

\subsection{Detailed comparison with prior RT modeling approaches}%
\label{app:extended_rw:comparison_prior_rt_models}

\Cref{sec:extended_related_work} briefly introduced prior methods integrating response times (RTs) into preference learning.
Here, we provide a more detailed comparison highlighting the technical distinctions between these methods and our \rr{} approach, particularly concerning their underlying assumptions and applicability to large-scale RLHF.

Prior methods that integrate RTs often employ explicit cognitive process models like the Drift Diffusion Model (DDM)~\citep[e.g.,][]{shvartsman2024response,li2024enhancing} or make strong assumptions about a direct, \emph{global} relationship between RT magnitude and preference strength~\citep[e.g.,][]{jansen2022information}.

\subsection{Ranking-based utility learning}%
\label{app:extended_rw:ranking_utility}

Methods like SeqRank \citep{hwang2023sequential} and LiRE \citep{choi2024listwise} also leverage ranking structures in RLHF.
Both construct preference rankings of items from pairwise feedback and derive overlapping preference pairs (e.g., $A \succ B$, $B \succ C$, $A \succ C$) to train with Bradley--Terry loss.
This imposes soft constraints on utilities and thereby conveys implicit strength information \citep{choi2024listwise}.

\rr{} differs in two key aspects:
(1) it ranks \emph{comparisons} of items rather than items themselves,
and
(2) it uses external strength signals rather than preference feedback to structure the rankings.
This reflects different objectives:
SeqRank and LiRE improve feedback efficiency through querying strategies and data augmentation, requiring control over the queries posed to the annotator, while \rr{} focuses on loss formulation for leveraging auxiliary strength information.

\subsection{Explicit strength specification}%
\label{app:extended_rw:explicit_strength}

While \rr{} leverages implicit signals of preference strength, other works take the approach of explicitly eliciting strength information from users through modified feedback mechanisms.

Several approaches directly request strength annotation during preference collection.
\Citet{wilde2021learning} proposes \emph{scale feedback}, where users provide a slider-based response $\psi \in [-1, 1]$ for each comparison, where the magnitude encodes preference strength proportional to reward difference.
Similarly, \citet{jansen2022information} propose a \emph{strength elicitation} method that requires users to explicitly assign preference strength labels during comparisons.

Other approaches ask users to specify the \emph{relative} strength of preferences between multiple comparisons.
DistQ \citep{feng2025comparing} requires users to select which of two comparisons is easier to distinguish, then judge that pair.
While conceptually related to \rr{},
DistQ requires explicit meta-cognition about comparison difficulty and uses a composite pairwise loss instead of our joint ranking loss.
Future work could explore combining \rr{}'s loss with DistQ's explicit distinguishability queries or vice versa.
Another approach, taken by \citet{papadimitriou2024bayesian}, asks users to group items into preference tiers (e.g., good, acceptable, dangerous) and then explicitly specify the relative margins between these tiers (e.g., acceptable -- dangerous is twice as large as good -- acceptable).

All these approaches recognize the importance of preference strength, but they all require explicit user annotation of strength.
\rr{} instead focuses on leveraging \emph{implicit} response metadata without modifications to the feedback collection process, making it broadly applicable and able to reuse existing datasets.

\subsection{Adaptive losses}%
\label{app:extended_rw:adaptive_losses}

While \rr{} leverages explicit external strength signals, complementary work explores using adaptive losses to allow the learner to more robustly learn preference strength implicitly.
\Citet{hong2024adaptive} propose Ada-Pref, which learns instance-specific loss scaling factors for each preference pair via distributionally robust optimization.
Pairs with ambiguous preferences receive smaller scaling factors, allowing the reward model to learn smaller reward differences; pairs with clear preferences receive larger factors, enabling larger learned differences.
This adaptivity emerges implicitly, similar to implicit strength learning in non-adaptive BT discussed in \cref{app:implicit_strength}.

Ada-Pref and \rr{} address complementary scenarios.
\rr{} excels when domain-specific strength signals are readily available and locally valid.
Ada-Pref suits settings where no reliable strength signal exists, but preference data contains implicit variation in confidence or difficulty.
Future work may explore combining these two approaches, though \rr{}'s ranking construction may already capture much of the implicit learning Ada-Pref achieves.

\subsection{Approaches based on explicit cognitive models}%
\label{app:extended_rw:cognitive_models}

A significant line of research incorporates RTs by employing explicit cognitive process models, primarily variants of the DDM.
These models aim to provide a mechanistic account of how latent utility differences jointly influence generated choices and RTs, enabling learning from both signals.

\Citet{shvartsman2024response}, for example, propose integrating a DDM within a Gaussian Process (GP) framework for preference learning, primarily targeting the
low-data regime %
and reporting significant improvements in sample efficiency.
Their method links the GP's latent utility difference to the DDM's drift rate.
A key challenge is that the joint choice-RT likelihood of the DDM is analytically intractable, and common numerical approximations are non-differentiable. %
They tackle this challenge by approximating the DDM using moment-matching with a family of parametric skewed distributions, which allows them to optimize the GP hyperparameters by minimizing the approximate negative log marginal likelihood of both choices and RTs within a DDM-inspired likelihood framework.

Similarly grounded in cognitive modeling, \citet{li2024enhancing} use the simplified, computationally tractable dEZDM variant \citep{wagenmakers2007ezdiffusion} to incorporate RTs into preference-based linear bandits, focusing on fixed-budget best-arm identification.
Their core mechanism relies on an analytical property of the dEZDM linking the ratio of \emph{expected} choice and \emph{expected decision time} directly to the scaled utility difference, where the utility function is assumed linear within their bandit framework.
They estimate these scaled linear parameters via linear regression, using empirical averages of choices ($\bar{C}_x$) and decision times ($\bar{t}_x$) computed over multiple \emph{identical query repetitions} as the target value $\bar{C}_x/\bar{t}_x$, and integrate this into the Generalized Successive Elimination (GSE) algorithm.
This approach requires averaging over \emph{repeated identical queries} to estimate these expectations ($\bar{C}_x$, $\bar{t}_x$) and assumes \emph{utility linearity} within a bandit setting, using $\bar{C}_x/\bar{t}_x$ as the target for linear regression to estimate scaled linear parameters.

\rr{} contrasts significantly with these DDM-based approaches.
Firstly, it entirely avoids specifying or fitting an explicit cognitive model of the RT generation process.
This sidesteps the complexities associated with DDM likelihood calculations, their approximations, and the potential fragility of assuming a specific cognitive process accurately describes diverse human behavior across different tasks and contexts.
Secondly, the RT signal used is fundamentally different.
\rr{} relies solely on the relative rank order of total observed RTs within specific strata as a noisy, non-parametric indicator of preference strength rank.
It does not attempt to link RT magnitudes directly to a model parameter like drift rate, nor does it need to isolate a theoretical ``decision time'' from the measured total RT.
Consequently, \rr{} circumvents the restrictive data requirements seen in some prior work, such as the need for repeated identical queries or assumptions of utility linearity \citep{li2024enhancing}, making it directly applicable to standard RLHF datasets typically consisting of unique comparisons.

\subsection{Other RT integration strategies}%
\label{app:extended_rw:other_strategies}

Other methods avoid fitting full cognitive models directly or integrate RTs in different ways.

\Citet{shvartsman2024response}, in addition to their DDM-based approach, also propose \emph{prediction stacking}.
This is a two-stage approach where a separate model is first trained to predict RTs based on item features.
These predicted RTs then serve as additional \emph{input features} for a choice model (a final GP model that predicts choice probabilities), potentially informing it about decision difficulty without relying on the DDM's generative assumptions for the choice model itself.
RT predictions might act as a form of regularization, modulating the influence of choices based on predicted difficulty.
\rr{} differs as it uses observed (not predicted) RTs and integrates their rank information directly into the training objective via the ranking loss, rather than using RTs as input features.

Aiming to elicit complex preference systems ($[A, R_1, R_2]$), with $R_2$ representing preference strength, over a \emph{fixed item set} $A$, \citet{jansen2022information} propose \emph{time elicitation}.
After collecting ordinal preferences ($R_1$) via pairwise comparisons, they measure the user's response time for each pair.
Crucially, the method then \emph{deterministically constructs} the cardinal relation ($R_2$) representing relative preference strength.
This construction is based directly on the RT \emph{magnitudes}, assuming these perfectly represent preference exchange strength.
This direct construction of a \emph{preference system} from assumed clean response times over a fixed set $A$ differs fundamentally from \emph{learning a utility function} from potentially noisy, total response times over structured objects in an open world, as is typical in RLHF settings.
Nonetheless, their direct construction of $R_2$ partially motivates our construction of the target ranking relation in \rr{}.

\subsection{Core distinctions}%
\label{app:extended_rw:core_distinctions}

While the methods discussed above offer diverse ways to leverage RTs, \rr{} is distinct in its minimal and localized assumptions.
Primarily, \rr{} diverges in the assumed scope of the RT-strength correlation.
Unlike prior works that often implicitly or explicitly assume a \emph{global} correlation (faster RTs consistently indicate stronger preferences across the dataset), \rr{} posits a weaker, \emph{local} assumption.
This local view holds that the inverse RT-strength correlation is reliable only \emph{within} specific, carefully constructed strata (e.g., per-annotator or per-session), which are designed to control for confounding factors like annotator effects or task complexity.
This local approach is crucial for robustness, enabling effective learning from pooled data across diverse annotators and contexts where global RT comparisons would be unreliable.

Furthermore, \rr{} does not specify or fit an explicit cognitive model (like DDM) to describe the joint generation of choices and RTs.
This design choice avoids the complexities of DDM likelihood calculations and their approximations (cf.\ \citealp{shvartsman2024response, li2024enhancing}), and the potential fragility of assuming a single cognitive process accurately describes diverse human behavior.
Consequently, rather than using RT magnitudes directly as model inputs, as features (e.g., in prediction stacking; \citealp{shvartsman2024response}), or for deterministic rule construction from raw RTs (e.g., in time elicitation; \citealp{jansen2022information}), \rr{} utilizes the relative rank order of total observed RTs within these strata.
This ordinal information from RTs serves as a noisy, non-parametric proxy for the rank order of preference strength, without attempting to model precise RT values or link them to specific model parameters like a DDM's drift rate.

These characteristics enable \rr{} to circumvent many restrictive data requirements common in prior work, such as the need for repeated identical queries \citep{li2024enhancing}, assumptions of utility linearity \citep{li2024enhancing}, operations on a fixed item set \citep{jansen2022information}, or assumptions about `clean' RT magnitudes \citep{jansen2022information}.
By leveraging \emph{relative RT ranks within strata} via a ranking loss (like Plackett-Luce), \rr{} is designed to bypass these requirements.
This offers robustness and direct applicability to typical large-scale RLHF data, which often involves unique comparisons, non-linear utilities, and noisy total response times from diverse sources.
The method aims to filter out systemic variations and noise from confounding factors (individual speed, fatigue, task complexity) without explicit modeling of these factors or relying on pristine RT magnitudes.

\boxfine{%
}

\section{Compute resources}%
\label{app:compute_resources}

We conducted the synthetic experiments on a single compute node with 8 CPU cores and 32 GB of RAM.
They take approximately 17 hours to complete in parallel on this hardware.
The MultiPref experiments were run on A100 and H100 GPUs, with a full-size (full training set fraction) run taking approximately 1.5h on a single H100 GPU.

The RL experiments were conducted on a compute cluster. The control experiments had four separate stages: (1) Training of the baseline RL models for collecting checkpoints, (2) Generating feedback datasets based on rollouts from checkpoints, (3) Training of reward models with feedback datasets, and (4) training of downstream RL models with reward models. Steps 1,2, and 4 were conducted on CPU nodes, with each individual run being allocated 2 CPU cores and 16GB RAM. Training of RL agents takes around 1 hour, collection of feedback around 30 minutes. Reward model training was performed on nodes with access to a A100 and H100 GPU shared across four parallel runs. Reward model training took approximately 30 minutes. In total, 20 baseline runs and 20 feedback datasets were generated. To reproduce the results, 240 reward model trainings and RL down-stream training runs are required (with shared GPUs) this results in approx. 30 GPU hours and 480 CPU hours.

\iftoggle{hidechecklist}{}{
\newpage
\section*{NeurIPS Paper Checklist}

\begin{enumerate}

\item {\bf Claims}
    \item[] Question: Do the main claims made in the abstract and introduction accurately reflect the paper's contributions and scope?
    \item[] Answer: \answerYes{} %
    \item[] Justification: The claims match theoretical and experimental results. %
    \item[] Guidelines:
    \begin{itemize}
        \item The answer NA means that the abstract and introduction do not include the claims made in the paper.
        \item The abstract and/or introduction should clearly state the claims made, including the contributions made in the paper and important assumptions and limitations. A No or NA answer to this question will not be perceived well by the reviewers. 
        \item The claims made should match theoretical and experimental results, and reflect how much the results can be expected to generalize to other settings. 
        \item It is fine to include aspirational goals as motivation as long as it is clear that these goals are not attained by the paper. 
    \end{itemize}

\item {\bf Limitations}
    \item[] Question: Does the paper discuss the limitations of the work performed by the authors?
    \item[] Answer: \answerYes{}
    \item[] Justification: The paper discusses limitations in the limitations subsection of the discussion.
    \item[] Guidelines:
    \begin{itemize}
        \item The answer NA means that the paper has no limitation while the answer No means that the paper has limitations, but those are not discussed in the paper. 
        \item The authors are encouraged to create a separate "Limitations" section in their paper. %
        \item The paper should point out any strong assumptions and how robust the results are to violations of these assumptions (e.g., independence assumptions, noiseless settings, model well-specification, asymptotic approximations only holding locally). The authors should reflect on how these assumptions might be violated in practice and what the implications would be.
        \item The authors should reflect on the scope of the claims made, e.g., if the approach was only tested on a few datasets or with a few runs. In general, empirical results often depend on implicit assumptions, which should be articulated.
        \item The authors should reflect on the factors that influence the performance of the approach. For example, a facial recognition algorithm may perform poorly when image resolution is low or images are taken in low lighting. Or a speech-to-text system might not be used reliably to provide closed captions for online lectures because it fails to handle technical jargon.
        \item The authors should discuss the computational efficiency of the proposed algorithms and how they scale with dataset size.
        \item If applicable, the authors should discuss possible limitations of their approach to address problems of privacy and fairness.
        \item While the authors might fear that complete honesty about limitations might be used by reviewers as grounds for rejection, a worse outcome might be that reviewers discover limitations that aren't acknowledged in the paper. The authors should use their best judgment and recognize that individual actions in favor of transparency play an important role in developing norms that preserve the integrity of the community. Reviewers will be specifically instructed to not penalize honesty concerning limitations.
    \end{itemize}

\item {\bf Theory assumptions and proofs}
    \item[] Question: For each theoretical result, does the paper provide the full set of assumptions and a complete (and correct) proof?
    \item[] Answer: \answerYes{}
    \item[] Justification: Full proofs of the theoretical results are provided in the appendix. The assumptions are clearly stated in the main paper or the appendix.
    \item[] Guidelines:
    \begin{itemize}
        \item The answer NA means that the paper does not include theoretical results. 
        \item All the theorems, formulas, and proofs in the paper should be numbered and cross-referenced.
        \item All assumptions should be clearly stated or referenced in the statement of any theorems.
        \item The proofs can either appear in the main paper or the supplemental material, but if they appear in the supplemental material, the authors are encouraged to provide a short proof sketch to provide intuition. 
        \item Inversely, any informal proof provided in the core of the paper should be complemented by formal proofs provided in appendix or supplemental material.
        \item Theorems and Lemmas that the proof relies upon should be properly referenced. 
    \end{itemize}

    \item {\bf Experimental result reproducibility}
    \item[] Question: Does the paper fully disclose all the information needed to reproduce the main experimental results of the paper to the extent that it affects the main claims and/or conclusions of the paper (regardless of whether the code and data are provided or not)?
    \item[] Answer: \answerYes{} %
    \item[] Justification: The experiments are described in the main paper (\cref{sec:experimental_setup,sec:multipref,sec:rl_control}) and expanded on in the appendix (\cref{app:synthetic_details,app:multipref,app:rl_control}).
    \item[] Guidelines:
    \begin{itemize}
        \item The answer NA means that the paper does not include experiments.
        \item If the paper includes experiments, a No answer to this question will not be perceived well by the reviewers: Making the paper reproducible is important, regardless of whether the code and data are provided or not.
        \item If the contribution is a dataset and/or model, the authors should describe the steps taken to make their results reproducible or verifiable. 
        \item Depending on the contribution, reproducibility can be accomplished in various ways. For example, if the contribution is a novel architecture, describing the architecture fully might suffice, or if the contribution is a specific model and empirical evaluation, it may be necessary to either make it possible for others to replicate the model with the same dataset, or provide access to the model. In general. releasing code and data is often one good way to accomplish this, but reproducibility can also be provided via detailed instructions for how to replicate the results, access to a hosted model (e.g., in the case of a large language model), releasing of a model checkpoint, or other means that are appropriate to the research performed. %
        \item While NeurIPS does not require releasing code, the conference does require all submissions to provide some reasonable avenue for reproducibility, which may depend on the nature of the contribution. For example
        \begin{enumerate}
            \item If the contribution is primarily a new algorithm, the paper should make it clear how to reproduce that algorithm.
            \item If the contribution is primarily a new model architecture, the paper should describe the architecture clearly and fully.
            \item If the contribution is a new model (e.g., a large language model), then there should either be a way to access this model for reproducing the results or a way to reproduce the model (e.g., with an open-source dataset or instructions for how to construct the dataset).
            \item We recognize that reproducibility may be tricky in some cases, in which case authors are welcome to describe the particular way they provide for reproducibility. In the case of closed-source models, it may be that access to the model is limited in some way (e.g., to registered users), but it should be possible for other researchers to have some path to reproducing or verifying the results.
        \end{enumerate}
    \end{itemize}

\item {\bf Open access to data and code}
    \item[] Question: Does the paper provide open access to the data and code, with sufficient instructions to faithfully reproduce the main experimental results, as described in supplemental material?
    \item[] Answer: \answerYes{} %
    \item[] Justification: We will provide access to the code, which enables synthetic data generation and model training.
    \item[] Guidelines:
    \begin{itemize}
        \item The answer NA means that paper does not include experiments requiring code.
        \item Please see the NeurIPS code and data submission guidelines (\url{https://nips.cc/public/guides/CodeSubmissionPolicy}) for more details.
        \item While we encourage the release of code and data, we understand that this might not be possible, so “No” is an acceptable answer. Papers cannot be rejected simply for not including code, unless this is central to the contribution (e.g., for a new open-source benchmark).
        \item The instructions should contain the exact command and environment needed to run to reproduce the results. See the NeurIPS code and data submission guidelines (\url{https://nips.cc/public/guides/CodeSubmissionPolicy}) for more details.
        \item The authors should provide instructions on data access and preparation, including how to access the raw data, preprocessed data, intermediate data, and generated data, etc.
        \item The authors should provide scripts to reproduce all experimental results for the new proposed method and baselines. If only a subset of experiments are reproducible, they should state which ones are omitted from the script and why.
        \item At submission time, to preserve anonymity, the authors should release anonymized versions (if applicable).
        \item Providing as much information as possible in supplemental material (appended to the paper) is recommended, but including URLs to data and code is permitted.
    \end{itemize}

\item {\bf Experimental setting/details}
    \item[] Question: Does the paper specify all the training and test details (e.g., data splits, hyperparameters, how they were chosen, type of optimizer, etc.) necessary to understand the results?
    \item[] Answer: \answerYes{} %
    \item[] Justification: The experimental setups are described in \cref{sec:experimental_setup,sec:multipref,sec:rl_control} and expanded on in the appendix (\cref{app:synthetic_details,app:multipref,app:rl_control}), with remaining details in the provided source code.
    \item[] Guidelines:
    \begin{itemize}
        \item The answer NA means that the paper does not include experiments.
        \item The experimental setting should be presented in the core of the paper to a level of detail that is necessary to appreciate the results and make sense of them.
        \item The full details can be provided either with the code, in appendix, or as supplemental material.
    \end{itemize}

\item {\bf Experiment statistical significance}
    \item[] Question: Does the paper report error bars suitably and correctly defined or other appropriate information about the statistical significance of the experiments?
    \item[] Answer: \answerYes{} %
    \item[] Justification: All plots include error bars, with the captions indicating the type of error bars used.
    \item[] Guidelines:
    \begin{itemize}
        \item The answer NA means that the paper does not include experiments.
        \item The authors should answer "Yes" if the results are accompanied by error bars, confidence intervals, or statistical significance tests, at least for the experiments that support the main claims of the paper. %
        \item The factors of variability that the error bars are capturing should be clearly stated (for example, train/test split, initialization, random drawing of some parameter, or overall run with given experimental conditions).
        \item The method for calculating the error bars should be explained (closed form formula, call to a library function, bootstrap, etc.)
        \item The assumptions made should be given (e.g., Normally distributed errors).
        \item It should be clear whether the error bar is the standard deviation or the standard error of the mean.
        \item It is OK to report 1-sigma error bars, but one should state it. The authors should preferably report a 2-sigma error bar than state that they have a 96\% CI, if the hypothesis of Normality of errors is not verified.
        \item For asymmetric distributions, the authors should be careful not to show in tables or figures symmetric error bars that would yield results that are out of range (e.g. negative error rates). %
        \item If error bars are reported in tables or plots, The authors should explain in the text how they were calculated and reference the corresponding figures or tables in the text.
    \end{itemize}

\item {\bf Experiments compute resources}
    \item[] Question: For each experiment, does the paper provide sufficient information on the computer resources (type of compute workers, memory, time of execution) needed to reproduce the experiments?
    \item[] Answer: \answerYes{} %
    \item[] Justification: The compute resources are described in \cref{app:compute_resources}.
    \item[] Guidelines:
    \begin{itemize}
        \item The answer NA means that the paper does not include experiments.
        \item The paper should indicate the type of compute workers CPU or GPU, internal cluster, or cloud provider, including relevant memory and storage.
        \item The paper should provide the amount of compute required for each of the individual experimental runs as well as estimate the total compute. 
        \item The paper should disclose whether the full research project required more compute than the experiments reported in the paper (e.g., preliminary or failed experiments that didn't make it into the paper). 
    \end{itemize}
    
\item {\bf Code of ethics}
    \item[] Question: Does the research conducted in the paper conform, in every respect, with the NeurIPS Code of Ethics \url{https://neurips.cc/public/EthicsGuidelines}?
    \item[] Answer: \answerYes{} %
    \item[] Justification: This research does not involve human subjects and otherwise adheres to the NeurIPS Code of Ethics.
    \item[] Guidelines:
    \begin{itemize}
        \item The answer NA means that the authors have not reviewed the NeurIPS Code of Ethics.
        \item If the authors answer No, they should explain the special circumstances that require a deviation from the Code of Ethics.
        \item The authors should make sure to preserve anonymity (e.g., if there is a special consideration due to laws or regulations in their jurisdiction).
    \end{itemize}

\item {\bf Broader impacts}
    \item[] Question: Does the paper discuss both potential positive societal impacts and negative societal impacts of the work performed?
    \item[] Answer: \answerYes{} %
    \item[] Justification: The paper discusses potential broader impact in the discussion, but societal impact in general is not beyonc that of a typical ML paper.
    \item[] Guidelines:
    \begin{itemize}
        \item The answer NA means that there is no societal impact of the work performed.
        \item If the authors answer NA or No, they should explain why their work has no societal impact or why the paper does not address societal impact.
        \item Examples of negative societal impacts include potential malicious or unintended uses (e.g., disinformation, generating fake profiles, surveillance), fairness considerations (e.g., deployment of technologies that could make decisions that unfairly impact specific groups), privacy considerations, and security considerations.
        \item The conference expects that many papers will be foundational research and not tied to particular applications, let alone deployments. However, if there is a direct path to any negative applications, the authors should point it out. For example, it is legitimate to point out that an improvement in the quality of generative models could be used to generate deepfakes for disinformation. On the other hand, it is not needed to point out that a generic algorithm for optimizing neural networks could enable people to train models that generate Deepfakes faster.
        \item The authors should consider possible harms that could arise when the technology is being used as intended and functioning correctly, harms that could arise when the technology is being used as intended but gives incorrect results, and harms following from (intentional or unintentional) misuse of the technology.
        \item If there are negative societal impacts, the authors could also discuss possible mitigation strategies (e.g., gated release of models, providing defenses in addition to attacks, mechanisms for monitoring misuse, mechanisms to monitor how a system learns from feedback over time, improving the efficiency and accessibility of ML).
    \end{itemize}
    
\item {\bf Safeguards}
    \item[] Question: Does the paper describe safeguards that have been put in place for responsible release of data or models that have a high risk for misuse (e.g., pretrained language models, image generators, or scraped datasets)?
    \item[] Answer: \answerNA{} %
    \item[] Justification: Our work does not involve releasing data or models that have a high risk for misuse.
    \item[] Guidelines:
    \begin{itemize}
        \item The answer NA means that the paper poses no such risks.
        \item Released models that have a high risk for misuse or dual-use should be released with necessary safeguards to allow for controlled use of the model, for example by requiring that users adhere to usage guidelines or restrictions to access the model or implementing safety filters. 
        \item Datasets that have been scraped from the Internet could pose safety risks. The authors should describe how they avoided releasing unsafe images.
        \item We recognize that providing effective safeguards is challenging, and many papers do not require this, but we encourage authors to take this into account and make a best faith effort.
    \end{itemize}

\item {\bf Licenses for existing assets}
    \item[] Question: Are the creators or original owners of assets (e.g., code, data, models), used in the paper, properly credited and are the license and terms of use explicitly mentioned and properly respected?
    \item[] Answer: \answerNA{} %
    \item[] Justification: The paper does not use existing assets.
    \item[] Guidelines:
    \begin{itemize}
        \item The answer NA means that the paper does not use existing assets.
        \item The authors should cite the original paper that produced the code package or dataset.
        \item The authors should state which version of the asset is used and, if possible, include a URL.
        \item The name of the license (e.g., CC-BY 4.0) should be included for each asset.
        \item For scraped data from a particular source (e.g., website), the copyright and terms of service of that source should be provided.
        \item If assets are released, the license, copyright information, and terms of use in the package should be provided. For popular datasets, \url{paperswithcode.com/datasets} has curated licenses for some datasets. Their licensing guide can help determine the license of a dataset.
        \item For existing datasets that are re-packaged, both the original license and the license of the derived asset (if it has changed) should be provided.
        \item If this information is not available online, the authors are encouraged to reach out to the asset's creators.
    \end{itemize}

\item {\bf New assets}
    \item[] Question: Are new assets introduced in the paper well documented and is the documentation provided alongside the assets?
    \item[] Answer: \answerNA{} %
    \item[] Justification: We do not release new assets.
    \item[] Guidelines:
    \begin{itemize}
        \item The answer NA means that the paper does not release new assets.
        \item Researchers should communicate the details of the dataset/code/model as part of their submissions via structured templates. This includes details about training, license, limitations, etc. 
        \item The paper should discuss whether and how consent was obtained from people whose asset is used.
        \item At submission time, remember to anonymize your assets (if applicable). You can either create an anonymized URL or include an anonymized zip file.
    \end{itemize}

\item {\bf Crowdsourcing and research with human subjects}
    \item[] Question: For crowdsourcing experiments and research with human subjects, does the paper include the full text of instructions given to participants and screenshots, if applicable, as well as details about compensation (if any)? 
    \item[] Answer: \answerNA{}
    \item[] Justification: The paper does not involve crowdsourcing nor research with human subjects.
    \item[] Guidelines:
    \begin{itemize}
        \item The answer NA means that the paper does not involve crowdsourcing nor research with human subjects.
        \item Including this information in the supplemental material is fine, but if the main contribution of the paper involves human subjects, then as much detail as possible should be included in the main paper. 
        \item According to the NeurIPS Code of Ethics, workers involved in data collection, curation, or other labor should be paid at least the minimum wage in the country of the data collector. 
    \end{itemize}

\item {\bf Institutional review board (IRB) approvals or equivalent for research with human subjects}
    \item[] Question: Does the paper describe potential risks incurred by study participants, whether such risks were disclosed to the subjects, and whether Institutional Review Board (IRB) approvals (or an equivalent approval/review based on the requirements of your country or institution) were obtained?
    \item[] Answer: \answerNA{}
    \item[] Justification: The paper does not involve crowdsourcing nor research with human subjects.
    \item[] Guidelines:
    \begin{itemize}
        \item The answer NA means that the paper does not involve crowdsourcing nor research with human subjects.
        \item Depending on the country in which research is conducted, IRB approval (or equivalent) may be required for any human subjects research. If you obtained IRB approval, you should clearly state this in the paper. 
        \item We recognize that the procedures for this may vary significantly between institutions and locations, and we expect authors to adhere to the NeurIPS Code of Ethics and the guidelines for their institution. 
        \item For initial submissions, do not include any information that would break anonymity (if applicable), such as the institution conducting the review.
    \end{itemize}

\item {\bf Declaration of LLM usage}
    \item[] Question: Does the paper describe the usage of LLMs if it is an important, original, or non-standard component of the core methods in this research? Note that if the LLM is used only for writing, editing, or formatting purposes and does not impact the core methodology, scientific rigorousness, or originality of the research, declaration is not required.
    \item[] Answer: \answerNA{} %
    \item[] Justification: The paper does not involve LLMs as an important, original, or non-standard component of the core methods.
    \item[] Guidelines:
    \begin{itemize}
        \item The answer NA means that the core method development in this research does not involve LLMs as any important, original, or non-standard components.
        \item Please refer to our LLM policy (\url{https://neurips.cc/Conferences/2025/LLM}) for what should or should not be described.
    \end{itemize}

\end{enumerate}
}

\end{document}